\title[A Unified Algorithm for Stochastic Path Problems]{A Unified Algorithm
for Stochastic Path Problems} 
\definecolor{Green}{rgb}{0.13, 0.65, 0.3}
\newcommand{\KL}{\text{KL}}
\definecolor{plum}{rgb}{0.56, 0.27, 0.52}
\DeclareMathOperator*{\argmin}{argmin} 
\DeclareMathOperator*{\argmax}{argmax} 
\newcommand{\Reg}{\text{\rm Reg}}
\newcommand{\calA}{\mathcal{A}}
\newcommand{\calE}{\mathcal{E}}
\newcommand{\calS}{\mathcal{S}}
\newcommand{\calF}{\mathcal{F}}
\newcommand{\Bstar}{B_\star}
\newcommand{\algComment}[1]{\scalebox{0.9}{\color{blue}\texttt{/* #1 */}}}
\newcommand{\poly}{\text{poly}}
\newcommand{\E}{\mathbb{E}}
\newcommand{\bbP}{\mathbb{P}}
\newcommand{\norm}[1]{\left\|#1\right\|}
\newcommand{\one}{\mathbf{1}}
\newcommand{\ind}[1]{\mathbb{I}\left\{#1\right\}}
\newcommand{\inner}[1]{\langle#1\rangle}
\newcommand{\tiliota}{\tilde{\iota}}
\newcommand{\alg}{{\small\textsf{\textup{ALG}}}\xspace}
\newcommand{\term}{\textbf{term}}
\newcommand{\unif}{\textrm{unif}}
\newtheorem{assumption}{Assumption}
\newcommand{\init}{\text{init}}
\newcommand{\nonl}{\renewcommand{\nl}{\let\nl}}
\newcommand{\piHD}{\Pi^{\text{\upshape{HD}}}}
\newcommand{\piSD}{\Pi^{\text{\upshape{SD}}}}
\newcommand{\Vinit}{V_\star}
\newcommand{\Tmax}{T_{\max}}
\newcommand{\Rinit}{R}
\newcommand{\Rmax}{R_{\max}}
\newcommand{\Rstar}{R_\star}
\newcommand{\Tstar}{T_\star}
\newcommand{\tmp}{\text{tmp}}
\newcommand{\pref}[1]{\prettyref{#1}}
\newcommand{\savehyperref}[2]{\texorpdfstring{\hyperref[#1]{#2}}{#2}}
\begin{document}

\maketitle

\begin{abstract}%
We study reinforcement learning in stochastic path (SP) problems. The goal in these problems is to maximize the expected sum of rewards until the agent reaches a terminal state. We provide the first regret guarantees in this general problem by analyzing a simple optimistic algorithm. Our regret bound matches the best known results for the well-studied special case of stochastic shortest path (SSP) with all non-positive rewards. For SSP, we present an adaptation procedure for the case when the scale of rewards $B_\star$ is unknown. We show that there is no price for adaptation, and our regret bound matches that with a known $\Bstar$. We also provide a scale adaptation procedure for the special case of stochastic longest paths (SLP) where all rewards are non-negative. However, unlike in SSP, we show through a lower bound that there is an unavoidable price for adaptation. 
\end{abstract}


\section{Introduction}

Imagine a web application with recurring user visits (epochs).
During each visit, the app can choose from different content to present to the user (actions), which might lead to a desired interaction such as a purchase or click on an ad (reward).
The user's behaviour depends on their internal state, which is influenced by the content provided to them.
Inevitably, at some point the user will abandon the session.

It is natural to model this as an episodic reinforcement-learning problem. 
However, the length of each episode is random and depends on the agent's actions. 
To deal with the random episode length, and hence potentially unbounded cumulative reward in a single episode, one could either consider a fixed horizon problem by clipping the length of each episode, or consider discounted rewards.
Both approaches introduce biases to the actual objective of the agent and we consider a third option: the \emph{stochastic longest path} (SLP) setting, which is analogous to the \emph{stochastic shortest path} (SSP) problem \citep[e.g.,][]{rosenberg2020near, cohen2021minimax, tarbouriech2021stochastic, chen2021implicit} but with positive rewards instead of costs.

In more generality, we can assume a setting in which there are both negative rewards (cost), as well as positive rewards.
For example, users without subscription using the free part of an application might induce an overall negative reward due to the cost of infrastructure.
However, the hope is that the user is convinced by the
free service to upgrade to a subscription, which provides revenue.
We call this setting the \emph{stochastic path} (SP) problem.
We make the following contributions:
\begin{enumerate}
    \item We formalize the general SP problem and provide a simple unified algorithm for it with SSP and SLP as special cases.
    \item We present the first regret upper-bound for the SP and SLP problem and show through lower-bounds that they are minimax-optimal up to log-factors and lower-order terms. Technically, our analysis gives the first near-optimal near-horizon-free regret bound for episodic MDPs when the reward could be positive or negative. In comparison, previous analysis in \cite{zhang2021reinforcement, tarbouriech2021stochastic, chen2021implicit} can only get near-horizon-free bounds for all-non-negative or all-non-positive reward (see \pref{sec: general SP} for more discussions). 
    \item For SSP, when the scale of the sum of rewards $B_\star$ is unknown, we derive an improved procedure to adapt to $\Bstar$. Unlike prior work \citep{tarbouriech2021stochastic, chen2021implicit}, our adaptation procedure allows us to recover the regret bound achieved with a known $B_\star$. 
    \item For SLP, we also derive an algorithm that adapts to unknown $B_\star$. This adaptation is qualitatively different than in the SSP case. In fact, we show through a lower bound that adaptivity to unkonwn $B_\star$ comes at an unavoidable price in SLP.
\end{enumerate}
The contributions 3 and 4 above jointly formalize a distinction between SSP and SLP when the scale of cumulative rewards is unknown. An overview of the main regret bounds derived in this work and a comparison to existing results is available in \pref{tab:summary}.

\renewcommand{\arraystretch}{1.3}
\begin{table}[t]
\small
    \centering
    \caption{Overview of regret bounds for stochastic path problems. See \pref{sec: prelim} for definitions. }\label{tab:summary}
    \begin{tabular}{|c|c|c|c|}
        \hline
        Setting & Scale $B_\star$ &  $\Reg_K$ in $\tilde{O}(\cdot)$ &  \\
        \hline
        \multirow{2}{*}{SP} & \multirow{2}{*}{known} & $R\sqrt{SAK} + \Rmax SA + \Bstar S^2A$ & \pref{thm: main for mixed case} \\
        \cline{3-4}
        & & \makecell{$R\sqrt{SAK}$ \  \textbf{(lower bound)}} & \pref{thm: lower bound for general case}, \pref{thm: impossibility of R*}
        \\
        \hline
        \multirow{4}{*}{SLP} & known & $\sqrt{V_\star B_\star SAK} + B_\star S^2 A$ &  \pref{thm: main thm for reward} \\ 
        \cline{2-4}
        & \multirow{3}{*}{unknown}
        & \makecell{$B_\star S \sqrt{AK} $ or $\sqrt{ V_\star B_\star SAK} + \frac{B_\star^2}{V_\star} S^3 A$}  &   \pref{thm: main for parameter free reward setting}\\
        \cline{3-4}
        & & \makecell{$B_\star  \sqrt{SAK} $ or $\sqrt{ V_\star B_\star SAK} + \frac{B_\star^2}{V_\star} S A$ \\ \textbf{(lower bound)}}  &  \pref{corr: reward setting lower bound for agnostic alg}\\
        \hline
        \multirow{3}{*}{SSP} & known & $\sqrt{V_\star B_\star SAK} + B_\star S^2 A$ & \citet{tarbouriech2021stochastic,chen2021implicit} \\
        \cline{2-4} 
         & 
        \multirow{2}{*}{unknown} & $\sqrt{V_\star B_\star SAK} + B_\star^3 S^3 A$ & \citet{tarbouriech2021stochastic,chen2021implicit}  \\
        \cline{3-4} 
         &  &  
         $\sqrt{V_\star B_\star SAK} + B_\star S^2 A$  &  \pref{thm:ssp_main_result}
         \\
        \hline
    \end{tabular}
\end{table}

\paragraph{Related work}
The SP problem and its special cases SSP and SLP are episodic reinforcement learning settings. When the horizon, the length of each episode is fixed and known, these problems have been extensively studied \citep{dann2015sample,azar2017minimax,jin2018q,efroni2020reinforcement, zanette2019tighter, zhang2020almost}. Among these work on finite-horizon tabular RL, the recent line of work on \emph{horizon-free} algorithms \citep{wang2020reinforcement,zhang2020almost,zhang2022horizon} is of particular interest. These works assume that rewards are non-negative and their cumulative sum are bounded by $1$ and aim for regret that only incurs a logarithmic dependency on the horizon. Although many techniques developed there are useful for our setting as well, the SLP and general SP problem is more difficult since the reward sum is not bounded by $1$, and there are potentially negative rewards. 

The SSP problem has seen a number of publications recently \citep{rosenberg2020near, cohen2021minimax, tarbouriech2020no, tarbouriech2021sample, tarbouriech2021stochastic, vial2022regret,chen2021implicit, chen2021minimax,  chen2022improved, chen2022policy, chen2021finding,  chen2022near,  jafarnia2021online, min2022learning}, for which we refer to \cite{tarbouriech2021stochastic, chen2021implicit} for a detailed comparison.

\section{Preliminaries}\label{sec: prelim}
We consider a stochastic path (SP) problem with a finite state space $\calS$, a finite action set $\calA$, an initial state $s_\init\in\calS$, a terminal state $g$ (for notational simplicity, we let $g\notin\calS$), a transition kernel $P: \calS\times \calA\rightarrow \Delta_{\calS\cup \{g\}}$, and a reward function $r: \calS\times \calA\rightarrow [-1,1]$. We define $S=|\calS|$ and $A=|\calA|$. In an \emph{episode}, the player starts from the initial state $s_1= s_\init$\footnote{This assumption is for simplicity and is without loss of generality -- if the initial state is drawn from a fixed distribution $\rho\in\Delta_\calS$, we can create a virtual initial state $s_\init$ on which every action leads to zero reward and next state distribution~$\rho$. \label{fn: conversion}}. At the $i$-th step in an episode, the player sees the current state $s_i\in\calS$, takes an action $a_i\in\calA$, which leads to a reward value $r(s_i,a_i)$ and generates the next state $s_{i+1}$ according to $s_{i+1}\sim P_{s_i,a_i}(\cdot)$. The episode terminates right after the player reaches state $g$ (no action is taken on $g$). We assume that the $r$ is known to the learner, while $P$ is not. We call the problem \emph{stochastic longest path} (SLP) if $r(s,a)\geq 0$ for all $s,a$, and call it \emph{stochastic shortest path} (SSP) if $r(s,a)\leq 0$ for all $s,a$.

A history-dependent deterministic policy $\pi=(\pi_1, \pi_2, \ldots)$ is a mapping from state-action histories to actions, i.e., $\pi_i: (\calS\times \calA)^{i-1}\times \calS\rightarrow \calA$; we use $\piHD$ to denote the set of all history-dependent deterministic policies. A stationary deterministic policy $\pi$ is a mapping from states to actions, i.e., $\pi: \calS\rightarrow \calA$; we use $\piSD$ to denote the set of all stationary deterministic policies. The state value function of a policy $\pi\in\piHD$ on state $s\in\calS$ are defined as 
\begin{align*}
    V^\pi(s) &\triangleq \E^{\pi}\left[\sum_{i=1}^\tau r(s_i, a_i)~\Bigg|~ s_1=s\right],
\end{align*}
where $\tau=\min \{i:~s_{i+1}=g\}$, i.e., the timestep right before reaching the terminal state $g$ (or $\infty$ if $g$ is never reached), and $\E^{\pi}$ denotes expectation under policy $\pi$. Naturally, $V^{\pi}(g)\triangleq0$. 

A policy $\pi$ is called \emph{proper} if $g$ is reached with probability~$1$ under policy $\pi$ starting from any state. 
In this paper, we make the following assumption: 
\begin{assumption}\label{assum: proper}
    All policies in $\piHD$ are proper.  
\end{assumption}
\pref{assum: proper} is stronger than those in previous works on SSP \citep{rosenberg2020near, cohen2021minimax, tarbouriech2021stochastic, chen2021implicit}, which only require the \emph{existence} of a proper policy. We note that the algorithmic trick they developed (adding a small amount of cost to every step) can also help us weaken \pref{assum: proper}. 
More details on this are available in \pref{app:nonproper}. 

By Theorem~7.1.9 of \cite{puterman2014markov}, \pref{assum: proper} implies that there is a stationary and deterministic optimal policy $\pi^\star\in\piSD$ such that $V^{\pi^\star}(s)\geq V^{\pi}(s)$ for any $\pi\in\piHD$ and any $s$. We let $V^\star(\cdot)\triangleq V^{\pi^\star}(\cdot)$ and define 
\begin{align*}
    \Vinit \triangleq \left| V^\star(s_\init)\right|,  \quad
       \Bstar \triangleq \max_s \left|V^\star(s)\right|. 
\end{align*}
To establish our result, we also need the following definitions: 
\begin{definition}
Define
   \begin{align*}
       \Rinit &\triangleq \sup_{\pi\in\piHD}\sqrt{ \E^\pi\left[\left(\sum_{i=1}^{\tau} r(s_i,a_i)\right)^2~\Bigg|~s_1=s_\init\right]}, \\
       \Rmax &\triangleq  \max_s\sup_{\pi\in\piHD}\sqrt{ \E^{\pi}\left[\left(\sum_{i=1}^{\tau} r(s_i,a_i)\right)^2~\Bigg|~s_1=s\right]}, \\ 
       \Tmax &\triangleq \max_s\sup_{\pi\in\piHD} \E^\pi\big[\tau~|~s_1=s\big],  
    \end{align*}
    where $\tau=\min \{i:~s_{i+1}=g\}$, i.e., the timestep right before reaching the terminal state $g$. 
\end{definition} 
In words, $\Tmax$ is the maximum (over all policies and all states) expected time to reach the terminal state; $\Rinit$ and $\Rmax$ are two quantities that represent the \emph{range} of the total reward in an episode. Notice that in the definition of $R$, the starting state is fixed as $s_{\init}$, while in the definition of $\Rmax$, a maximum is taken over all possible starting states. 

Under \pref{assum: proper}, $\Vinit$, $\Bstar$, $R$, $\Rmax$, and $\Tmax$ are all bounded. For simplicity, we assume that they are all $\geq 1$. 

\subsection{Learning Protocol}
The learning procedure considered in this paper is the same as previous works on SSP. We let the learner interact with the SP for $K$ episodes, each started from $s_\init$. We define the regret as the difference between $KV^\star(s_\init)$ (the expected total reward obtained by the optimal policy) and the total reward of the learner. We keep a time index $t$ to track the number of steps executed by the learner, and let $s_t$ denote the state the learner sees at time $t$. Episode $k$ starts at time $t_k$, and thus $s_{t_k}=s_\init$. At time $t$, the learner takes an action $a_t$, and transitions to $s_t'\sim P_{s_t,a_t}(\cdot)$. If $s_t'\neq g$, we let $s_{t+1}=s_t'$; otherwise, we let $t_{k+1} = t+1$ to be the first step of episode $k+1$. The reader can refer to \pref{alg: general case} to see how the time indices are updated. The regret can be written as 
\begin{align*}
    \Reg_K = \sum_{k=1}^K \left(V^\star(s_\init) - \sum_{t=t_k}^{e_k}r(s_t, a_t)\right),    
\end{align*}
with $e_k = t_{k+1} - 1$. 
We let $T$ to be the total number of steps during $K$ episodes. That is, $T= e_{K}$. 

\subsection{Notation}
For $x>0$, define $\ln_+(x)\triangleq \ln(1+x)$. We write $x=O(y)$ or $x\leq O(y)$ to mean that $x\leq cy$ for some universal constant $c$, and write $x=\tilde{O}(y)$ or $x\leq \tilde{O}(y)$ if $x\leq cy$ for some $c$ that only contains logarithmic factors. $\E_t[\cdot]$ denotes expectation conditioned on history before time $t$. $[n]$ denotes the set $\{1, 2, ,\ldots, n\}$. We define $\tiliota_{T,B,\delta} \triangleq (\ln(SA/\delta) + \ln\ln(BT))\times \ln T$. $\mathbb{V}(P,V)$ where $P\in\Delta_S$ and $V\in\mathbb{R}^S$ denotes the variance of $V$ under $P$, i.e., $\mathbb{V}(P,V)\triangleq \sum_{i=1}^S P(i)V(i)^2 - (\sum_{i=1}^S P(i)V(i))^2$.

\section{An Algorithm for General Stochastic Path (SP)} \label{sec: general SP}

Our algorithm for general SP is \pref{alg: general case}, which is simplified from the SVI-SSP algorithm by \cite{chen2021implicit}. The inputs are a parameter $B$ that is supposed to an upper bound of $\Bstar$, and a confidence parameter $\delta$. The algorithm maintains an optimistic estimator $Q(s,a)$ of $Q^\star(s,a):=r(s,a) + \E_{s'\sim P_{s,a}}[V^\star(s')]$ (i.e., with high probability, $Q(s,a)\geq Q^\star(s,a)$ always holds). In every step $t$, the learner chooses action $a_t=\argmax_a Q(s_t,a)$ based on the ``optimism in the face uncertainty'' principle (\pref{line: greedy choose a_t}), and updates the entry $Q(s_t,a_t)$ after receiving the reward and the next state, with an additional exploration bonus $b_t$ that keeps the optimism of $Q(s_t,a_t)$ (\pref{line: calculate bar P}--\pref{line: update Q_t}). Although this algorithm is similar to the one in \cite{chen2021implicit}, the existing analysis only applies to SSP and SLP, and it is unclear how it handles general SP. Our main contribution in this section is to provide a regret guarantee for this algorithm in general SP. 

The regret guarantee of  \pref{alg: general case} is given by the following theorem. 

\setcounter{AlgoLine}{0}
\begin{algorithm}[t]
    \caption{VI-SP}\label{alg: general case}
\nl    \textbf{input}: $B\geq 1$, $0<\delta<1$, sufficiently large universal constants $c_1, c_2$ that satisfy $2c_1^2\leq c_2$. \\
\nl    \textbf{Initialize}: $t\leftarrow 0$, $s_1\leftarrow s_\init$. \\
\nl    For all $(s,a,s')$ where $s\neq g$, set
    \begin{align*}
        n(s,a,s')=n(s,a)\leftarrow 0, \quad Q(s,a)\leftarrow  B, \quad V(s)\leftarrow B. 
    \end{align*} 
\nl    Set $V(g)\leftarrow 0$. \\
\nl    \For{$k=1,\ldots, K$}{
\nl        \While{\text{true}}{
\nl        $t \leftarrow t+1$ \\
\nl        \algComment{$Q_t(s,a), V_t(s)$ are defined as the $Q(s,a), V(s)$ at this point.} \\[0.1cm]
\nl        Take action $a_t=\argmax_a Q(s_t,a)$, receive reward $r(s_t,a_t)$, and transit to $s_{t}'$. \label{line: greedy choose a_t}\\
\nl        Update counters: $n_t\triangleq n(s_t,a_t)\leftarrow n(s_t,a_t)+1$, $n(s_t,a_t,s_t')\leftarrow n(s_t,a_t,s_t')+1$. \\
\nl        Define $\bar{P}_{t}(s')\triangleq   \frac{n(s_t,a_t,s')}{n_t}\ \forall s'$. \label{line: calculate bar P}\\
\nl        Define $b_t\triangleq  \max\Big\{c_1\sqrt{\frac{\mathbb{V}(\bar{P}_t, V)\iota_t}{n_t}}, \frac{c_2B\iota_t}{n_t}\Big\}$, where $\iota_t = \ln(SA/\delta) + \ln\ln(Bn_t)$.  \\
\nl        $Q(s_t,a_t)\leftarrow 
        \min\left\{ r(s_t,a_t) + \bar{P}_tV + b_t, Q(s_t,a_t) \right\}$ \label{line: update Q_t}\\
\nl        $V(s_t)\leftarrow \max_a Q(s_t,a)$. 
        
\nl        \lIf{$s_t'\neq g$}{ then $s_{t+1}\leftarrow s_t'$} 
\nl        \lElse{ $s_{t+1}\leftarrow s_\init$ and \textbf{break}}
        }
        
    }
\end{algorithm}

\begin{theorem}\label{thm: main for mixed case}
    If \pref{assum: proper} holds, then \pref{alg: general case} with $B\geq \Bstar$ ensures that with probability at least $1-O(\delta)$, for all $K\geq 1$, with $T$ being the total number of steps in $K$ episodes, 
    \begin{align*}
        \Reg_K = O\left(R\sqrt{SAK\tiliota_{T,B,\delta}} + \Rmax SA\ln\left(\frac{\Rmax K}{R\delta}\right)\tiliota_{T,B,\delta} + BS^2A\tiliota_{T,B,\delta}\right), 
    \end{align*}
    where $\tiliota_{T,B,\delta} \triangleq (\ln(SA/\delta) + \ln\ln(BT))\times \ln T$.\footnote{Technically, the total number of steps $T$ is a random quantity but can be replaced by $KT_{\max}$ with high probability if desired.}
\end{theorem}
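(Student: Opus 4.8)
The plan is to run the standard optimistic value-iteration argument but with the variance-aware (Bernstein-type) bonus $b_t$ and the law of total variance, so that the dominant term scales with $R$ rather than with $\Bstar\sqrt{\text{horizon}}$; the genuinely new difficulty is that $V^\star$ and the estimates $V_t$ may take both signs, so the concentration, the variance recursion, and the value range must all be phrased through $\Bstar=\max_s|V^\star(s)|$ and $R,\Rmax$ rather than through a one-sided $\ell_\infty$ bound.

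First I would fix a single high-probability event on which all empirical transitions concentrate. For every $(s,a)$ and every value of the counter $n$ I would apply an (empirical-)Bernstein inequality, uniformly via a peeling/union bound over the counts and episodes (this is where the $\ln(\Rmax K/(R\delta))$ factor enters), to control $|(\bar P_t-P_{s_t,a_t})V^\star|$ by $\tilde O(\sqrt{\mathbb{V}(P_{s_t,a_t},V^\star)\iota_t/n_t}+B\iota_t/n_t)$, and to relate the empirical variance $\mathbb{V}(\bar P_t,V)$ appearing in $b_t$ to the true variance $\mathbb{V}(P_{s_t,a_t},V^\star)$. On this event I would prove by induction over $t$ that $Q_t(s,a)\ge Q^\star(s,a)$ and $V_t(s)\ge V^\star(s)$: the candidate $r(s,a)+\bar P_t V_t+b_t$ dominates $Q^\star(s,a)=r(s,a)+P_{s,a}V^\star$ because $V_t\ge V^\star$ (induction) while $b_t$ is built to exceed $(P_{s,a}-\bar P_t)V^\star$, and the $\min$ preserves the bound since the initialization $B\ge\Bstar\ge V^\star$. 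The only real departure from the non-negative-reward case is that both tails of the deviation and the range parameter must be stated with $\Bstar$ because $V^\star$ can be negative.

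With optimism in hand, $\Reg_K\le\sum_k\big(V_{t_k}(s_\init)-\sum_{t=t_k}^{e_k}r(s_t,a_t)\big)$. Because $Q$ is updated by a $\min$ and $V$ is monotonically non-increasing in $t$, the quantity $V_t(s_t)=Q_t(s_t,a_t)$ equals the smallest candidate generated at some earlier visit $u\le t$ to $(s_t,a_t)$, so $V_t(s_t)\le r(s_t,a_t)+\bar P_u V_u+b_u$ with $V_u\ge V_t$. I would therefore telescope each episode (using $V(g)=0$ to close it) into three pieces: a sum of bonuses $\sum_t b_t$; a martingale-difference sum $\sum_t (P_{s_t,a_t}V_{t+1}-V_{t+1}(s_{t+1}))$ obtained by replacing the realized next state by its conditional expectation, made predictable through the monotonicity of $V$ and bounded by Freedman's inequality; and a lower-order \emph{staleness} correction for using $V_u$ instead of $V_t$, absorbed again via the monotone decrease of $V$.

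The heart of the argument, and the step I expect to be the main obstacle, is the horizon-free bookkeeping. The leading contribution is $\sum_t b_t$, and by Cauchy--Schwarz $\sum_t\sqrt{\mathbb{V}(\bar P_t,V)\iota_t/n_t}\le\sqrt{\sum_t\mathbb{V}(\bar P_t,V)}\cdot\sqrt{\sum_t\iota_t/n_t}$, where $\sum_t\iota_t/n_t=\tilde O(SA)$ by the counting bound $\sum_t 1/n_t=O(SA\ln T)$. It remains to show $\sum_t\mathbb{V}(\bar P_t,V)=\tilde O(R^2K)$: along each episode the per-step conditional variances telescope, via the law of total variance, into the second moment of the total return, which is $R^2$ from $s_\init$ (and $\Rmax^2$ for sub-trajectories started elsewhere) precisely because $R$ and $\Rmax$ are suprema over all of $\piHD$ and hence cover the algorithm's induced history-dependent policy. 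This yields $\sum_t b_t=\tilde O(R\sqrt{SAK}+\Rmax SA)$, the same total-variance bound controls the Freedman term, and the residual bonus piece $\sum_t c_2B\iota_t/n_t$ together with the staleness and recursion errors produces the $BS^2A$ lower-order term. The obstacle is exactly carrying out this variance recursion in the mixed-sign regime: one cannot bound the return by a one-sided horizon argument, so the conversion from $\mathbb{V}(\bar P_t,V_t)$ to the true variance of $V^\star$ must not leak a factor of $B$ into the leading term, and the self-bounding solution of the resulting recursive inequality (in which $\sum_t\mathbb{V}$ appears on both sides through the bonus) must be solved so that only the stated $\Rmax SA$ and $BS^2A$ terms survive as lower order.
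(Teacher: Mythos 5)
Your scaffolding---optimism with Bernstein-type bonuses, an advantage-based decomposition, Cauchy--Schwarz over the bonus sum, and a self-bounding recursion---matches the paper's proof in outline, but the step you yourself call the heart of the argument fails as stated. You claim $\sum_t \mathbb{V}(\bar{P}_t,V)=\tilde{O}(R^2K)$ because ``the per-step conditional variances telescope, via the law of total variance, into the second moment of the total return.'' That telescoping is valid only when the function whose conditional variances you sum is the value function of the policy generating the trajectory. Here the trajectory comes from the algorithm's (suboptimal) policy while the function is $V^\star$, so the martingale identity leaves the advantages behind: within episode $k$, $\sum_t (V^\star(s_t')-P_tV^\star)=\sum_t\bigl(V^\star(s_t)-Q^\star(s_t,a_t)\bigr)+\sum_t r(s_t,a_t)-V^\star(s_{t_k})$, and squaring produces an advantage contribution that cannot be dropped. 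Your correct observation that $R,\Rmax$ are suprema over $\piHD$ and hence cover the algorithm's induced policy controls only the return part $\E[(\sum_t r)^2]\le R^2$, not the advantage part. The correct statement is the paper's \pref{lem: getting a high prob bound}: $\sum_t\mathbb{V}(P_t,V^\star)\le\tilde{O}\bigl(\Rmax\sum_t(V^\star(s_t)-Q^\star(s_t,a_t))+R^2K+\Rmax^2\bigr)$, and it is this advantage-coupled inequality, solved jointly with \pref{lem: V*-Q*}, that generates the $\Rmax SA\ln(\Rmax K/(R\delta))$ middle term of the theorem (via AM--GM on $\sqrt{SA\,\Rmax\sum_t(V^\star-Q^\star)}$). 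The recursion you name---the bonus feeding back into $\sum_t\mathbb{V}$---is the separate, already-standard $\mathbb{V}(P_t,V_t-V^\star)$ correction handled by \pref{lem: sum of variance term 1}; it is not where the mixed-sign difficulty lives. Indeed, if your $\tilde{O}(R^2K)$ claim were provable, the theorem's middle term would be superfluous, and it is genuinely needed when $\Rmax\gg R$ (low return-variance from $s_\init$ but high-variance states reachable during exploration).

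Two secondary gaps. First, even the corrected variance inequality holds a priori only in conditional expectation per episode; upgrading it to a high-probability statement requires per-episode moment control---$\E_{t_k}[Z_k^2]\le\tilde{O}(\Rmax\ln_+(\Rmax/R)\,\E_{t_k}[Z_k]+R^2)$ and its analogues for $Y_k$, where $Z_k,Y_k$ are the episode-$k$ advantage and variance sums---proved in the paper via the stopping-time doubling argument of \pref{lem: sum of positive} together with the fact that the expected remaining advantage sum from any interior time is at most $2\Rmax$, followed by episode-level Freedman applications (\pref{lem: several useful ones}, \pref{lem: getting a high prob bound}). Your single concentration event over transition estimates does not cover this, and this boosting, not a peeling over counts, is where the $\ln(\Rmax K/(R\delta))$ factor actually arises. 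Second, your optimism induction is incomplete: since $b_t$ is built from the empirical variance of $V_t$, the inequality $r(s_t,a_t)+\bar{P}_tV_t+b_t(V_t)\ge r(s_t,a_t)+\bar{P}_tV^\star+b_t(V^\star)$ does not follow from $V_t\ge V^\star$ alone; it requires monotonicity of $V\mapsto\bar{P}_tV+b_t(V)$, which is exactly the paper's \pref{lem: monotone} and the reason for the condition $2c_1^2\le c_2$.
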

The proof of \pref{thm: main for mixed case} can be found in \pref{app: general upper}.  \pref{thm: main for mixed case} generalizes previous works on near-optimal near-horizon-free regret bounds for RL \citep{zhang2021reinforcement, tarbouriech2021stochastic, chen2021implicit}. Specifically, with a closer look into their analysis, one can find that their analysis 
leads to a regret bound that depends on the magnitude of $\sum_{i\in\text{episode}} |r(s_i,a_i)|$, which can be much larger than $|\sum_{i\in\text{episode}} r(s_i,a_i)|$ if the rewards have mixed signs. To address this issue, we develop new analysis techniques to get a near-horizon-free regret bound, which only scales with $|\sum_{i\in\text{episode}} r(s_i,a_i)|$. 
Other than this, the rest of the proofs are similar to those in \cite{chen2021implicit} with simplifications. Unlike prior work, our analysis does not involve the intricate ``high-order expansion'' as seen in \cite{zhang2021reinforcement, tarbouriech2021stochastic, chen2021implicit}, the possibility of which is hinted by \cite{zhang2022horizon}. 

\begin{proof}\textbf{sketch for \pref{thm: main for mixed case}~~}
   We first connect the regret with the sum of \emph{advantages}, $\sum_{t=1}^T (V^\star(s_t) - Q^\star(s_t,a_t))$. This is standard based on the performance difference lemma \citep{kakade2002approximately}. 
   
   We use the fact that $B\geq \Bstar$ and the bonus construction to show that the value estimator $Q(s,a)$ always upper bounds $Q^\star(s,a)$ with high probability (\pref{lem: MVP}). This relies on the monotonic value propagation idea developed by \cite{zhang2021reinforcement}. Then following the analysis of \cite{zhang2021reinforcement} and \cite{chen2021implicit}, we can show the following high probability bound (\pref{lem: V*-Q*}): 
   \begin{align}
       \sum_{t=1}^T (V^\star(s_t) - Q^\star(s_t,a_t)) \leq \tilde{O}\left(\sqrt{SA\sum_{t=1}^T \mathbb{V}(P_{s_t,a_t}, V^\star) } + BS^2A\right). \label{eq: V*-Q* bound sketch} 
   \end{align}
   The way we bound $\sum_{t=1}^T \mathbb{V}(P_{s_t,a_t}, V^\star)$ is the key to handle the case where the rewards have mixed signs. Specifically, we show the following (\pref{lem: getting a high prob bound}): 
   \begin{align}
        \sum_{t=1}^T \mathbb{V}(P_{s_t,a_t}, V^\star) &\leq \tilde{O}\left(\Rmax \sum_{t=1}^T (V^\star(s_t) - Q^\star(s_t,a_t))  + R^2 K + \Rmax^2\right). \label{eq: sum var V* bound sketch}
    \end{align}
    Combining \pref{eq: V*-Q* bound sketch} and \pref{eq: sum var V* bound sketch} and solving for $\sum_{t=1}^T (V^\star(s_t) - Q^\star(s_t,a_t))$, we get an upper bound for it, which in turn gives a high-probability regret bound. 
\end{proof}

\subsection{Lower bound}


In this subsection, we show that the upper bound established in \pref{thm: main for mixed case} is nearly tight. The proofs for this subsection can be found in \pref{app: general lower}. 
\begin{theorem}\label{thm: lower bound for general case}
   For any $u\geq 2$, and $K\geq \Omega(SA)$, we can construct a set of SP instances such that $\Rinit\leq u$ for all instances, and there exists a distribution over these instances such that the expected regret of any algorithm is at least $\Omega(u\sqrt{SAK})$. 
\end{theorem}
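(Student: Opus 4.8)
The plan is to prove the bound by reducing the SP problem to a collection of $S$ statistically independent multi-armed bandit (MAB) instances and then applying a standard information-theoretic (KL-divergence) argument to each of them. The essential difficulty, relative to a plain $\sqrt{SAK}$ bandit lower bound, is to inflate the regret by a factor of $u$ while keeping $\Rinit\le u$: since per-step rewards lie in $[-1,1]$, the total return can only reach magnitude $u$ if episodes last $\Omega(u^2)$ steps, and the $L^2$-constraint $\Rinit\le u$ forces the per-step reward \emph{means} to be as small as $O(1/u)$ (otherwise a positively biased policy would make $\E[(\sum r)^2]$ of order $u^4$). This tension---long episodes carrying only a tiny per-step bias---is exactly what produces the $u$-dependence, so the construction must amplify \emph{variance} rather than \emph{mean}.

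For the construction I would use $S$ ``bandit states'' together with a routing state $s_\init$ that, on any action, transitions with zero reward to one of the $S$ bandit states uniformly at random. At a bandit state $j$, every action $a$ keeps the learner in $j$ with probability $1-p$ and sends it to $g$ with probability $p$, \emph{independently of the action}, while yielding a reward in $\{-1,+1\}$ with an action-dependent mean $\mu_{j,a}$. I set $p=\Theta(1/u^2)$, so each visit lasts a Geometric$(p)$ number of steps with expected length $\Theta(u^2)$, and in each state a single good arm $a^\star_j$ has mean $\Delta$ above the rest, with the prior choosing $a^\star_j$ independently and uniformly in $[A]$. Because termination and routing are independent of the actions, the number of arm pulls $n_j$ at each bandit state is a random quantity \emph{independent of the algorithm}, concentrating around $\Theta(u^2K/S)$, and \pref{assum: proper} holds since $p>0$.

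To verify $\Rinit\le u$ and fix $\Delta$, let $G$ denote the episodic return and $N$ the geometric episode length; Wald-type identities give $\E^\pi[G^2]=\Theta(v/p)+\Theta(\mu^2/p^2)$ with per-step variance $v\le 1$ and bias $\mu=\max_a|\mu_{j,a}|$, which for $p=\Theta(1/u^2)$ equals $\Theta(u^2)+\Theta(\mu^2u^4)$, so $\mu=O(1/u)$ yields $\Rinit\le u$ after tuning the constant in $p$. I then set $\Delta=\Theta\!\big(u^{-1}\sqrt{AS/K}\big)$, which matches the MAB detection threshold $\sqrt{A/n_j}$ for $n_j=\Theta(u^2K/S)$ pulls and, crucially, satisfies $\Delta=O(1/u)$ precisely because $K\ge\Omega(SA)$ forces $AS/K=O(1)$; this is where the hypothesis on $K$ enters.

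Finally, since each episode touches exactly one bandit state and the $S$ instances are independent, the expected regret decomposes as $\sum_j \Delta\cdot\E[\#\{\text{suboptimal pulls at }j\}]$, and by the standard single-bandit KL lower bound (Bretagnolle--Huber / Auer et al.), with $n_j=\Theta(u^2K/S)$ pulls, $A$ arms, and gap $\Delta$ at the detection threshold, each term is $\Omega(\sqrt{A n_j})=\Omega(u\sqrt{AK/S})$; summing over the $S$ states gives $\Omega(u\sqrt{SAK})$. The main obstacle I anticipate is the simultaneous control of $\Rinit$ and the regret scaling---making the variance amplification via geometric self-loops rigorous while certifying $\Rinit\le u$ uniformly over the family---together with the bookkeeping needed to transfer the i.i.d.\ MAB lower bound through the random, algorithm-independent pull counts $n_j$ and to aggregate across the $S$ independent bandits.
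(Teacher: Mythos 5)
Your quantitative skeleton closely matches the paper's: the paper also amplifies variance via a termination probability $\epsilon=1/u^2$ (geometric episodes of expected length $u^2$), keeps the per-step advantage at $O(1/u)$, sets the gap $\Delta=\Theta(\sqrt{A\epsilon/K})$ at the KL detection threshold with an Auer-et-al.\ style change-of-measure argument, verifies $\Delta^2\le\epsilon$ using $K\geq \Omega(SA)$, and aggregates over $S/2$ independent copies with a uniform initial distribution. However, there is a genuine gap in your construction: you encode the unknown signal in \emph{stochastic rewards} with action-dependent means $\mu_{j,a}$, whereas in this paper's SP model the reward function $r:\calS\times\calA\to[-1,1]$ is \emph{deterministic and known to the learner} — only the transition kernel $P$ is unknown. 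As written, your instances are not members of the problem class, so they do not establish \pref{thm: lower bound for general case}; any valid construction must place the hardness in $P$ alone.

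The repair is the standard simulation of a $\pm1$ stochastic reward through transitions, and carrying it out essentially lands you on the paper's gadget: a $+1$-reward state $x$ and a $-1$-reward state $y$, where every action mixes $(1-\epsilon)$-uniformly over $\{x,y\}$ and terminates with probability $\epsilon$, except that the single good action at $x$ shifts the next-state distribution toward $x$ by $\Delta$. The return is then a $\pm1$ random walk of geometric length, giving $R^2 = O(1/\epsilon + \Delta^2/\epsilon^2) = O(u^2)$, exactly your Wald computation. Be aware that this repair breaks two conveniences you rely on: once the signal lives in the transitions, the dynamics are no longer action-independent, so the visit counts are \emph{not} algorithm-independent and your exact per-pull regret decomposition fails; the paper instead truncates the process at $K/\epsilon$ total steps, lower-bounds $\E[T_x^-]\geq K/(6\epsilon)$ (Claim~3), and computes the per-step advantage $V^\star(x)-Q^\star(x,a)=\frac{2\Delta}{2-\Delta}$ from the Bellman equations (Claims~1--2). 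Moreover, even in your version the change-of-measure step bounding $\E_a[n_{j,a}]-\E_{\unif}[n_{j,a}]$ by a total-variation term requires the pull counts to be bounded random variables, so a truncation of the geometric episode lengths (as in the paper's truncated process) is needed regardless — you flag this as bookkeeping, but it is a necessary ingredient, not an optional refinement.
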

Specially, in the lower bound construction of \pref{thm: lower bound for general case}, $\Vinit$ and $\Bstar$ are of order $O(1)$ for all $u\leq \sqrt{K/(SA)}$, showing that $\Vinit$ and $\Bstar$ are insufficient to characterize the regret bound for general SP problems. As we will see in the following sections, this contrasts with the special cases SLP and SSP, where except for logarithmic terms, the coefficients in the regret bound can be completely characterized by $\Vinit$ and $\Bstar$. 

The quantity $R$ in the regret upper and lower bounds (\pref{thm: main for mixed case} and \pref{thm: lower bound for general case}) is undesirable because its definition involves a \emph{supremum} over all policies, which might be very large. Is it possible to refine the upper bound so that it only depends on quantities that correspond to the \emph{optimal policy}? Specifically, we define 
\begin{align*}
       \Rstar &~\triangleq~ \max_s \sqrt{ \E^{\pi^\star}\left[\left(\sum_{i=1}^{\tau} r(s_i,a_i)\right)^2~\Bigg|~s_1=s\right]}, 
\end{align*} 
and ask: can the regret bound only depends on $\Rstar$? Notice that \pref{thm: lower bound for general case} is uninformative for this question because $R\approx \Rstar$ in its construction. The next theorem gives a negative answer \emph{when the learner is agnostic of the value of $\Rstar$}. 
\begin{theorem}\label{thm: impossibility of R*}
     Let $u\geq 2$ be arbitrarily chosen, and let $K\geq \Omega(SA)$. For any algorithm that obtains a expected regret bound of $\tilde{O}(u\sqrt{SAK})$ for all problem instances with $\Rstar=\Rmax\leq u$, there exists a problem instance with $\Rstar=O(1)$ and $\Rmax\leq u$ but the expected regret is at least  $\tilde{\Omega}(u\sqrt{SAK})$. 
\end{theorem}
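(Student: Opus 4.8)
The plan is to prove a \emph{price-of-adaptivity} lower bound by a change-of-measure argument that pits a single low-$\Rstar$ target instance against a family of high-$\Rstar$ witnesses on which the hypothesized regret bound is assumed to hold. I would first build the instances by modifying the minimax construction behind \pref{thm: lower bound for general case}. I keep its $\Theta(SA)$ \emph{hard directions} --- state-action pairs that each slightly bias a single shared terminal reward taking values in $\{+u,-u\}$, so that the total episodic reward always has range $u$ and hence $\Rmax\le u$ --- but I add a \emph{safe} action that terminates immediately with a deterministic $O(1)$ reward. In the target instance $I_0$ all hard directions carry the same bias and the safe action is optimal, so the optimal policy's return is deterministic and $\Rstar=O(1)$ while $\Rmax\le u$ and $\Vinit,\Bstar=O(1)$. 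For each hard direction $(s,a)$ I define a witness $I_{(s,a)}$ that raises its bias so that routing through $(s,a)$ becomes optimal by a gap $\epsilon$; there the optimal return is the $\pm u$ terminal reward, so $\Rstar=\Rmax=u$ and the hypothesized $\tilde{O}(u\sqrt{SAK})$ bound applies.

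Write $C=\tilde{O}(u\sqrt{SAK})$ for the assumed witness regret and $N_{(s,a)}$ for the number of times $(s,a)$ is played. The first step uses the assumption on $I_{(s,a)}$: since every alternative at state $s$ is $\epsilon$-suboptimal there, low regret on $I_{(s,a)}$ forces the algorithm to reach $s$ and select $a$ on all but a $O(C/\epsilon)$-fraction of episodes, i.e.\ it must concentrate on $(s,a)$ up to a $C/\epsilon$ deficit. The crucial calibration is to take $\epsilon=\Theta(C/K)=\Theta(u\sqrt{SA/K})$, \emph{proportional to the budget}, so that this deficit is a fixed fraction of the visit count regardless of the hidden constant in $C$. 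The second step is a change of measure: $I_0$ and $I_{(s,a)}$ differ only in the transition of $(s,a)$, whose per-visit $\KL$ is $\Theta(\epsilon^2/u^2)$ because the bias perturbs a full-scale $\pm u$ Bernoulli; the divergence decomposition together with Pinsker's inequality then transfers the concentration, yielding $\E_{I_0}[N_{(s,a)}]=\Omega(u^2/\epsilon^2)$.

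Summing over the $\Theta(SA)$ hard directions and using that each is $\epsilon$-suboptimal on $I_0$ gives $\Reg_{I_0}\ge \epsilon\sum_{(s,a)}\E_{I_0}[N_{(s,a)}]=\Omega(SA\cdot u^2/\epsilon)$, and substituting $\epsilon=\Theta(u\sqrt{SA/K})$ collapses this to $\Omega(u\sqrt{SAK})$ while keeping the total play count feasible, since $\sum_{(s,a)}\E_{I_0}[N_{(s,a)}]=\Theta(K)$. Because $\epsilon$ scales with $C$, the final constant degrades only polynomially in the hidden constant of the assumed bound rather than vanishing; this is precisely what lets the conclusion $\tilde{\Omega}(u\sqrt{SAK})$ survive for \emph{any} algorithm meeting the hypothesis, and it is why a naive uniform-prior averaging argument (which loses the per-direction strength through Cauchy--Schwarz) would only rule out small leading constants.

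The main obstacle I anticipate is the construction itself. I need the $\sqrt{SA}$ statistical difficulty of the minimax instance to be carried by $\Theta(SA)$ directions that are each played $\Omega(u^2/\epsilon^2)$ times and each perturb the terminal reward at the \emph{full} scale $u$, so that the per-visit $\KL$ is $\Theta(\epsilon^2/u^2)$ and not smaller, \emph{while} the total episodic reward range stays capped at $u$ so that $\Rmax\le u$. Splitting the variance across states naively (range $u/\sqrt S$ per state) shrinks each $\KL$ by a factor $S$ and loses a $\sqrt S$ in the final bound; hence the shared-terminal-reward structure, combined with safe exits that pin $\Rstar=O(1)$ without collapsing the chain of visits, is the delicate point and is what separates this unavoidable price from the free adaptation of \pref{thm:ssp_main_result} in SSP. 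The remaining bookkeeping --- controlling visit counts, verifying $\Rstar$, $\Rmax$, $\Vinit$, $\Bstar$ exactly, and absorbing the $\tilde{}$ log factors --- is routine once the construction and the calibrated $\epsilon$ are in place.
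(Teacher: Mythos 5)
Your proposal is correct in its essentials and follows the same strategy as the paper's proof of \pref{thm: impossibility of R*}: a single ``deceptive'' instance containing a safe immediate-exit action (the paper's action $A$ at state $x$, deterministic reward $1$, which pins $\Rstar=O(1)$ while leaving $\Rmax=\Theta(u)$), a family of witnesses with $\Rstar=\Rmax=\Theta(u)$ on which the hypothesized bound is invoked, a gap calibrated proportionally to the assumed budget constant (the paper sets $\Delta=48c_1\sqrt{\epsilon A/K}$, which is exactly your $\epsilon=\Theta(C/K)$ rescaled from episodes to steps, and is likewise what makes the conclusion survive any hidden constant), and a KL change of measure showing that under the uniform instance the algorithm is forced to spend $\Omega(K/\epsilon)$ exploring steps, each costing $\Delta$, for total regret $\tilde{\Omega}(u\sqrt{AK})$; general $S$ enters through the same copies/tree reduction you assume.

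The genuine differences are in the gadget, and two caveats are worth flagging. First, in this paper's model the reward function is deterministic, known, and valued in $[-1,1]$, so your ``shared terminal reward taking values in $\{+u,-u\}$'' is not literally an admissible instance; the paper instead generates the scale-$u$ return from a $\pm 1$-reward random walk between $x$ and $y$ with termination probability $1/u^2$, so that return fluctuations have size $u$. Your construction can be repaired by emulating the $\pm u$ payoff with a random branch into $\pm 1$ self-loop chains of expected length $u$, which preserves your per-visit KL of $\Theta(\epsilon^2/u^2)$ and $\Rmax=\Theta(u)$, so the skeleton survives. Second, with any such chain gadget your side-claim that $\Vinit,\Bstar=O(1)$ in $I_0$ is false: mid-chain states have $|V^\star|=\Theta(u)$, so $\Bstar=\Theta(u)$. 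This does not affect the theorem as stated (only $\Rstar$ and $\Rmax$ are constrained), but the paper's random-walk gadget is precisely what keeps $\Bstar=O(1)$, a property the surrounding discussion uses to argue that knowledge of $\Bstar$ conveys no information about $\Rstar$. Finally, your parenthetical that a uniform-prior averaging argument ``would only rule out small leading constants'' is not borne out: the paper's proof does average over the witness instances (with Cauchy--Schwarz/AM--GM) and still gets the full $\tilde{\Omega}(u\sqrt{SAK})$, because the calibration of $\Delta$ to the budget carries the argument there too; conversely, your per-episode branch design buys you a simpler KL bookkeeping, sidestepping the step-level accounting the paper needs (its Claim~1, $\E[T_x]\ge (K-2\E[T_{x,A}])/(24\epsilon)$, relating visits at $x$ to exit counts over episodes of expected length $1/\epsilon_{\mathrm{term}}$).
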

Given \pref{thm: impossibility of R*}, a left open question is whether $\tilde{O}(\Rstar\sqrt{SAK})$ is achievable when the learner has information about $\Rstar$. Note that our algorithm \pref{alg: general case} only requires knowledge of $\Bstar$, which, in general, does not provide information about $\Rstar$ (e.g., in the construction of \pref{thm: impossibility of R*}, $\Bstar=O(1)$ for all $u\leq \sqrt{K/(SA)}$). Therefore, an algorithm with such a refined guarantee would be quite different from our algorithm.

\section{Stochastic Longest Path (SLP)}
For the special case SLP where $r(\cdot,\cdot)\geq 0$, we first demonstrate that our general result in \pref{thm: main for mixed case} already gives a nearly tight bound. The following lemma connects the notion of $R, \Rmax$ in general SP to $\Vinit, \Bstar$ in SLP. 
\begin{lemma}\label{lem: connecting reward and general}
    If $r(s,a)\geq 0$ for all $s,a$, then $R = O(\sqrt{\Vinit \Bstar\ln_+(\Bstar/\Vinit)})$ and $\Rmax = O(\Bstar)$, where $\ln_+(x)\triangleq \ln(1+x)$.  
\end{lemma}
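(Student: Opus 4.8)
The plan is to control the episode's total reward in second moment by a reward-to-go telescoping argument that dispatches both claims at once. Fix any $\pi\in\piHD$ and any start state $s$, and write $G=\sum_{i=1}^{\tau} r(s_i,a_i)$ for the total reward and $G_t=\sum_{i=t}^{\tau} r(s_i,a_i)$ for the reward-to-go, so that $G=G_1$ and $G_{\tau+1}=0$. Since $G_t=r(s_t,a_t)+G_{t+1}$, we have $G_t^2-G_{t+1}^2=r(s_t,a_t)^2+2\,r(s_t,a_t)\,G_{t+1}$, and summing over $t$ telescopes to
\[
G^2=\sum_{t=1}^{\tau}\Big(r(s_t,a_t)^2+2\,r(s_t,a_t)\,G_{t+1}\Big).
\]
Because all rewards are non-negative, every summand is non-negative, so I may exchange expectation and summation freely (Tonelli), and finiteness of $\E^\pi[G^2]$ is guaranteed by \pref{assum: proper}.

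Next I would bound the two groups of terms. For the diagonal term, $r(s_t,a_t)\in[0,1]$ gives $r(s_t,a_t)^2\le r(s_t,a_t)$, hence $\E^\pi[\sum_t r(s_t,a_t)^2]\le\E^\pi[\sum_t r(s_t,a_t)]=V^\pi(s)$. For the cross term I condition on the history $h_{t+1}=(s_1,a_1,\dots,s_t,a_t,s_{t+1})$: the factor $r(s_t,a_t)$ and the event $\{t\le\tau\}$ (the episode has not terminated before step $t$) are $h_{t+1}$-measurable, while $\E^\pi[G_{t+1}\mid h_{t+1}]\le V^\star(s_{t+1})\le\Bstar$, because the continuation of $\pi$ after the prefix $h_{t+1}$ is one particular member of $\piHD$ and $V^\star$ is the pointwise supremum over $\piHD$. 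Thus $\E^\pi[\sum_t r(s_t,a_t)\,G_{t+1}]\le\Bstar\,\E^\pi[\sum_t r(s_t,a_t)]=\Bstar\,V^\pi(s)$, and combining the two bounds yields the master inequality
\[
\E^\pi\!\left[G^2\mid s_1=s\right]\le V^\pi(s)+2\Bstar V^\pi(s)\le 3\Bstar\,V^\pi(s),
\]
using $\Bstar\ge 1$.

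Both claims then follow by specializing $s$ and dominating $V^\pi$ by $V^\star$. Taking $s=s_\init$ with $V^\pi(s_\init)\le V^\star(s_\init)=\Vinit$ gives $\E^\pi[G^2\mid s_1=s_\init]\le 3\Bstar\Vinit$ uniformly in $\pi$, hence $\Rinit\le\sqrt{3\Bstar\Vinit}$; taking the maximum over $s$ with $V^\pi(s)\le V^\star(s)\le\Bstar$ gives $\E^\pi[G^2\mid s_1=s]\le 3\Bstar^2$, hence $\Rmax\le\sqrt3\,\Bstar=O(\Bstar)$. Since $\Bstar\ge\Vinit$ forces $\ln_+(\Bstar/\Vinit)\ge\ln 2$, the bound $\Rinit\le\sqrt{3\Bstar\Vinit}$ is in fact at least as strong as the claimed $O(\sqrt{\Vinit\Bstar\,\ln_+(\Bstar/\Vinit)})$, so this route even removes the logarithmic factor. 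The only delicate point, and the step I would write most carefully, is the cross-term bound: I must justify $\E^\pi[G_{t+1}\mid h_{t+1}]\le V^\star(s_{t+1})$ for \emph{history-dependent} $\pi$ (the tail of $\pi$ after a fixed prefix is a legitimate policy in $\piHD$, so optimality of $V^\star$ applies) and handle the random stopping time cleanly, checking that $\{t\le\tau\}$ is $h_{t+1}$-measurable so the bounded factor can be pulled out of the conditional expectation. Non-negativity of the rewards is what makes every step robust — it validates $r^2\le r$, the Tonelli interchange, and the domination $V^\pi\le V^\star\le\Bstar$.
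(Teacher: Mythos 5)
Your proof is correct, but it takes a genuinely different route from the paper's. The paper proves this lemma by invoking the generic stopping-time machinery of \pref{lem: sum of positive}: part (b), applied with $X_t=r(s_t,a_t)$, $b=1$, a bound $B_\pi\leq\Bstar$ on the conditional expected reward-to-go, and the choice $c=\min\{B_\pi,\Vinit\}$, yields $R^2\leq O(\Vinit\Bstar\ln_+(\Bstar/\Vinit))$, while part (c) yields $\Rmax^2\leq O(\Bstar^2)$. You instead carry out a direct second-moment computation via the pathwise telescoping identity $G^2=\sum_{t=1}^\tau\bigl(r(s_t,a_t)^2+2\,r(s_t,a_t)G_{t+1}\bigr)$, bounding the diagonal term by $V^\pi(s)$ (using $r\in[0,1]$) and the cross term by $\Bstar V^\pi(s)$ (using that the continuation of a history-dependent policy after a prefix is itself in $\piHD$, so $\E^\pi[G_{t+1}\mid h_{t+1}]\leq V^\star(s_{t+1})\leq\Bstar$, with $V^\star(g)=0$ covering the case $t=\tau$). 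This gives the clean inequality $\E^\pi[G^2\mid s_1=s]\leq(1+2\Bstar)V^\pi(s)$, hence $R\leq\sqrt{3\Vinit\Bstar}$ and $\Rmax\leq\sqrt{3}\,\Bstar$ --- strictly stronger than the stated lemma, since it removes the $\ln_+(\Bstar/\Vinit)$ factor (which in the paper's proof is an artifact of the doubling-epoch argument inside \pref{lem: sum of positive}, not an intrinsic feature of the quantity $R$). Your measure-theoretic care is also well placed: non-negativity justifies Tonelli and the tower-property step without integrability hypotheses, and properness (\pref{assum: proper}) gives $\tau<\infty$ a.s.\ so the telescoping is valid pathwise. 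What the paper's route buys in exchange is economy within the overall development: \pref{lem: sum of positive} is needed anyway (e.g., in \pref{lem: several useful ones}), so the authors get this lemma nearly for free, and the slack in the logarithmic factor is immaterial downstream since it is absorbed into $\tilde{O}(\cdot)$ in \pref{eq: reg SLP bad} and superseded by the direct analysis of \pref{thm: main thm for reward}.
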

\pref{lem: connecting reward and general} together with \pref{thm: main for mixed case} immediately implies the regret guarantee for SLP. Specifically, assuming that $B\geq \Bstar$, combining \pref{thm: main for mixed case} and \pref{lem: connecting reward and general} yields
\begin{align}
    \Reg_K = O\left(\sqrt{\Vinit \Bstar SAK\ln_+(\Bstar/\Vinit)\tiliota_{T,B,\delta}} + \Bstar SA \ln\left(\frac{\Bstar K}{\Vinit \delta}\right)\tiliota_{T,B,\delta} + BS^2A \tiliota_{T,B,\delta}\right). \label{eq: reg SLP bad} 
\end{align}
The logarithmic terms in \pref{eq: reg SLP bad} can be slightly improved if we follow a different approach to bound the sum of variance $\sum_{t}\mathbb{V}(P_{s_t,a_t}, V^\star)$. That is, using \pref{lem: cost sm of var} instead of using \pref{lem: getting a high prob bound}. Note that the proof of \pref{lem: cost sm of var} is similar to those of previous works \citep{zhang2021reinforcement, tarbouriech2021stochastic, chen2021implicit}, which leads to a regret bound that depends on the magnitude of $\sum_{i\in\text{episode}}|r(s_i,a_i)|$ instead of $|\sum_{i\in\text{episode}} r(s_i,a_i)|$. Therefore, while it does not work for general SP, we can use it for SLP. Comparing \pref{eq: reg SLP bad} and the bound in \pref{thm: main thm for reward}, we see that specializing our general result in \pref{thm: main for mixed case} to SLP only leads to looseness in logarithmic factors. 
\begin{theorem}\label{thm: main thm for reward}
    If \pref{assum: proper} holds and $r(\cdot,\cdot)\geq 0$, then \pref{alg: general case} with $B\geq \Bstar$ ensures that with probability at least $1-\delta$, for all $K\geq 1$, with $T$ being the total number of steps in $K$ episodes,
    \begin{align}
        \Reg_K = O\left(\sqrt{\Vinit \Bstar SAK\tiliota_{T,B,\delta}} + BS^2A \tiliota_{T,B,\delta}\right).   \label{eq: regret of SLP}
    \end{align}
\end{theorem}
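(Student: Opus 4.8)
The plan is to reuse the three-step skeleton from the proof sketch of \pref{thm: main for mixed case}, changing only how the variance summation $\sum_{t=1}^T \mathbb{V}(P_{s_t,a_t}, V^\star)$ is controlled. First I would reduce $\Reg_K$ to the sum of advantages $\sum_{t=1}^T (V^\star(s_t) - Q^\star(s_t,a_t))$ via the performance difference lemma, exactly as in the general case. By \pref{lem: MVP} the estimator stays optimistic, i.e. $Q(s,a)\geq Q^\star(s,a)$ for all $(s,a)$ with high probability, so this advantage sum is a legitimate upper bound on the martingale-corrected regret. I would then invoke \pref{lem: V*-Q*} verbatim to get
\begin{align*}
    \sum_{t=1}^T (V^\star(s_t) - Q^\star(s_t,a_t)) \leq \tilde{O}\left(\sqrt{SA\sum_{t=1}^T \mathbb{V}(P_{s_t,a_t}, V^\star)} + BS^2A\right),
\end{align*}
reducing the whole problem to bounding the variance sum.

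The key deviation from the general-SP argument is that here I would use \pref{lem: cost sm of var} rather than \pref{lem: getting a high prob bound}. The structural fact enabling this is that $r\geq 0$ forces $0\leq V^\star(s)\leq \Bstar$, so the law of total variance applies cleanly within each episode: the sum of one-step conditional variances of $V^\star$ along a trajectory equals, up to lower-order martingale terms, the variance of the total episodic return, and since that return lies in $[0,\Bstar]$ its variance is at most $\Bstar$ times its expectation. Summing over episodes and using $\sum_{t=1}^T r(s_t,a_t) = K\Vinit - \Reg_K \leq K\Vinit$ yields
\begin{align*}
    \sum_{t=1}^T \mathbb{V}(P_{s_t,a_t}, V^\star) \leq \tilde{O}\left(\Bstar \sum_{t=1}^T r(s_t,a_t) + \Bstar^2 SA\right) \leq \tilde{O}\left(\Bstar \Vinit K + \Bstar^2 SA\right).
\end{align*}
This is precisely where the SLP assumption is indispensable: the same-sign rewards make the range term controllable by $\Bstar\Vinit K$, whereas for mixed signs the analogous quantity is the much larger $R^2K$ paid in \pref{thm: main for mixed case}.

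Substituting this variance bound into the advantage inequality gives $\sum_{t=1}^T (V^\star(s_t) - Q^\star(s_t,a_t)) \leq \tilde{O}(\sqrt{\Vinit\Bstar SAK} + BS^2A)$, where the $\sqrt{SA\cdot\Bstar^2 SA}=\Bstar SA$ contribution is absorbed into the additive $BS^2A$ term since $B\geq\Bstar$. Converting the advantage sum back to $\Reg_K$ requires a variance-aware (Freedman-type) martingale concentration for the gap between $Q^\star(s_t,a_t)$ and the realized reward plus next-state value, whose predictable quadratic variation is again $\sum_t \mathbb{V}(P_{s_t,a_t},V^\star)$; reusing the bound above keeps this correction at lower order.

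I expect the main obstacle to be the bookkeeping of this last conversion under a \emph{random} total horizon $T$. One must establish a high-probability bound on $T$ (or replace it by $K\Tmax$ as the footnote permits), ensure that the per-step confidence factors $\iota_t$ and the $\ln\ln(BT)$ terms collapse into the single factor $\tiliota_{T,B,\delta}$, and verify that the law-of-total-variance step—which is most naturally an in-expectation identity—is turned into a high-probability statement without reintroducing a $\sqrt{\mathbb{V}_{\text{sum}}}$ self-reference that fails to close; if such a self-referential term does appear in \pref{lem: cost sm of var}, it is benign and solved by the standard $x\leq a\sqrt{x}+b \Rightarrow x \leq O(a^2+b)$ inequality, after which the stated bound in \pref{eq: regret of SLP} follows directly.
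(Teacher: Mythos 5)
Your proposal follows the paper's own proof essentially step for step: the same reduction of $\Reg_K$ to the advantage sum plus a Freedman correction with variance $\sum_t \mathbb{V}(P_{s_t,a_t},V^\star)$, the same use of \pref{lem: V*-Q*}, and the same substitution of \pref{lem: cost sm of var} in place of \pref{lem: getting a high prob bound} — and your closing hedge is on target, since \pref{lem: cost sm of var} does carry the self-referential term $\Bstar\sum_t(V^\star(s_t)-Q^\star(s_t,a_t))$, which the paper resolves exactly by solving the resulting quadratic inequality. The one loose stitch is your claim $\sum_{t=1}^T r(s_t,a_t) = K\Vinit - \Reg_K \leq K\Vinit$, which presumes $\Reg_K\geq 0$ and can fail pathwise (the realized reward may exceed $K\Vinit$); the paper instead first derives the self-referential bound $K\Vinit - \sum_t r(s_t,a_t) \leq O\bigl(\sqrt{\Bstar SA\sum_t r(s_t,a_t)\,\tiliota_{T,B,\delta}} + BS^2A\,\tiliota_{T,B,\delta}\bigr)$ and then splits cases — if $\sum_t r(s_t,a_t)\geq K\Vinit$ the regret bound is trivially true, and otherwise the substitution $\sum_t r(s_t,a_t)\leq K\Vinit$ is legitimate — a two-line repair fully in the spirit of your proposal.
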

The proof of \pref{thm: main thm for reward} is in \pref{app: upper SLP}.

\subsection{Algorithm without knowledge of $\Bstar$}\label{sec: algorithm without B* knowledge}
While \pref{thm: main thm for reward} gives a near-optimal bound, it is unclear how to make the algorithm work if prior knowledge on $\Bstar$, which we need in order to set the value of $B$, is unavailable. Here, we first present a passive way to set $B$ that is simple but leads to a highly sub-optimal bound. Observe from \pref{thm: main thm for reward} that $B$ only appears in the ``lower-order'' term in the regret bound. Therefore, a simple idea is to set $B$ to be something large (of order $\sqrt{K/S^3A}$) with the hope that $B\geq \Bstar$ will hold in a wide range of cases. With this choice, if $\Bstar\leq B$ indeed holds, then we enjoy a regret bound of $\tilde{O}(\sqrt{\Vinit\Bstar SAK} + BS^2A)=\tilde{O}(\sqrt{\Vinit\Bstar SAK} + \sqrt{SAK})=\tilde{O}(\sqrt{\Vinit\Bstar SAK})$; if $\Bstar>B$, then we simply bound the regret by $\Vinit K\leq O(\Vinit\Bstar^2S^3A)$, where the last inequality is implied by $\Bstar\geq B=\Theta(\sqrt{K/(S^3A)})$. Overall, this simple approach gives a regret bound of 
\begin{align}
    \tilde{O}\left(\sqrt{\Vinit \Bstar SAK} + \Vinit \Bstar^2 S^3A\right).  \label{eq: bad bound for agnostic reward} 
\end{align}
While the dominant is optimal, the lower-order term has cubic dependency  (i.e., $\Vinit \Bstar^2$) on the scale of the cumulative reward, which is unnatural, and can easily overwhelm the dominant term when $\Vinit \Bstar^2S^3A \gtrsim \sqrt{\Vinit\Bstar SAK}$, or $\Bstar\geq (K/(\Vinit S^5A))^{1/3}$. Previous prior-knowledge-free algorithms for SSP \citep{cohen2021minimax, tarbouriech2021stochastic, chen2021implicit} also suffer from this issue and have at least cubic dependency on the scale of cumulative reward. 

\begin{algorithm}[t]
    \caption{Procedure to estimate $\Vinit$ in SLP} \label{alg: estimating V*}
    \textbf{input}: $\zeta\geq 1$, $U> 1$. \\
    \For{$i=1,\ldots, \lceil \log_2  U\rceil$}{
        Initiate a \pref{alg: general case} with $B=2^i\zeta$ and probability parameter as $\delta'=\delta/\lceil \log_2 U\rceil$ (call this instance \alg). \\
        Run \alg until $N\geq 16c^2\zeta S^2A \tiliota_{M,B,\delta'}$, where $N$ is the number of episodes, $M$ is the total number of steps, and $c$ is the universal constant hidden in the $O(\cdot)$ notation in \pref{eq: regret of SLP}.  \\
        Let $\hat{r}_i$ be average reward of \alg in these $N$ episodes (i.e., the total reward divided by $N$). 
    }
    \textbf{return} $\hat{V}\triangleq 2\max_i\{\hat{r}_i\}$.
\end{algorithm}
\begin{algorithm}[t]
    \caption{VI-SLP for unknown $\Bstar$} \label{alg: B* agnostic}
    \textbf{input}: $\zeta\geq 1$, $U> 1$.\\ 
    Run \pref{alg: estimating V*} with inputs $\zeta$ and $U$, and get output $\hat{V}$. \\
    Run \pref{alg: general case} with input $B=\hat{V}\zeta$ in the rest of the episodes. 
\end{algorithm}

In this subsection, we introduce a way to obtain a regret guarantee that only (nearly) linearly depends on the scale. Observe that in SLP, $\Bstar$ corresponds to the maximum total expected reward the learner can get starting from any state. Clearly, the learner needs to have some knowledge about the \emph{optimal policy} in order to estimate this quantity. Fortunately, it needs not to be accurately estimated; an estimation up to a constant factor suffices. Therefore, a reasonable plan is to coarsely estimate $\Bstar$ up to a constant factor, and then use the estimation to set $B$. Notice that estimating $\Bstar$ requires estimating $V^\star(s)$ for all $s$ since $\Bstar=\max_s V^\star(s)$. 

While we can indeed make this idea work (details omitted), we find that an even more economical solution is to just estimate $\Vinit=V^\star(s_\init)$ and set $B$ to be something large compared to this estimation. 
Below we explain this idea. Let's first assume that $\frac{\Bstar}{\Vinit}\leq \zeta$ for some fixed $\zeta$ (will be relaxed later). Now consider running \pref{alg: general case} for $N=\tilde{\Theta}(\zeta S^2A)$ episodes with parameter $B=V\zeta$ for some value $V$ that we choose. If we happen to choose a $V\in[\Vinit, 2\Vinit]$, then we have $B=V\zeta \geq \Vinit\zeta \geq \Bstar$, and thus the regret bound in \pref{thm: main thm for reward} holds. Let $\hat{r}$ be the average reward in these $N$ episodes (i.e., total reward divided by $N$). Then \pref{thm: main thm for reward} gives 
\begin{align}
    \Vinit - \hat{r} \leq \tilde{O}\left(\sqrt{\frac{\Vinit \Bstar SA}{N}} + \frac{BS^2A}{N}\right) = O\left( \sqrt{\frac{\Vinit\Bstar }{\zeta S}} + \frac{B}{\zeta } \right) = O\left(\frac{\Vinit}{\sqrt{S}}+ V\right) = O(\Vinit), \label{eq: V is close}
\end{align}
where in the first equality we use $N=\tilde{\Theta}(\zeta S^2A)$, in the second equality we use $\frac{\Bstar}{\Vinit}\leq \zeta$ and $B=V\zeta$, in the third equality we use $V\leq 2\Vinit$. By setting $N$ to be large enough, we can ensure that the $O(\Vinit)$ on the right-hand side is no more than $\frac{1}{2}\Vinit$, which then gives $\frac{1}{2}\Vinit\leq \hat{r}\leq \frac{3}{2}\Vinit$. 

On the other hand, if we choose some $V$ that is not in the range of $[\Vinit, 2\Vinit]$ and set $B=V\zeta$, we may not have a good guarantee like in \pref{eq: V is close}. However, the following reversed inequality must hold no matter how large $V$ is:  
\begin{align}
    \Vinit - \hat{r} \geq -\tilde{O}\left(\sqrt{\frac{\Vinit\Bstar}{N}} + \frac{\Bstar}{N}\right) = -\tilde{O}\left(\sqrt{\frac{\Vinit\Bstar}{\zeta S^2A}} + \frac{\Bstar}{\zeta S^2A}\right) =-\tilde{O}(\Vinit),  \label{eq: reverse Freedman}
\end{align}
where the first inequality is by the fact that $\Vinit \geq \E[\hat{r}]$ (because $\Vinit$ is the expected value of the \emph{optimal} policy) and that we can use Freedman's inequality to lower bound $\Vinit - \hat{r}$ (details given in the formal proof). This inequality gives $\hat{r}\leq O(\Vinit)$ no matter what $V$ we use. 

With the observations from \pref{eq: V is close} and \pref{eq: reverse Freedman}, we have the following strategy to estimate $\Vinit$ \emph{given that $\frac{\Bstar}{\Vinit}\leq \zeta$ holds}: 
we perform the procedure described above for every $V\in \{1, 2, 4, 8, \ldots\}$. Let $\hat{r}_i$ denote the average reward when we use $V=2^{i}$. By the argument in \pref{eq: V is close}, at least one of the $\hat{r}_i$'s is of order $\Theta(\Vinit)$; by the argument in \pref{eq: reverse Freedman}, all $\hat{r}_i$'s are of order $O(\Vinit)$. Combining them, we have $\Vinit=\Theta(\max_i\{\hat{r}_i\})$. This procedure to estimate $\Vinit$ is formalized in \pref{alg: estimating V*}, with its guarantee
given in the following lemma: 
\begin{lemma}\label{lem: for the estimating procedure}
    Suppose that $\frac{\Bstar}{\Vinit}\leq \zeta$, and $\Vinit\leq U$. Then \pref{alg: estimating V*} with inputs $\zeta$ and $U$ ensures that its output $\hat{V}$ satisfies $\Vinit \leq \hat{V}\leq 3\Vinit$. 
\end{lemma}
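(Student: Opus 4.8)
The plan is to establish the two-sided bound by controlling $\max_i \hat{r}_i$, since $\hat{V} = 2\max_i \hat{r}_i$ and the claim $\Vinit \le \hat{V} \le 3\Vinit$ is equivalent to $\tfrac12\Vinit \le \max_i\hat{r}_i \le \tfrac32\Vinit$. The lower bound on $\max_i\hat{r}_i$ will come from the existence of one ``good'' iteration in which the chosen $B$ is a valid upper bound on $\Bstar$, so that \pref{thm: main thm for reward} applies; the upper bound on $\max_i\hat{r}_i$ will hold for \emph{every} iteration regardless of the value of $B$, via the reverse concentration inequality \pref{eq: reverse Freedman}. Throughout I would condition on the event (of probability at least $1-\delta$ by a union bound over the $\lceil\log_2 U\rceil$ iterations, each run with parameter $\delta' = \delta/\lceil\log_2 U\rceil$) that the guarantee of \pref{thm: main thm for reward} and the relevant Freedman bound hold for all iterations simultaneously.

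For the lower bound, first I would exhibit an index $i^\star\in\{1,\dots,\lceil\log_2 U\rceil\}$ with $2^{i^\star}\in[\Vinit, 2\Vinit]$; such an index exists because $1\le\Vinit\le U$ (take $i^\star = \max\{1, \lceil\log_2\Vinit\rceil\}$, which is at most $\lceil\log_2 U\rceil$). For this iteration the parameter is $B = 2^{i^\star}\zeta \ge \Vinit\zeta \ge \Bstar$, using the hypothesis $\Bstar/\Vinit\le\zeta$, so \pref{thm: main thm for reward} bounds $\Reg_N = N(\Vinit - \hat{r}_{i^\star})$ and hence $\Vinit - \hat{r}_{i^\star} = O(\sqrt{\Vinit\Bstar SA\tiliota_{M,B,\delta'}/N} + BS^2A\tiliota_{M,B,\delta'}/N)$. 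Substituting $B\le 2\Vinit\zeta$, $\Bstar\le\Vinit\zeta$, and the stopping threshold $N \ge 16c^2\zeta S^2A\tiliota_{M,B,\delta'}$ — exactly the computation carried out in \pref{eq: V is close} — makes the right-hand side at most $\tfrac12\Vinit$, so $\hat{r}_{i^\star}\ge\tfrac12\Vinit$ and therefore $\max_i\hat{r}_i\ge\tfrac12\Vinit$, i.e. $\hat{V}\ge\Vinit$.

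For the upper bound I would argue that $\hat{r}_i\le\tfrac32\Vinit$ for \emph{every} $i$. Writing $\hat{r}_i = \tfrac1N\sum_{k}R_k$ with $R_k$ the total (non-negative) reward in episode $k$, the key observations are that $\E_k[R_k] = V^{\pi_k}(s_\init)\le V^\star(s_\init)=\Vinit$ for the history-dependent policy $\pi_k$ actually executed, so $\tfrac1N\sum_k\E_k[R_k]\le\Vinit$, and that $R_k$ has conditional second moment at most $R^2 = \tilde{O}(\Vinit\Bstar)$ by \pref{lem: connecting reward and general}. A Freedman-type bound on the martingale $\tfrac1N\sum_k(R_k-\E_k[R_k])$ then yields $\hat{r}_i - \Vinit \le \tilde{O}(\sqrt{\Vinit\Bstar/N} + \Bstar/N)$ as in \pref{eq: reverse Freedman}; plugging in $\Bstar\le\Vinit\zeta$ and $N\ge16c^2\zeta S^2A\tiliota_{M,B,\delta'}$ drives this deviation below $\tfrac12\Vinit$, giving $\hat{r}_i\le\tfrac32\Vinit$ for all $i$, hence $\max_i\hat{r}_i\le\tfrac32\Vinit$ and $\hat{V}\le3\Vinit$.

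The main obstacle is the reverse inequality \pref{eq: reverse Freedman}: unlike in the lower bound, here $B$ may be far below $\Bstar$, so \pref{thm: main thm for reward} is unavailable, and the per-episode reward $R_k$ is not deterministically bounded (episode lengths are random), so a naive application of Freedman's inequality fails. I expect the care to go into (i) obtaining the \emph{variance} proxy $R^2 = \tilde O(\Vinit\Bstar)$ rather than the cruder $\Rmax^2 = O(\Bstar^2)$ — this is what produces the favorable $\sqrt{\Vinit\Bstar/N}$ rate and is supplied by \pref{lem: connecting reward and general} — and (ii) handling the unbounded increments $R_k$, e.g. by a truncation argument exploiting that the tails of the episode reward decay under \pref{assum: proper} (all policies are proper, so $\Tmax,\Rmax<\infty$), which controls the lower-order $\Bstar/N$ term. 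The remaining steps — the union bound and the arithmetic verifying that the stopping threshold makes both deviations at most $\tfrac12\Vinit$ — are routine.
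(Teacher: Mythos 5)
Your proof is correct, and its first half (exhibiting an index $i^\star$ with $2^{i^\star}\in[\Vinit,2\Vinit]$, noting $B=2^{i^\star}\zeta\geq\Vinit\zeta\geq\Bstar$, and running the \pref{thm: main thm for reward} arithmetic against the stopping threshold $N\geq 16c^2\zeta S^2A\tiliota_{M,B,\delta'}$) coincides with the paper's. The second half, however, takes a genuinely different route: the paper does not re-derive an episode-level Freedman bound but instead invokes \pref{lem: negative regret}, which lower-bounds the regret of \emph{any} action sequence in SLP by $-O\big(\sqrt{\Vinit\Bstar K\ln(T/\delta)}+\Bstar\ln(T/\delta)\big)$, independently of $B$; that lemma is proved from the step-level decomposition $\Reg_K=\sum_t\left(V^\star(s_t)-Q^\star(s_t,a_t)\right)+\sum_t\left(P_tV^\star-V^\star(s_t')\right)$, discarding the nonnegative advantage terms and controlling the sum of variances via \pref{lem: cost sm of var}. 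Because the martingale increments $P_tV^\star-V^\star(s_t')$ there are deterministically bounded by $2\Bstar$, the paper never faces the unbounded-increment issue you correctly identify as the crux of your episode-level argument. Your fix is sound and can be made precise with tools already in the paper: $\E_k[R_k^2]\leq R^2=O(\Vinit\Bstar\ln_+(\Bstar/\Vinit))$ follows from \pref{lem: connecting reward and general} (valid because, conditioned on the past, the learner's within-episode behavior is a policy in $\piHD$), and the needed truncation $\max_k R_k=O(\Bstar\ln(K/\delta))$ w.h.p.\ follows from \pref{lem: sum of positive}(a) applied per episode (each step's reward lies in $[0,1]$ and the expected remaining reward from any state under any policy is at most $\Bstar$), after which the anytime statement of \pref{lem: freedman} handles the randomness of the stopping time $N$, yielding the same $\tilde{O}(\sqrt{\Vinit\Bstar/N}+\Bstar/N)$ deviation and hence $\hat{r}_i\leq\tfrac{3}{2}\Vinit$ for all $i$. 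Notably, your route is exactly the one the main text sketches at \pref{eq: reverse Freedman}, while the formal appendix proof detours through \pref{lem: negative regret}; what the paper's detour buys is a ready-made anytime bound with bounded increments and a constant $c$ pre-calibrated to the one hard-coded in \pref{alg: estimating V*}, whereas your direct argument is more self-contained but requires the truncation step and its own (routine) constant bookkeeping.
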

With $\hat{V}$ from \pref{alg: estimating V*} being a coarse estimation of $\Vinit$, we can use it to set the parameter $B$ in \pref{alg: general case} as $B=\hat{V}\zeta$. Our overall algorithm is presented in \pref{alg: B* agnostic}. However, notice that 
\pref{lem: for the estimating procedure} only gives a meaningful guarantee when the two conditions $\frac{\Bstar}{\Vinit}\leq \zeta$ and $\Vinit\leq U$ hold. Apparently, they do not hold for all instances. In the theorem below, we show that with appropriate choices of $\zeta$ and $U$, the additional regret due to their violation is well-bounded. 
\begin{theorem}\label{thm: main for parameter free reward setting}
In the case when the learner has access to an absolute upper bound for $\Vinit$ (for example, $\Tmax$ is an absolute upper bound for $\Vinit$), by setting $U$ to be that upper bound and setting $\zeta=\sqrt{K/(S^2A\ln U)}$, \pref{alg: B* agnostic} ensures with probability at least $1-O(\delta)$
\begin{align*}
    \Reg = O\left(\Bstar\sqrt{S^2AK\ln U}\tiliota_{T,KU,\delta}\right).
\end{align*}
Alternatively, with $\zeta=\sqrt{K/(S^3A\ln U)}$, \pref{alg: B* agnostic} ensures with probability at least $1-O(\delta)$
\begin{align*}
    \Reg = O\left( \sqrt{\Vinit \Bstar SAK \ln U}\tiliota_{T,KU, \delta} + \frac{\Bstar^2S^3A \ln U}{\Vinit}\tiliota_{T,KU,\delta}\right).  
\end{align*}
In the case when the learner has NO access to an absolute upper bound for $\Vinit$, we set $U=K^{1/\epsilon}$ for some parameters $\epsilon \in(0,1)$. With $\zeta=\sqrt{K/(S^2A\ln U)}$ and $\zeta=\sqrt{K/(S^3A\ln U)}$, \pref{alg: B* agnostic} ensures with probability at least $1-O(\delta)$
\begin{gather*}
    \Reg=O\left(\Bstar\sqrt{\epsilon^{-1}S^2AK\ln K}\tiliota_{T,K,\delta\epsilon} + \Vinit^{1+\epsilon}\right) \\
\text{and}\qquad \Reg=O\left(\sqrt{ \epsilon^{-1}\Vinit \Bstar SAK \ln K}\tiliota_{T,K, \delta\epsilon} + \frac{\epsilon^{-1}\Bstar^2S^3A \ln K}{\Vinit}\tiliota_{T,K,\delta\epsilon}+ \Vinit^{1+\epsilon}\right), 
\end{gather*}
respectively. 

\end{theorem}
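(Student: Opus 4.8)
The plan is to establish all four bounds through a single case analysis, driven by one crude fact: because $r(\cdot,\cdot)\ge 0$ in SLP, the reward collected in any episode is non-negative, so the regret of each episode is at most $V^\star(s_\init)=\Vinit$, and hence unconditionally $\Reg\le K\Vinit$ no matter what \pref{alg: B* agnostic} does. The entire argument then consists of deciding, instance by instance, whether to use this crude bound or to invoke the sharp guarantee of \pref{thm: main thm for reward}; the latter is legitimate only once \pref{lem: for the estimating procedure} certifies $B=\hatV\zeta\ge\Bstar$, which in turn requires its two preconditions $\tfrac{\Bstar}{\Vinit}\le\zeta$ and $\Vinit\le U$.

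First I would treat the \textbf{good case}, where both preconditions hold. Then \pref{lem: for the estimating procedure} gives $\Vinit\le\hatV\le 3\Vinit$, so the parameter handed to the main phase satisfies $\Bstar\le\Vinit\zeta\le\hatV\zeta=B\le 3\Vinit\zeta$; the leftmost inequality (giving $B\ge\Bstar$) legitimizes \pref{thm: main thm for reward}, and the rightmost one controls its lower-order term. Substituting $B\le 3\Vinit\zeta$ and each choice of $\zeta$ into $BS^2A$, and using $\Vinit\le\Bstar$ (hence $\Vinit^2\le\Vinit\Bstar$), collapses the main-phase regret into $O(\sqrt{\Vinit\Bstar SAK\,\tiliota})$ for $\zeta=\sqrt{K/(S^3A\ln U)}$ and into $O(\Bstar\sqrt{S^2AK\ln U}\,\tiliota)$ for $\zeta=\sqrt{K/(S^2A\ln U)}$. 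The regret spent inside \pref{alg: estimating V*} I bound by $\Vinit$ times its episode count $\lceil\log_2 U\rceil\cdot 16c^2\zeta S^2A\,\tiliota$; plugging in $\zeta$ and again using $\Vinit\le\Bstar$ shows this estimation cost is itself dominated by the corresponding target, so the good case is complete.

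Next I would handle the \textbf{bad cases} using only $\Reg\le K\Vinit$, showing each violated precondition forces $K\Vinit$ to be small. If $\tfrac{\Bstar}{\Vinit}>\zeta$ (but $\Vinit\le U$), then $\Vinit<\Bstar/\zeta$, so $\Reg\le K\Vinit<(K/\zeta)\Bstar=\Bstar\sqrt{K S^{\bullet}A\ln U}$ with $\bullet=2$ or $3$ according to $\zeta$; for $\bullet=2$ this is exactly the dominant target, and for $\bullet=3$ it is at most $\tfrac{\Bstar^2 S^3A\ln U}{\Vinit}$ (again using $\Vinit<\Bstar/\zeta$), matching the stated lower-order term. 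If instead $\Vinit>U=K^{1/\epsilon}$ (possible only in the setting without an absolute upper bound), then $K<\Vinit^{\epsilon}$ and hence $\Reg\le K\Vinit<\Vinit^{1+\epsilon}$, matching the additive term. When the learner has an absolute upper bound $U\ge\Vinit$, this third case never arises, which is precisely why the first two bounds carry no $\Vinit^{1+\epsilon}$ term.

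Finally, the failure probability is controlled by a union bound: \pref{thm: main thm for reward} holds in the main phase with probability $1-\delta$, and the estimation phase runs $\lceil\log_2 U\rceil$ instances at confidence $\delta'=\delta/\lceil\log_2 U\rceil$, so the total failure probability is $O(\delta)$; since every $B$ ever used is at most $KU$ (resp.\ polynomial in $K$ when $U=K^{1/\epsilon}$), the $\ln\ln$-type factors fold into $\tiliota_{T,KU,\delta}$ (resp.\ $\tiliota_{T,K,\delta\epsilon}$), with the extra $\epsilon$ coming from $\lceil\log_2 U\rceil=\Theta(\epsilon^{-1}\log K)$. I expect the only real difficulty to be arithmetic bookkeeping rather than any conceptual step: one must check, for each of the two $\zeta$-choices crossed with the two $U$-settings, that the estimation cost and the relevant value of $K\Vinit$ both collapse into the claimed dominant and lower-order terms. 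The delicate point is that the two $\zeta$-settings trade the $S$-power in the $\sqrt{SAK}$ dominant term against the $S$-power in the lower-order term, so the bad-case threshold $\Vinit<\Bstar/\zeta$ must line up exactly with the lower-order term that the same $\zeta$ produces in the good case.
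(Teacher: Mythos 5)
Your proposal is correct and follows essentially the same route as the paper's proof: bound the good case ($\Bstar/\Vinit\le\zeta$, $\Vinit\le U$) by combining \pref{lem: for the estimating procedure} with \pref{thm: main thm for reward} plus the $\Vinit$-per-episode estimation cost, and bound each violated precondition via the crude $\Reg\le K\Vinit$, using $\Vinit<\Bstar/\zeta$ once or twice to recover the dominant or lower-order term and $K<\Vinit^\epsilon$ for the $\Vinit^{1+\epsilon}$ term. The only cosmetic difference is that the paper routes both $\zeta$-choices through a free parameter $\alpha=\sqrt{K}/\zeta$ and a $\min\{\alpha\Bstar\sqrt{K},\alpha^2\Bstar^2/\Vinit\}$ bound, whereas you substitute each $\zeta$ directly.
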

The proof can be found in \pref{app: upper SLP}. In \pref{thm: main for parameter free reward setting}, we obtain two regret bounds for SLP without knowledge of $\Bstar$: $\tilde{O}(\Bstar \sqrt{S^2 AK})$ and $\tilde{O}(\sqrt{\Vinit\Bstar SAK} + \frac{\Bstar^2}{\Vinit}S^3A)$, both not matching the bound $O(\sqrt{\Vinit\Bstar SAK} + \Bstar S^2A)$ in \pref{thm: main thm for reward} for the case with a known $\Bstar$. Is it possible to close the gap between the ``known $\Bstar$'' and the ``unknown $\Bstar$'' cases? In \pref{sec: lower bound for agnostic reward setting}, we will show that this is \emph{impossible}, by giving a regret lower bound for algorithms agnostic of $\Bstar$. The lower bound can be strictly larger than the upper bound with knowledge of $\Bstar$, thus formally identifying the \emph{price of information about $\Bstar$} for SLP.  Finally, we remark on the source of suboptimality in the bounds in \pref{thm: main for parameter free reward setting}. The bounds we get are $\Bstar \sqrt{S^2AK}$ and $\sqrt{\Vinit \Bstar SAK} + \frac{\Bstar^2}{\Vinit}S^3A$. The additional $S$ dependencies come from the $S^2$ in the lower-order term in \pref{thm: main thm for reward}. It is conjectured by previous work \citep{zhang2021reinforcement} that this $S^2$ in the lower-order term can be improved to $S$. If this conjecture is true, then our bounds in \pref{thm: main for parameter free reward setting} can be improved to $\Bstar\sqrt{SAK}$ and $\sqrt{\Vinit \Bstar SAK} + \frac{\Bstar^2}{\Vinit}SA$. As we will see in \pref{sec: lower bound for agnostic reward setting}, these bounds are unimprovable when $\Bstar$ is unknown.

\subsection{Lower bound for algorithms agnostic of $\Bstar$}\label{sec: lower bound for agnostic reward setting}
If the magnitude of $\Bstar$ is known, \pref{thm: main thm for reward} shows that a regret bound of $\tilde{O}(\sqrt{\Vinit\Bstar SAK} + \Bstar S^2A)$ is possible. In this section, we show that there is a price to pay for adaptivity.

\begin{theorem}\label{thm: reward setting lower bound for agnostic alg}
In SLP, for any algorithm agnostic to $\Bstar$ that obtains a regret bound of
$\tilde{O}(\nu\sqrt{SAK})$ for any problem instance where $\Bstar=\Vinit=\nu$ and sufficiently large $K$, there exists a problem instance with $\Vinit\leq 1 + 2\nu$, $\Bstar = \tilde{O}\big(\nu\sqrt{K/(SA)}\big)$ such that the regret is at least $\Omega(\nu K)$.
\end{theorem}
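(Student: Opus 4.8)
The plan is to hide a high-value ``treasure'' behind one of $m = \Theta(SA)$ rarely-visited ``gateways'' and to argue by indistinguishability that an algorithm calibrated to be near-optimal on easy instances cannot find it. First I would fix a \emph{base} instance with $\Bstar = \Vinit = \nu$: from $s_\init$ a ``safe'' action leads to a reward-collecting gadget of value exactly $\nu$ (one state with reward $1$, self-loop probability $1 - 1/\nu$, termination probability $1/\nu$), while $m = \Theta(SA)$ state-action pairs act as \emph{gateways}, each leading deterministically to immediate termination with reward $0$; the remaining actions form a deterministic navigation gadget (built from the $S$ states and $A$ actions) that lets the learner reach and play any chosen gateway within one episode. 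On this instance the optimum plays safe, so $\Vinit = \Bstar = \nu$, and the hypothesis gives $\E[\Reg_K] \le C\nu\sqrt{SAK}$ for a polylogarithmic $C$. Since an episode terminates with reward $0$ exactly when it ends in a gateway play, each gateway play costs $\nu$ in regret, whence $\E[\sum_{j} N_j] \le C\sqrt{SAK}$, where $N_j$ counts plays of gateway $j$.

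Next, for each $j^\star \in [m]$ I would define an \emph{alternative} instance $\calM_{j^\star}$ that agrees with the base instance except that gateway $j^\star$ transitions, with a small probability $p$, into a treasure gadget of value $\Bstar = \Theta(\nu\sqrt{K/(SA)})$ (taken with a polylog factor strictly larger than $C$) and otherwise terminates. Setting $p$ so that $p\Bstar = 2\nu$ makes playing $j^\star$ optimal on $\calM_{j^\star}$, with $\Vinit = 2\nu \le 1 + 2\nu$ and $\max_s V^\star(s) = \Bstar$; crucially $p\Bstar$ is undiluted because navigation is deterministic (random routing to the gateways would instead cost a spurious factor of $S$ here). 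The key point is that $\calM_{j^\star}$ is \emph{identical to the base instance until the treasure transition first fires}. Coupling the two runs up to that moment, the number of plays of $j^\star$ before the treasure is ever found is at most its count $N_{j^\star}$ on the coupled base run, and the probability the treasure is ever found is at most $p\,\E[N_{j^\star}]$. Averaging over $j^\star$ uniform on $[m]$ and using $\E[\sum_j N_j]\le C\sqrt{SAK}$ with $m = \Theta(SA)$ gives $\E_{j^\star}\E[N_{j^\star}] \le C\sqrt{K/(SA)}$, and hence $\E_{j^\star}\Pr[\text{treasure found}] \le pC\sqrt{K/(SA)}$, which the polylog slack in $\Bstar$ drives below a small constant.

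Finally, I would turn this into a regret lower bound on $\calM_{j^\star}$. Because every gateway other than $j^\star$ still yields $0$, the algorithm's expected reward is $\nu\,\E[\#\text{safe episodes}] + 2\nu\,\E[\#\text{episodes playing }j^\star]$, and since $s_\init$ has value $2\nu$ the expected regret is at least $\nu\big(K - \E[\#\text{episodes playing }j^\star]\big)$. Splitting the plays of $j^\star$ into those before the treasure is found (at most $N_{j^\star}$, hence $\le C\sqrt{K/(SA)} \le K/4$ on average for large $K$) and those afterwards (at most $K\cdot\Pr[\text{found}]$, hence $\le K/8$ on average once the slack makes $\Pr[\text{found}]$ small), I obtain $\E_{j^\star}\E[\#\text{episodes playing }j^\star] \le K/2$, so $\E_{j^\star}[\Reg] \ge \Omega(\nu K)$ and some fixed $j^\star$ attains $\Reg \ge \Omega(\nu K)$. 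The main obstacle is precisely this post-discovery term: the coupling controls behaviour only up to the first treasure hit, after which the algorithm could exploit for up to $K$ good episodes, so I must calibrate the polylog factor in $\Bstar$ (relative to the base-instance regret constant $C$) so that the treasure is found with only a small constant probability, while simultaneously packing $\Theta(SA)$ gateways into $S$ states and $A$ actions without diluting any gateway's visit count.
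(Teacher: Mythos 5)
Your proposal is correct, and it reaches the paper's conclusion by a genuinely different route. The paper's construction hides the \emph{identity of a good action} among $\Theta(A)$ (then, via a tree, $\Theta(SA)$) candidates at a state $y$ reachable only through a rarely-profitable action $b$: in the easy instance all candidate actions at $y$ terminate with probability $\tfrac12$, while in the hard instance the good action terminates with a tiny probability, so the two laws differ \emph{on every visit} and the argument must go through information theory --- a KL chain-rule computation bounding each visit's contribution by $O(\ln(vK))$, the self-bounding observation that the assumed $\tilde O(\nu\sqrt{SAK})$ calibration forces $\E_{\unif}[N_b] \le 2c\sqrt{AK}$ and hence total KL $O(1/\ln(vK))$, convexity of KL over the mixture of alternatives, and Pinsker to conclude $\overline{\bbP}(N_b > K/2) \le \tfrac12$. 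You instead make the construction \emph{one-sided}: the base gateway terminates deterministically, and the alternative merely adds a rare branch of probability $p$ into the treasure, so an exact coupling keeps the two runs literally identical until the $p$-event first fires, with $\Pr[\text{found}] \le p\,\E[N_{j^\star}]$; no KL estimate is needed. This forces you to handle explicitly something the paper's argument absorbs into the mixture/Pinsker step, namely post-discovery exploitation (your $K\Pr[\text{found}]$ term, killed by taking the polylog slack $\lambda$ in $\Bstar$ larger than the calibration constant $C$ --- the exact analogue of the paper's $\ln^2(vK)$ factors), and to pack $\Theta(SA)$ gateways deterministically, which your tree gadget does just as the paper's generalization to general $S$ does. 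Your parameter checks are consistent with the theorem statement ($\Vinit = 2\nu \le 1+2\nu$, $\Bstar = \Theta(C)\cdot\nu\sqrt{K/(SA)} = \tilde O(\nu\sqrt{K/(SA)})$, regret $\ge \nu(K - \E[\#j^\star\text{-episodes}]) \ge \nu K/2$ after averaging over $j^\star$). What each approach buys: yours is more elementary and isolates the ``price of adaptivity'' mechanism cleanly (the treasure is invisible until found, full stop), while the paper's KL machinery is the more robust tool when the alternative differs two-sidedly from the base on visited transitions --- as in their general-SP lower bounds (\pref{thm: lower bound for general case}, \pref{thm: impossibility of R*}), whose Claim-4-style arguments this proof deliberately reuses.
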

See \pref{app: lower bound SLP} for the proof. This theorem implies that being agnostic to $\Bstar$ is fundamentally harder than knowing an order optimal bound on $\Bstar$, since \pref{thm: main thm for reward} obtains sub-linear regret of order $\tilde{O}(\nu(SA)^\frac{1}{4}K^\frac{3}{4})$ in the hard instance mentioned in \pref{thm: reward setting lower bound for agnostic alg}.
Considering two classes of upper bounds, one in which we always scale with $\sqrt{K}$ without dominating lower order terms, and one in which we allow a constant cost for adapting to an unknown $\Bstar$, we directly derive the following results.
\begin{corollary}\label{corr: reward setting lower bound for agnostic alg}
Any algorithm with an asymptotic upper bound of
\begin{align*}
    \tilde{O}\left(\Bstar^\alpha\Vinit^{1-\alpha}\sqrt{SAK}\right)+o\left(\Bstar^2\right)\,,
\end{align*} 
satisfies at least $\alpha \geq 1$ and
any algorithm with an upper bound of 
\begin{align*}
    O\left(\sqrt{\Vinit\Bstar SAK}+ \left(\frac{\Bstar}{\Vinit}\right)^2\poly(\Vinit, S, A)\right)
\end{align*}
requires the constant term to be at least $\tilde{\Omega}\left(\frac{\Bstar^2SA}{\Vinit}\right)$.  
\end{corollary}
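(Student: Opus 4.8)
The plan is to derive both claims from the hard-instance construction of \pref{thm: reward setting lower bound for agnostic alg}. The common template is as follows: fix a scale $\nu\ge 1$ and apply the theorem to the base instances with $\Bstar=\Vinit=\nu$. On these instances the claimed upper bound reduces to $\tilde{O}(\nu\sqrt{SAK})$ once $K$ is large enough that the non-$\sqrt{K}$ pieces are lower order, so the hypothesis of the theorem is met. This yields a hard instance on which the regret is at least $\Omega(\nu K)$ while $\Bstar=\tilde{O}(\nu\sqrt{K/(SA)})$ and $\Vinit\le 1+2\nu$. Since rewards are non-negative we also have $\Reg_K\le K\Vinit$, so an $\Omega(\nu K)$ regret forces $\Vinit\ge\Omega(\nu)$; together with $\Vinit\le 1+2\nu$ this pins $\Vinit=\Theta(\nu)$ on the hard instance. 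Both claims then follow by evaluating the algorithm's claimed bound at these parameters and comparing against $\Omega(\nu K)$.

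For the first claim I take $\nu=\Theta(1)$ and assume for contradiction that $\alpha<1$. On the base instances the bound is $\tilde{O}(\nu\sqrt{SAK})+o(\nu^2)=\tilde{O}(\sqrt{SAK})$, verifying the hypothesis. On the resulting hard instance the leading term is $\tilde{O}(\Bstar^\alpha\Vinit^{1-\alpha}\sqrt{SAK})=\tilde{O}\big((K/(SA))^{\alpha/2}\sqrt{SAK}\big)=\tilde{O}\big(K^{(\alpha+1)/2}(SA)^{(1-\alpha)/2}\big)$, while the lower-order term is $o(\Bstar^2)=o(K/(SA))$. Treating $S,A$ as constants and using $\alpha<1$, both are $o(K)$, so the total claimed regret is $o(K)$, contradicting the lower bound $\Omega(\nu K)=\Omega(K)$. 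Hence $\alpha\ge 1$.

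For the second claim I keep $\nu$ general and evaluate the claimed bound on the hard instance. Using $\Vinit\le O(\nu)$ and $\Bstar\le\tilde{O}(\nu\sqrt{K/(SA)})$, the dominant term satisfies $\sqrt{\Vinit\Bstar SAK}=\tilde{O}\big(\nu K^{3/4}(SA)^{1/4}\big)=o(\nu K)$ for fixed $S,A$. Since the total claimed bound must be at least $\Omega(\nu K)$, the remaining constant term must carry it, i.e.\ $(\Bstar/\Vinit)^2\poly(\Vinit,S,A)\ge\Omega(\nu K)$. Plugging in $\Bstar\le\tilde{O}(\nu\sqrt{K/(SA)})$ and $\Vinit\ge\Omega(\nu)$ gives $(\Bstar/\Vinit)^2\le\tilde{O}(K/(SA))$, and therefore $\poly(\Vinit,S,A)\ge\tilde{\Omega}(\nu SA)=\tilde{\Omega}(\Vinit SA)$. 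As $\nu$ ranges over $[1,\infty)$ this lower-bounds the polynomial at all relevant scales, and substituting it back into the constant term yields $(\Bstar/\Vinit)^2\poly(\Vinit,S,A)\ge\tilde{\Omega}(\Bstar^2 SA/\Vinit)$, as claimed.

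The anticipated difficulty is bookkeeping rather than conceptual. I must verify that the $\sqrt{K}$-type term genuinely degrades to $o(\nu K)$ on the hard instance (this is exactly where the inflated $\Bstar=\tilde{\Theta}(\nu\sqrt{K/(SA)})$ is exploited) and that the constant piece is cleanly isolated and re-expressed in $\Bstar,\Vinit$. Pinning $\Vinit=\Theta(\nu)$ via $\Reg_K\le K\Vinit$ is the one step that uses structure beyond the theorem statement. Finally, fixing $\nu=\Theta(1)$ in the first claim is essential: for large $\nu$ the $o(\Bstar^2)=o(\nu^2K/(SA))$ term can exceed $\nu K$ and break the contradiction, so the constant-scale regime is what isolates the exponent constraint $\alpha\ge 1$.
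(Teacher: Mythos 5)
Your proposal is correct and follows essentially the same route as the paper's proof: instantiate \pref{thm: reward setting lower bound for agnostic alg}, verify its hypothesis on the $\Bstar=\Vinit=\nu$ instances (with $\nu=\Theta(1)$ for the first claim, and $K$ large enough to absorb the constant term for the second), and then evaluate the claimed upper bound on the resulting hard instance with $\Bstar=\tilde{O}\big(\nu\sqrt{K/(SA)}\big)$ to contradict, respectively extract, the $\Omega(\nu K)$ lower bound. The extra steps you supply --- pinning $\Vinit=\Theta(\nu)$ via $\Reg_K\le K\Vinit$ and quantifying over all $\nu$ to constrain the fixed polynomial --- are just explicit versions of what the paper leaves implicit, so no substantive difference.
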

\begin{proof}
For the first part, note that for any $\alpha<1$, the regret bound for the bad case in \pref{thm: reward setting lower bound for agnostic alg} with $\nu=O(1)$ reads $\Bstar^\alpha\Vinit^{1-\alpha}\sqrt{SAK}+o(\Bstar^2)=O(K^{\frac{1}{2}(1+\alpha)}(SA)^{\frac{1}{2}(1-\alpha)})+o(K)$, which is sublinear in $K$ and hence constitutes a contradiction.
Similarly, in the second case, we can make $K$ large enough such that the constant term (i.e., $(\frac{\Bstar}{\Vinit})^2\poly(\Vinit, S, A)$) is absorbed by the dominant term (i.e., $\sqrt{\Vinit \Bstar SAK}$) in the $\Vinit=\Bstar=\nu$ environment, which means we can apply \pref{thm: reward setting lower bound for agnostic alg} and the $\poly(\Vinit, S, A)$ term must be of order $\Omega(\Vinit SA)$, to satisfy the $\Omega(\nu K)$ lower bound.
\end{proof}

\section{Stochastic Shortest Path (SSP)}\label{sec: SSP}
SSP has been studied extensively recently. The works by \cite{tarbouriech2021stochastic} and \cite{chen2021implicit} have achieved a near-optimal regret bound $\tilde{O}(\sqrt{\Vinit \Bstar SAK} + \Bstar S^2A)$ when the knowledge on $\Bstar$ is available to the learner.\footnote{The upper bound reported in \cite{tarbouriech2021stochastic} and \cite{chen2021implicit} is of order $\Bstar\sqrt{SAK} + \Bstar S^2A$, which is larger than what we report here. This is simply because 
in their analysis, they upper bound $\Vinit$ by $\Bstar$. We redo their analysis and report their refined dependence on $\Vinit$ here. Similarly, the lower bound obtained in \cite{rosenberg2020near} is $\Bstar\sqrt{SAK}$ because in their lower bound construction, they only consider instances where $\Vinit$ and $\Bstar$ are of the same order. Hence, the upper bound we obtain here does not violate their lower bound. } When such knowledge is unavailable, they design a way to adjust $B$ on the fly, and achieve a regret bound of $\tilde{O}(\sqrt{\Vinit \Bstar SAK} + \Bstar^3 S^3A)$. 

In this section, we improve their results, showing that for SSP, a bound of $\tilde{O}(\sqrt{\Vinit \Bstar SAK} + \Bstar S^2A)$ is possible even without prior knowledge on $\Bstar$. This is a contrast with \pref{thm: reward setting lower bound for agnostic alg} and \pref{corr: reward setting lower bound for agnostic alg}, which show that for SLP, without prior knowledge on $\Bstar$, this bound is unachievable. 

Our algorithm is \pref{alg: cost case parameter free}. It is almost identical to \pref{alg: general case} with three main differences. First, $B$ is no longer an input parameter in \pref{alg: cost case parameter free}, but is an internal variable updated on the fly. Second, the initial $Q(s,a)$, $V(s)$ values are initialized as $0$ in \pref{alg: cost case parameter free}, instead of $B$, which is natural since $Q^\star(s,a), V^\star(s)\leq 0$ for SSP. Third, in \pref{line:start of while}--\pref{line:end of while}, the algorithm tries to find a large enough $B$ to set the bonus term $b_t$, so that the resulted $|Q(s_t,a_t)|$ is upper bounded by $B$. 

The operation in \pref{line:start of while}--\pref{line:end of while} and the corresponding analysis are the keys to our improvement. Recall that we hope to always have $0\geq Q_t(s,a)\geq Q^\star(s,a)$ to ensure optimism. Also, we want $B$ to be not too much larger than $\Bstar$ to avoid regret overhead. Let's assume $0\geq Q_t(s,a)\geq Q^\star(s,a)$ for all $s,a$ at time $t$. In \pref{line: attempt Qtmp}, we attempt to calculate $Q_{t+1}(s_t,a_t)$ (denoted as $Q_{t+1}^\tmp(s_t,a_t)$ below). If $B\geq \Bstar$ holds, then since $r(s_t,a_t) + \bar{P}_t V_t + b_t \geq Q^\star(s_t,a_t)$ by the same argument as in the proof of \pref{lem: MVP} and $Q_t(s_t,a_t)\geq Q^\star(s_t,a_t)$ by assumption, we have $0\geq Q^\tmp_{t+1}(s_t,a_t)\geq Q^\star(s_t,a_t)\geq -\Bstar\geq -B$ by the definition of $Q_{t+1}^\tmp(s_t,a_t)$ in \pref{line: attempt Qtmp}. Thus, $B$ will not be increased in \pref{line:end of while}. In short, \emph{if optimism always holds} (i.e., $Q_t(s,a)\geq Q^\star(s,a)$), we will only increase $B$ in \pref{line:end of while} when $B < \Bstar$, and thus $B<2\Bstar$ all the time. 

\setcounter{AlgoLine}{0}
\begin{algorithm}[t]
    \caption{VI-SSP for unknown $\Bstar$}\label{alg: cost case parameter free}
\nl    \textbf{input}: $0<\delta<1$, sufficiently large universal constants $c_1, c_2$ that satisfy $2c_1^2 \leq c_2$.  \\
\nl    \textbf{Initialize}: $B\leftarrow 1$, $t\leftarrow 0$, $s_1\leftarrow s_\init$. \\
\nl    For all $(s,a,s')$ where $s\neq g$, set
    \begin{align*}
        n(s,a,s')=n(s,a)\leftarrow 0, \quad Q(s,a)\leftarrow  0, \quad V(s)\leftarrow 0. 
    \end{align*} 
\nl    Set $V(g)\leftarrow 0$. \\
\nl    \For{$k=1,\ldots, K$}{
\nl        \While{\text{true}}{
\nl        $t \leftarrow t+1$ \\
\nl        \algComment{$Q_t(s,a), V_t(s), B_t$ are defined as the $Q(s,a), V(s), B$ at this point.} \\[0.1cm]
\nl        Take action $a_t=\argmax_a Q(s_t,a)$, receive reward $r(s_t,a_t)$, and transit to $s_{t}'$. \\
\nl        Update counters: $n_t\triangleq n(s_t,a_t)\leftarrow n(s_t,a_t)+1$, $n(s_t,a_t,s_t')\leftarrow n(s_t,a_t,s_t')+1$. \\
\nl        Define $\bar{P}_{t}(s')\triangleq   \frac{n(s_t,a_t,s')}{n_t}\ \forall s'$. \\
\nl        \While{true}{ \label{line:start of while}
\nl        Define $b_t\triangleq  \max\Big\{c_1\sqrt{\frac{\mathbb{V}(\bar{P}_t, V)\iota_t}{n_t}}, \frac{c_2B\iota_t}{n_t}\Big\}$, where $\iota_t = \ln(SA/\delta) + \ln\ln(Bn_t)$.  \\
\nl        $Q^\tmp(s_t,a_t)\leftarrow 
        \min\left\{ r(s_t,a_t) + \bar{P}_tV + b_t, Q(s_t,a_t) \right\}$  \label{line: attempt Qtmp}  \\
\nl        \lIf{$|Q^\tmp(s_t,a_t)|\leq B$}{
            $Q(s_t,a_t)\leftarrow Q^\tmp(s_t,a_t)$ and \textbf{break}
        }
\nl   $B\leftarrow 2B$.  \label{line:end of while} 
        }
\nl     \algComment{$b_t$ and $\iota_t$ are defined as the $b_t$ and $\iota_t$ at this point.}  \label{line: bt and iotat position} \\
\nl        $V(s_t)\leftarrow \max_a Q(s_t,a)$.\\
\nl        \lIf{$s_t'\neq g$}{ then $s_{t+1}\leftarrow s_t'$} 
\nl        \lElse{ $s_{t+1}\leftarrow s_\init$ and \textbf{break}}
        }
        
    }
\end{algorithm}
The question then is how to show that optimism holds along the way. Because we start from $B=1$ and only increases $B$ when we are sure that $B<\Bstar$, one might suspect that the bonus term $b_t$ defined through $B$ is insufficient at the beginning, and the optimism might fail. Because of this, \cite{tarbouriech2021stochastic} and \cite{chen2021implicit} bound the regret in the $B<\Bstar$ regime by a term linear in $K$. However, one key observation in the analysis is that the original purpose of the bonus term is to compensate the deviation of $(P_{s_t,a_t}-\bar{P}_t)V_t$, where $|V_t|\leq B$ by our algorithm. Since $V_t$ is history-dependent, a common trick in the analysis \citep{azar2017minimax, zhang2021reinforcement} is to replace $V_t$ by $V^\star$ and bound the deviation of $(P_{s_t,a_t}-\bar{P}_t)V^\star$ using Freedman's inequality, for which a bonus term defined through $\Bstar$ is required. To deal with our case, instead of replacing $V_t$ by $V^\star$, we replace it by $V_{[B]}^{\star}\triangleq \max\{-B, V^\star\}$ and use Freedman's inequality on $(P_{s_t,a_t}-\bar{P}_t)V_{[B]}^{\star}$, for which a bonus term defined through $B$ suffices. We can further connect $V^\star_{[B]}$ back to $V^\star$ using the property $V^\star_{[B]}\geq V^\star$. The details are provided in \pref{lem: MVP for cost}, where we show that with high probability, $Q_{t}(s,a)\geq Q^\star(s,a)$ holds for all $t,s,a$, even if $B$ is smaller than $\Bstar$ along the learning process. The formal guarantee of our algorithm is given by the following theorem, with proof deferred to \pref{app: upper bound SSP}.  
\begin{theorem}
\label{thm:ssp_main_result}
    \pref{alg: cost case parameter free} guarantees for SSP problems that with probability at least $1-O(\delta)$, $\Reg_K=\tilde{O}(\sqrt{\Vinit \Bstar SAK} + \Bstar S^2A)$. 
\end{theorem}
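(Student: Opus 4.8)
The plan is to prove \pref{thm:ssp_main_result} in three stages: (i) establish that optimism $Q_t(s,a)\ge Q^\star(s,a)$ holds throughout the run with high probability, \emph{despite} the internal variable $B$ possibly being much smaller than $\Bstar$ in early rounds; (ii) deduce from optimism that $B$ never exceeds $2\Bstar$; and (iii) run essentially the known-$\Bstar$ variance-based regret analysis, now with $B_t=\Theta(\Bstar)$, to recover the stated bound.

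The first stage (optimism) is the main obstacle and is exactly \pref{lem: MVP for cost}. The difficulty is that $b_t$ is built from the current $B$, which may lie far below $\Bstar$, so the standard step—replace $V_t$ by $V^\star$ and Freedman-bound $(P_{s_t,a_t}-\bar P_t)V^\star$—fails, since it needs a bonus scaling with the range of $V^\star$, i.e.\ with $\Bstar$. The fix is to work with the clipped optimal value $V^\star_{[B]}\triangleq\max\{-B,V^\star\}$, which obeys $|V^\star_{[B]}|\le B$ and $V^\star_{[B]}\ge V^\star$. I would run the monotone-value-propagation induction maintaining the invariant $V_t\ge V^\star$ (the values start at $0\ge V^\star$ and the $\min/\max$ updates only decrease them). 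Crucially, the algorithm enforces $|Q(s_t,a_t)|\le B$, hence $V_t\ge -B$, so together $V_t\ge\max\{-B,V^\star\}=V^\star_{[B]}$. Consequently $r(s_t,a_t)+\bar P_t V_t+b_t\ge r(s_t,a_t)+\bar P_t V^\star_{[B]}+b_t\ge r(s_t,a_t)+P_{s_t,a_t}V^\star_{[B]}\ge r(s_t,a_t)+P_{s_t,a_t}V^\star=Q^\star(s_t,a_t)$, where the middle inequality is a Freedman bound on $(P_{s_t,a_t}-\bar P_t)V^\star_{[B]}$ for which a $B$-range bonus suffices, and the last uses $V^\star_{[B]}\ge V^\star$. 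This closes the induction. Since $B$ ranges over the set $\{1,2,4,\dots\}\cap[1,2\Bstar]$, I would union-bound Freedman over these $O(\log\Bstar)$ clip levels to control the failure probability, and use the routine step relating the empirical variance $\mathbb{V}(\bar P_t,V)$ in $b_t$ to the true variance of $V^\star_{[B]}$ up to lower-order terms.

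The second stage conditions on the optimism event and bounds $B$ by the sign argument already sketched before the theorem. If at some step $B\ge\Bstar$, then the candidate $Q^\tmp(s_t,a_t)=\min\{r(s_t,a_t)+\bar P_tV+b_t,\,Q(s_t,a_t)\}$ satisfies $0\ge Q^\tmp(s_t,a_t)\ge Q^\star(s_t,a_t)\ge -\Bstar\ge -B$ (the first inequality because $Q(s_t,a_t)\le 0$, the second by optimism from stage one), so $|Q^\tmp(s_t,a_t)|\le B$ and the doubling loop at \pref{line: attempt Qtmp} exits without increasing $B$. Hence $B$ is increased only while $B<\Bstar$, giving $B\le 2\Bstar$ for the entire run; the inner loop always terminates after $O(\log_2\Bstar)$ doublings.

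The third stage is the regret accounting with $B_t=\Theta(\Bstar)$. I would first reduce the regret to the advantage sum via the performance-difference lemma, $\Reg_K\le\sum_{t=1}^T\big(V^\star(s_t)-Q^\star(s_t,a_t)\big)$ up to a martingale term controlled by Freedman. I then invoke the variance-based bound (\pref{lem: V*-Q*}), $\sum_t(V^\star(s_t)-Q^\star(s_t,a_t))\le\tilde O(\sqrt{SA\sum_t\mathbb{V}(P_{s_t,a_t},V^\star)}+\Bstar S^2A)$—here $B_t\le 2\Bstar$ is precisely what keeps the lower-order term at $\Bstar S^2A$ rather than a larger $BS^2A$—and control the variance sum with the cost-specific estimate (the SSP analog of \pref{lem: cost sm of var}), giving $\sum_t\mathbb{V}(P_{s_t,a_t},V^\star)\lesssim\Vinit\Bstar K$ up to lower-order terms. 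Solving the resulting self-bounding inequality for the advantage sum yields $\tilde O(\sqrt{\Vinit\Bstar SAK}+\Bstar S^2A)$. Adaptivity costs only a $\log\Bstar$ factor, absorbed in $\tilde O$, because the $O(\log_2\Bstar)$ bonus re-evaluations inside the doubling loop create no new episodes and every bonus entering the accounting is the final one, built from $B_t=\Theta(\Bstar)$.
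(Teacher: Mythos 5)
Your proposal matches the paper's proof essentially step for step: the clipped-value optimism induction (the paper's \pref{lem: MVP for cost}, using $V^\star_{[B]}=\max\{-B,V^\star\}$ together with the monotonicity property of \pref{lem: monotone}), the sign argument showing $B\le 2\Bstar$ on the optimism event, and the final accounting that reuses \pref{lem: V*-Q*} (valid because optimism, not $B\ge\Bstar$ itself, is what that lemma needs) together with \pref{lem: cost sm of var} and a self-bounding step in $\sum_t|r(s_t,a_t)|$. The only cosmetic difference is the union bound over doubling levels: restricting to levels in $[1,2\Bstar]$ as you state it is mildly circular (that range is only known once optimism is established), so you would need to prove optimism and $B\le 2\Bstar$ jointly by induction over time, whereas the paper sidesteps this by union-bounding over all levels $B=2^i$ with weights $\delta/(2i^2)$ and absorbing the cost into $\iota_t$.
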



\section{Conclusions and Open Problems}
In this work, we formulate the SP problem and give the first near-optimal regret bound for it. For special cases SLP and SSP, we further investigate the situation when the scale of the total reward $\Bstar$ is unknown. By improving previous adaptation results for SSP, and giving new lower bounds for SLP, we formally show a distinction between these two cases when $\Bstar$ is unknown. 

In the general case, although our algorithm achieves near-worst-case-optimal bounds in terms of $\Rinit$, there is still possibility of improving the bound using more refined quantities. We have ruled out possibility of $\Vinit, \Bstar$, and the possibility of $\Rstar$ when its value is unknown, but perhaps there are other candidate quantities. 
Further, there is a discrepancy in the analysis between the general SP / SLP setting and the SSP setting, i.e., while our result for the general case recovers that for the SLP case (up to logarithmic factors), it does not imply that for the SSP case. This also hints that $\Rinit$ does not always capture the true difficulty of every instance.

In general SP when $\Bstar$ is unknown, can we achieve the bound of order $\tilde{O}(\Rmax \sqrt{\poly(S,A)K})$, without any constant term that scales super-linearly with $\Rmax$? This would be analogous to the $\tilde{O}(\Bstar\sqrt{S^2AK})$ bound we get for SLP, but our technique there does not lead to this desired bound. 

Finally, it remains an open question to prove or disprove \citet{zhang2021reinforcement}'s conjecture about the lower-order term, which has direct consequences for our adaptivity result in SLP.  

\bibliography{alt2023-sample.bib}

\appendix

\section{Upper bounds for General Stochastic Path} \label{app: general upper}
\begin{definition}
    Let $Q_t, V_t$ be the $Q, V$ at the beginning of round $t$ (see the comments in \pref{alg: general case}). 
\end{definition}

\begin{definition}\label{def: tiliota}
    Define $\tiliota_{T,B,\delta}\triangleq (\ln(SA/\delta) + \ln\ln (BT))\times \ln T$. 
\end{definition}

\subsection{Optimism and regret decomposition}\label{app: optimism and regret decomp}

\begin{lemma}\label{lem: monotone}
    Define 
    \begin{align*}
        f(P, V, n, \iota)=PV  + \max\left\{c_1\sqrt{\frac{\mathbb{V}(P,V)\iota}{n}}, \frac{c_2 B\iota}{n}\right\}
    \end{align*}
    If $2c_1^2\leq c_2$ and $-B\leq V(\cdot)\leq B$, then $f(P, V, n, \iota)$ is increasing in $V$.  
\end{lemma}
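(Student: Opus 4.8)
The statement to establish is \emph{monotonicity in the vector argument}: if $V\le V'$ holds componentwise with both $V,V'\in[-B,B]^S$, then $f(P,V,n,\iota)\le f(P,V',n,\iota)$. The plan is to reduce this to a one-dimensional claim and verify it by differentiation, taking care of the nonsmoothness coming from the outer $\max$ and from the square root. First I would reduce to a single coordinate: one can pass from $V$ to $V'$ by raising one entry at a time while staying inside $[-B,B]^S$, and $f$ is continuous in $V$, so it suffices to fix a coordinate $i$ and all other entries and show that $f$ is nondecreasing in $x:=V(i)\in[-B,B]$. Writing $\sigma:=\sqrt{\mathbb{V}(P,V)}$, the map $x\mapsto f$ is continuous and piecewise smooth, smooth away from the finitely many points where the two arguments of the $\max$ coincide or where $\sigma=0$; hence it is enough to check that the derivative is nonnegative on each smooth piece.

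Next I would split according to which term attains the $\max$. Where the constant term wins, $f=PV+c_2B\iota/n$, so $\partial f/\partial x=P(i)\ge 0$ trivially, and the degeneracy at $\sigma=0$ causes no trouble since the constant term dominates there. Where the square-root term wins, using $\partial(PV)/\partial x=P(i)$ and $\partial\mathbb{V}(P,V)/\partial x=2P(i)(x-PV)$ gives
\[
\frac{\partial f}{\partial x}=P(i)\left[1+\frac{c_1(x-PV)\sqrt{\iota/n}}{\sigma}\right],
\]
so I need the bracket to be nonnegative. The crucial observation is that being in the square-root-dominant region is exactly what lower bounds $\sigma$: the condition $c_1\sigma\sqrt{\iota/n}\ge c_2B\iota/n$ forces $\sigma\ge (c_2B/c_1)\sqrt{\iota/n}$. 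Combining this with $|x-PV|\le 2B$ (both $x$ and the convex combination $PV$ lie in $[-B,B]$) yields
\[
\frac{c_1|x-PV|\sqrt{\iota/n}}{\sigma}\le\frac{2c_1B\sqrt{\iota/n}}{(c_2B/c_1)\sqrt{\iota/n}}=\frac{2c_1^2}{c_2}\le 1,
\]
the last step being precisely the hypothesis $2c_1^2\le c_2$. Hence the bracket is $\ge 0$, so $\partial f/\partial x\ge 0$ on this piece too, and continuity across the region boundary gives that $f$ is nondecreasing in $x$, completing the reduction.

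I expect the main obstacle to be conceptual rather than computational: the bonus term $c_1\sqrt{\mathbb{V}(P,V)}$ is by itself \emph{not} monotone in $V$ (raising a below-mean coordinate shrinks the variance and can decrease it), so the lemma is simply false without the outer $\max$ against the $c_2B\iota/n$ term. The real content is recognizing that $2c_1^2\le c_2$ is exactly the threshold guaranteeing that whenever the square-root term is active the variance is already large enough that the gain from $PV$ dominates any loss from the shrinking variance term; the kink of the $\max$ and the $\sigma=0$ degeneracy are then only mild technicalities dispatched by continuity.
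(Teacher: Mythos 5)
Your proposal is correct and follows essentially the same route as the paper's proof: differentiate in a single coordinate, note the derivative is trivially $P(i)\ge 0$ when the $c_2B\iota/n$ term is active, and in the square-root regime use the activation condition $c_1\sqrt{\mathbb{V}(P,V)\iota/n}\ge c_2B\iota/n$ together with $|V(i)-PV|\le 2B$ to bound the variance-derivative contribution by $\tfrac{2c_1^2}{c_2}P(i)\le P(i)$. Your extra care about the kink of the $\max$, the $\sigma=0$ degeneracy, and the coordinate-wise reduction makes explicit what the paper leaves implicit, but the key computation and the role of $2c_1^2\le c_2$ are identical.
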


\begin{proof}
    We compute the derivative of $f(P, V, n,\iota)$ over $V(s^\star)$: 
    \begin{align*}
        \frac{\partial f(P, V, n,\iota)}{\partial V(s^\star)} &= P(s^\star) 
        + \mathbf{1}\left\{c_1\sqrt{\frac{\mathbb{V}(P,V)\iota}{n}} >  \frac{c_2B\iota}{n}\right\}\times \frac{c_1 P(s^\star)(V(s^\star)-PV)\iota}{\sqrt{n\mathbb{V}(P,V)\iota}}\\
        &\geq P(s^\star) + \mathbf{1}\left\{c_1\sqrt{\frac{\mathbb{V}(P,V)\iota}{n}} >  \frac{c_2B\iota}{n}\right\}\times \frac{c_1^2P(s^\star)(V(s^\star)-PV)}{c_2B} \\
        &\geq P(s^\star) - \frac{2c_1^2}{c_2}P(s^\star) \\
        &\geq 0. 
    \end{align*}
\end{proof}

\begin{lemma}\label{lem: MVP}
    If $B\geq \Bstar$, then with probability at least $1-O(\delta)$, $Q_{t}(s,a)\geq Q^\star(s,a)$ for all $(s,a)$ and $t$. 
\end{lemma}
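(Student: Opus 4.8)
The plan is to prove \pref{lem: MVP} by induction on the time index $t$, showing that the event $\{Q_t(s,a)\ge Q^\star(s,a)\ \text{for all }(s,a)\}$ is preserved by the update in \pref{line: update Q_t}. For the base case, the initialization $Q(s,a)=B\ge\Bstar\ge V^\star(s)\ge Q^\star(s,a)$ gives optimism at $t=1$ deterministically, using $Q^\star(s,a)\le V^\star(s)=\max_{a'}Q^\star(s,a')\le\Bstar$. For the inductive step, observe that only the entry $(s_t,a_t)$ is modified, and its new value is the minimum of the old value $Q_t(s_t,a_t)$ — which is $\ge Q^\star(s_t,a_t)$ by hypothesis — and $r(s_t,a_t)+\bar{P}_t V_t+b_t$. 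It therefore suffices to show that this second argument also dominates $Q^\star(s_t,a_t)=r(s_t,a_t)+P_{s_t,a_t}V^\star$, i.e. that $\bar{P}_t V_t+b_t\ge P_{s_t,a_t}V^\star$ for every $t$ on a high-probability event.

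First I would replace the history-dependent estimate $V_t$ by the fixed vector $V^\star$. The induction hypothesis propagates to the value function, since $V_t(s)=\max_a Q_t(s,a)\ge\max_a Q^\star(s,a)=V^\star(s)$, and both vectors lie in the box $[-B,B]^{\calS}$: indeed $\|V^\star\|_\infty\le\Bstar\le B$, while $V_t\le B$ because every $Q$ entry starts at $B$ and only decreases under the min-update, and $V_t\ge V^\star\ge -B$. This is exactly the regime in which \pref{lem: monotone} applies, so the map $V\mapsto \bar{P}_t V+\max\{c_1\sqrt{\mathbb{V}(\bar{P}_t,V)\iota_t/n_t},\,c_2 B\iota_t/n_t\}$ is coordinatewise nondecreasing, giving $\bar{P}_t V_t+b_t\ge \bar{P}_t V^\star+\max\{c_1\sqrt{\mathbb{V}(\bar{P}_t,V^\star)\iota_t/n_t},\,c_2 B\iota_t/n_t\}$. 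It then remains only to verify $\max\{\cdots\}\ge (P_{s_t,a_t}-\bar{P}_t)V^\star$.

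This last inequality is a concentration statement, and it is the step I expect to be the crux. For each fixed $(s,a)$, the next-state samples feeding $\bar{P}_t$ are i.i.d.\ draws from $P_{s,a}$ and $V^\star$ is a \emph{fixed} vector with $\|V^\star\|_\infty\le\Bstar\le B$, so $(P_{s,a}-\bar{P}_t)V^\star$ is controlled by an empirical-Bernstein inequality: its magnitude is at most $O\big(\sqrt{\mathbb{V}(\bar{P}_t,V^\star)\iota_t/n_t}+B\iota_t/n_t\big)$, precisely the two terms from which $b_t$ is built, with $c_1,c_2$ chosen large enough to absorb the absolute constants. The technical subtleties are (i) making the bound hold uniformly over all counts $n_t$ (equivalently all times $t$) for a single $(s,a)$, which requires a time-uniform / peeling argument and accounts for the $\ln\ln(Bn_t)$ factor in $\iota_t$; (ii) union-bounding over the $SA$ pairs, yielding the $\ln(SA/\delta)$ term; and (iii) using the \emph{empirical} variance $\mathbb{V}(\bar{P}_t,V^\star)$ rather than the population variance, which is exactly what an empirical-Bernstein (as opposed to plain Bernstein) inequality provides. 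Combining (i)–(iii) on an event of probability $1-O(\delta)$ closes the induction and establishes the claim for all $t$ simultaneously.
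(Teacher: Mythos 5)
Your proposal is correct and follows essentially the same route as the paper's proof: induction on $t$, using the monotonicity of the bonus-augmented map (\pref{lem: monotone}) to replace $V_t$ by $V^\star$, and a time-uniform empirical-Bernstein bound (the paper's \pref{lem: lnln concentration}) with a union bound over $(s,a)$ to show the bonus dominates $(P_{s_t,a_t}-\bar{P}_t)V^\star$. The only cosmetic difference is that the paper treats the small-count regime via an explicit remark (when $n_t < 4\iota_t$ the term $c_2 B\iota_t/n_t$ alone guarantees optimism), which your observation that the bound holds trivially for small $n_t$ after absorbing constants covers equally well.
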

\begin{proof}
    We use induction to prove this. When $t=1$, $Q_1(s,a)=B\geq \Bstar\geq Q^\star(s,a)$ for all $s,a$. Suppose that $Q_t(s,a)\geq Q^\star(s,a)$ for all for all $s,a$ (which implies $V_t(s)\geq V^\star(s)$ for all $s$). Since $Q_{t+1}$ and $Q_t$ only differ in the entry $(s_t,a_t)$, we only need to check $Q_{t+1}(s_t,a_t)\geq Q^\star(s_t,a_t)$. This can be seen from the calculation below: 
    \begin{align*}
        Q_{t+1}(s_t,a_t)
        &=r(s_t,a_t) + \bar{P}_{t}V_t + b_t \\
        &= r(s_t,a_t) + \bar{P}_{t}V_t + \max\left\{c_1\sqrt{\frac{\mathbb{V}(\bar{P}_{t}, V_t)\iota_t}{n_{t}}}, \frac{c_2B\iota_t}{n_{t}}\right\}  \tag{by the induction hypothesis $V_t(\cdot)\geq V^\star(\cdot)$ and the monotone property \pref{lem: monotone}} \\
        &= r(s_t,a_t) + \bar{P}_{t}V^\star + \max\left\{c_1\sqrt{\frac{\mathbb{V}(\bar{P}_{t}, V^\star)\iota_t}{n_{t}}}, \frac{c_2B\iota_t}{n_{t}}\right\} \\
        &\geq r(s_t,a_t) + P_{s_t,a_t}V^\star  \tag{By \pref{lem: lnln concentration}}\\
        &= Q^\star(s_t,a_t).  
    \end{align*}
    Note that we can apply \pref{lem: lnln concentration} only for $n_t \geq  4\iota_t$. If $n_t<4\iota_t$, then the bias term itself is bounded by $\frac{c_2B}{4}\geq B$ which ensures optimism.
    This finishes the induction. 
\end{proof}
\begin{lemma}\label{lem: V*-Q*}
    Suppose that $B\geq \Bstar$. With probability at least $1-O(\delta)$, 
    \begin{align*}
        \sum_{t=1}^{T} \left(V^\star(s_t) - Q^\star(s_t,a_t) \right) 
        &\leq  O\left( \sqrt{SA\sum_{t=1}^{T} \mathbb{V}(P_t, V^\star)\tilde{\iota}_{T,B,\delta}} + B S^2 A\tilde{\iota}_{T,B,\delta}\right), 
    \end{align*}
    where $\tiliota_{T,B,\delta}$ is a logarithmic term defined in \pref{def: tiliota}. 
\end{lemma}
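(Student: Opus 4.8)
**Proof proposal for Lemma~\ref{lem: V*-Q*}.**

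The plan is to bound the cumulative advantage $\sum_{t=1}^T (V^\star(s_t) - Q^\star(s_t,a_t))$ by telescoping the per-step gap between the optimistic estimate $Q_t(s_t,a_t)$ and the true $Q^\star(s_t,a_t)$, and then controlling the accumulated bonus terms via a variance-weighted concentration argument. First I would use optimism (\pref{lem: MVP}), which gives $V_t(s)\geq V^\star(s)$ for all $s,t$ with high probability, so that $V^\star(s_t) \leq V_t(s_t) = \max_a Q_t(s_t,a) = Q_t(s_t,a_t)$ by the greedy choice of $a_t$. Thus $V^\star(s_t) - Q^\star(s_t,a_t) \leq Q_t(s_t,a_t) - Q^\star(s_t,a_t)$, and it suffices to bound $\sum_t \big(Q_t(s_t,a_t) - Q^\star(s_t,a_t)\big)$.

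Next I would expand the per-step gap using the update rule. Whenever the $\min$ in \pref{line: update Q_t} is active, $Q_t(s_t,a_t) \leq r(s_t,a_t) + \bar{P}_t V_{t'} + b_t$ at the time $t'\le t$ of the last update (here $V_{t'}\ge V^\star$ again by optimism); subtracting $Q^\star(s_t,a_t) = r(s_t,a_t) + P_{s_t,a_t}V^\star$ yields a leading term $(\bar P_t - P_{s_t,a_t})V^\star$, a term $\bar P_t(V_{t'}-V^\star)\ge 0$ that propagates the gap forward through the recursion, and the bonus $b_t$. The forward-propagation term must be handled by the monotone-value-propagation bookkeeping of \cite{zhang2021reinforcement}, which effectively charges each state's value decrease at most $S$ times per state-action pair and contributes the lower-order $BS^2A$ scale. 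The empirical-transition deviation $(\bar P_t - P_{s_t,a_t})V^\star$ I would convert to the true-transition variance by a Bernstein/Freedman bound (\pref{lem: lnln concentration}), replacing $\mathbb{V}(\bar P_t,V_t)$ inside $b_t$ with $\mathbb{V}(P_{s_t,a_t},V^\star)$ up to lower-order corrections.

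I would then collect the dominant contribution, which is $\sum_t b_t$. Using the pigeonhole/visitation counting over $(s,a)$ pairs — $\sum_t 1/\sqrt{n_t} = \tilde O(\sqrt{SAT})$ and $\sum_t 1/n_t = \tilde O(SA)$ — together with Cauchy--Schwarz applied to the variance-weighted bonus $\sum_t c_1\sqrt{\mathbb{V}(\bar P_t,V_t)\iota_t/n_t}$, gives
\begin{align*}
\sum_{t=1}^T b_t \;\le\; O\!\left(\sqrt{SA \textstyle\sum_{t=1}^T \mathbb{V}(P_t,V^\star)\,\tiliota_{T,B,\delta}} + BS^2A\,\tiliota_{T,B,\delta}\right),
\end{align*}
where the second term absorbs both the $c_2 B\iota_t/n_t$ bias bonuses and the value-propagation overhead. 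Combining the three pieces closes the bound. The main obstacle I expect is the self-referential coupling: the bonus uses the empirical variance $\mathbb{V}(\bar P_t, V_t)$ of the \emph{history-dependent} estimate $V_t$, whereas the target bound is in terms of the \emph{true} variance $\mathbb{V}(P_t,V^\star)$; rigorously exchanging these — controlling $|\mathbb{V}(\bar P_t,V_t) - \mathbb{V}(P_t,V^\star)|$ while keeping the error in the lower-order $BS^2A$ term and not inflating the leading $\sqrt{\cdot}$ term — is the delicate step, and it is exactly where the monotonicity $V_t\ge V^\star$ and the $2c_1^2\le c_2$ bonus design (\pref{lem: monotone}) are used to keep optimism intact after each substitution.
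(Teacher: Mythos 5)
Your proposal has the right high-level skeleton (optimism, decomposition of the update, pigeonhole plus Cauchy--Schwarz on the bonuses, and you correctly flag the empirical-vs-true variance exchange as delicate), but it mishandles the one term that actually dominates, and it is missing the self-bounding variance argument that the paper needs to close the bound. First, the forward-propagation term. After subtracting $Q^\star(s_t,a_t)=r(s_t,a_t)+P_tV^\star$ from the update, the piece $P_t(V_t-V^\star)$ (your ``$\bar P_t(V_{t'}-V^\star)\ge 0$'') is \emph{not} lower order and is not disposed of by any ``monotone-value-propagation bookkeeping'' charging $BS^2A$: it is of the same size as the quantity being bounded. The paper handles it by a cancellation, not a charge: it writes $P_t(V_t-V^\star)=\mathbf{1}_{s_t'}(V_t-V^\star)+(P_t-\mathbf{1}_{s_t'})(V_t-V^\star)$, shifts the first piece to $\sum_t\mathbf{1}_{s_t}(V_t-V^\star)+O(BS)$ via the monotone decrease of $V$, and then uses the greedy identity $V_t(s_t)=Q_t(s_t,a_t)$ so that $\sum_t(V_t(s_t)-V^\star(s_t))$ cancels against the left-hand side $\sum_t(Q_t(s_t,a_t)-Q^\star(s_t,a_t))$, leaving exactly the advantage sum $\sum_t(V^\star(s_t)-Q^\star(s_t,a_t))$. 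This also shows why your opening reduction (``it suffices to bound $\sum_t(Q_t(s_t,a_t)-Q^\star(s_t,a_t))$'') is a dead end: that larger sum contains $\sum_t(V_t(s_t)-V^\star(s_t))$, which is \emph{not} bounded by the target right-hand side; only the difference is. Relatedly, \pref{lem: monotone} and the condition $2c_1^2\le c_2$ are used solely to preserve optimism (\pref{lem: MVP}); the telescoping decrease of $Q$ contributes only $O(BSA)$, and the $S^2$ in the lower-order term comes from elsewhere.

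Second, the variance exchange is not ``lower-order corrections.'' The error terms $\sum_t(\bar P_t-P_t)(V_t-V^\star)$, the martingale $(P_t-\mathbf{1}_{s_t'})(V_t-V^\star)$, and the bonus sum all produce a leading-order contribution of the form $\sqrt{S^2A\sum_t\mathbb{V}(P_t,V_t-V^\star)\,\tiliota_{T,B,\delta}}$, and since $V_t$ is history-dependent, even getting this requires a Freedman inequality uniform over a ball of directions (\pref{lem: freedman over ball}), with a union bound over $(s,a)$ -- monotonicity $V_t\ge V^\star$ does not substitute for this. To keep the final constant at $\sqrt{SA}$ rather than $\sqrt{S^2A}$, the paper then needs the genuinely separate self-bounding step (\pref{lem: sum of variance term 1}, proved via \pref{lem: X lemma}):
\begin{align*}
\sum_{t=1}^T\mathbb{V}(P_t,V_t-V^\star)=O\left(\frac{1}{S}\sum_{t=1}^T\mathbb{V}(P_t,V^\star)+B^2S^2A\,\tiliota_{T,B,\delta}\right),
\end{align*}
where the $1/S$ gain is essential. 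Your proposal neither identifies this lemma nor supplies a substitute, so as written the argument would terminate with a $\sqrt{S^2A\sum_t\mathbb{V}(P_t,V^\star)}$ leading term (or, worse, an uncontrolled $\mathbb{V}(P_t,V_t-V^\star)$ term), not the claimed bound.
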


\begin{proof}
    \allowdisplaybreaks
    Below, we denote $P_t\triangleq P_{s_t,a_t}$.
    \begin{align*}
        &\sum_{t=1}^T \left(Q_t(s_t,a_t) - Q^\star(s_t,a_t) \right) \\
        &=\sum_{t=1}^T \left(Q_{t+1}(s_t,a_t) - Q^\star(s_t,a_t) \right) + \sum_{t=1}^T \left(Q_t(s_t,a_t) - Q_{t+1}(s_t,a_t)\right) \\
        &\leq \sum_{t=1}^T \left(\bar{P}_t V_t - P_t V^\star \right) + \sum_{t=1}^T b_t + \sum_{t=1}^T \sum_{s,a} (Q_t(s,a)-Q_{t+1}(s,a)) \\
        &\leq \sum_{t=1}^T P_t(V_t - V^\star) + \sum_{t=1}^T (\bar{P}_t - P_t)V^\star + \sum_{t=1}^T (\bar{P}_t - P_t)(V_t - V^\star) + \sum_{t=1}^T b_t + O(BSA)\\
        &\leq \underbrace{\sum_{t=1}^T  \mathbf{1}_{s_t'}(V_t - V^\star)}_{\term_1} + \underbrace{\sum_{t=1}^T (P_t-\mathbf{1}_{s_t'})(V_t - V^\star)}_{\term_2}
       + \underbrace{\sum_{t=1}^T ( \bar{P}_t - P_t)V^\star}_{\term_3} + \underbrace{\sum_{t=1}^T (\bar{P}_t - P_t )(V_t - V^\star)}_{\term_4} \\
       &\qquad \qquad + \underbrace{\sum_{t=1}^T b_t}_{\term_5}+ O(BSA)
    \end{align*}
    where the first inequality is because $Q_{t+1}(s_t,a_t)\leq r(s_t,a_t) + \bar{P}_tV_t + b_t$ and $Q^\star(s_t,a_t)=r(s_t,a_t) + P_tV^\star$, and that $Q_t(s,a)\geq Q_{t+1}(s,a)$. Below, we bound the individual terms. 
    \allowdisplaybreaks
    \begin{align}
        \term_1&=\sum_{t=1}^T  \mathbf{1}_{s_t'}(V_t - V^\star) \nonumber \\
        &\leq \sum_{t=1}^T  \mathbf{1}_{s_{t+1}}(V_t - V^\star)  \tag{$V_t(g)-V^\star(g)=0$ and $V_t(s)-V^\star(s)\geq 0$ by \pref{lem: MVP}} \\ 
        &\leq \sum_{t=1}^T  \mathbf{1}_{s_t}(V_t - V^\star) + \sum_{t=1}^T (V_{t}(s_{t+1}) - V_{t}(s_t)) + \sum_{t=1}^T  (V^\star(s_t) - V^\star(s_{t+1}))  \nonumber \\
        &\leq \sum_{t=1}^T  \mathbf{1}_{s_t}(V_t - V^\star) + \sum_{t=2}^T\sum_s (V_{t-1}(s) - V_t(s))  + O(B) \tag{$V_t$ decreases with $t$}  \nonumber \\
        &\leq \sum_{t=1}^T  \mathbf{1}_{s_t}(V_t - V^\star) +  O(BS).   \label{eq: term1 bound}
    \end{align}
    We have $(P_t-\mathbf{1}_{s_t'})(V_t-V^\star)$ conditioned on step $t$ is a zero mean random variable bounded in $[-B,B]$.
    By \pref{lem: freedman}, with probability at least $1-O(\delta)$, we have
    \begin{align*}
        \term_2 &= \sum_{t=1}^T (P_t-\mathbf{1}_{s_t'})(V_t - V^\star) = O\left(\sqrt{\sum_{t=1}^T \mathbb{V}(P_t, V_t-V^\star)\iota_t} + B\iota_t \right)\,.
    \end{align*}
    We have $(\bar P_t-P_t)V^\star = \frac{1}{n_t}\sum_{r: (s_r,a_r)=(s_t,a_t)}(\mathbf{1}_{s_r'}-P_r)V^\star$, which again by  \pref{lem: freedman} is bounded for a fixed state, action pair simultaneously over all time-steps. Via union bound over all states and actions, we have with probability $1-\mathcal{O}(\delta)$
    \begin{align}
        \term_3 &= \sum_{t=1}^T  (\bar{P}_t - P_t) V^\star = O\left(\sum_{t=1}^T \left(\sqrt{\frac{\mathbb{V}(P_t, V^\star)\iota_t}{n_t}} + \frac{\Bstar\iota_t}{n_t}\right)\right)\,. \label{eq: (P-P)V*}  
\end{align}
For the next term, we use a union bound over all state action pairs and apply \pref{lem: freedman over ball}, to obtain with probability $1-O(\delta)$ 
\begin{align}
        \term_4 &= \sum_{t=1}^T  (\bar{P}_t - P_t) (V_t-V^\star) = O\left(\sum_{t=1}^T \left(\sqrt{\frac{\mathbb{V}(P_t, V_t - V^\star)\iota_t}{n_t}} + \frac{B \iota_t}{n_t}\right)\right).  \label{eq: (P-P)(Vt-V*)}
    \end{align}
    Next, using \pref{lem: freedman over ball} again, we have with probability $1-O(\delta)$
    \begin{align*}
        \mathbb{V}(\bar P_t, V_t) &= \bar P_t (V_t-\bar P_t V_t)^2\\ &\leq \bar P_t (V_t- P_t V_t)^2\\
&= \mathbb{V}(P_t, V_t) +(\bar P_t-P_t) (V_t- P_t V_t)^2 \\
&\leq \mathbb{V}(P_t, V_t) +2B|(\bar P_t-P_t) (V_t- P_t V_t)|\\
&\leq \mathbb{V}(P_t, V_t) +O\left(B\sqrt{S\frac{\mathbb{V}(P_t,V_t)\iota_t}{n_t}} + \frac{ SB^2\iota_t}{n_t}\right)\\
&\leq O\left(\mathbb{V}(P_t, V_t)+  \frac{ SB^2\iota_t}{n_t}\right)\,.\tag{AM-GM inequality}
\end{align*}
    By the definition of $b_t$, with probability at least $1-O(\delta)$, 
    \begin{align*}
        \term_5 = \sum_{t=1}^T b_t &= O\left(\sum_{t=1}^T \left(\sqrt{\frac{\mathbb{V}(\bar{P}_t, V_t)\iota_t}{n_t}} + \frac{B\iota_t}{n_t}\right)\right) \\ 
        &= O\left(\sum_{t=1}^T \left(\sqrt{\frac{\mathbb{V}(P_t, V_t)\iota_t}{n_t}} + \frac{B\sqrt{S}\iota_t}{n_t}\right)\right)   
        \\  
        &= O\left(\sum_{t=1}^T \left(\sqrt{\frac{\mathbb{V}(P_t, V^\star)\iota_t}{n_t}} + \sqrt{\frac{\mathbb{V}(P_t, V_t - V^\star)\iota_t}{n_t}} + \frac{B\sqrt{S}\iota_t}{n_t}\right)\right)
    \end{align*}
    Collecting terms and using Cauchy-Schwarz, we get 
    \begin{align*}
        &\sum_{t=1}^T \left(Q_t(s_t,a_t) - Q^\star(s_t,a_t) \right) \leq \sum_{t=1}^T (V_t(s_t) - V^\star(s_t)) \\
        &\qquad + O\left( \sqrt{SA\sum_{t=1}^T \mathbb{V}(P_t, V^\star)\tilde{\iota}_{T,B,\delta}} + \sqrt{S^2A\sum_{t=1}^T  \mathbb{V}(P_t, V_t - V^\star)\tilde{\iota}_{T,B,\delta}}  + B S^2 A\tilde{\iota}_{T,B,\delta}\right). 
    \end{align*}
    We further invoke \pref{lem: sum of variance term 1} and bound the last expression by 
    \begin{align*}
        \sum_{t=1}^T (V_t(s_t) - V^\star(s_t)) + O\left( \sqrt{SA\sum_{t=1}^T \mathbb{V}(P_t, V^\star)\tilde{\iota}_{T,B,\delta}} + B S^2 A\tilde{\iota}_{T,B,\delta}\right). 
    \end{align*}
    Finally, noticing that $Q_t(s_t,a_t)=V_t(s_t)$ by the choice of $a_t$ finishes the proof. 
\end{proof}

\begin{lemma}\label{lem: sum of variance term 1}
With probability at least $1-O(\delta)$, 
\begin{align*}
    \sum_{t=1}^T  \mathbb{V}(P_t, V_t - V^\star) 
    &= O\left(\frac{1}{S}\sum_{t=1}^T  \mathbb{V}(P_t, V^\star) + B^2 S^2 A\tilde{\iota}_{T,B,\delta} \right).  
\end{align*}  
\end{lemma}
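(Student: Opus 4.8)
The plan is to control the sum of \emph{squared} errors, so that the smallness of the overestimation $V_t-V^\star$ is not thrown away. Write $X\triangleq \sum_{t=1}^T \mathbb{V}(P_t,V_t-V^\star)$ and $Z\triangleq \sum_{t=1}^T \mathbb{V}(P_t,V^\star)$. First I would pass from variance to the second moment, $\mathbb{V}(P_t,V_t-V^\star)\le P_t(V_t-V^\star)^2$, which holds term by term. By \pref{lem: MVP} optimism gives $0\le V_t(s)-V^\star(s)\le 2B$ for all $s$ (both values lie in $[-B,B]$) and $(V_t-V^\star)(g)=0$; moreover the iterates $V_t$ are non-increasing in $t$ (the $\min$ in \pref{line: update Q_t} together with \pref{lem: monotone}). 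These two facts are exactly what a telescoping argument needs.

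Next I would telescope $\sum_t P_t(V_t-V^\star)^2$ along the realized trajectory. Splitting $P_t(V_t-V^\star)^2=\mathbf{1}_{s_t'}(V_t-V^\star)^2+(P_t-\mathbf{1}_{s_t'})(V_t-V^\star)^2$, the first piece equals $\sum_{t:\,s_t'\neq g}(V_t-V^\star)^2(s_{t+1})$, which by monotonicity is at most the on-policy squared sum $\Phi\triangleq\sum_{t}(V_t(s_t)-V^\star(s_t))^2$ up to per-state monotone corrections bounded by $O(B^2S)$ (each decrease of $V$ telescopes and $|V_t+V_{t+1}-2V^\star|\le 4B$), the boundary contributions at $g$ vanishing. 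The second piece is a bounded martingale whose conditional variance is $\le 4B^2\,P_t(V_t-V^\star)^2$, so \pref{lem: freedman} bounds it by $O\!\big(B\sqrt{(\sum_t P_t(V_t-V^\star)^2)\,\tiliota_{T,B,\delta}}+B^2\tiliota_{T,B,\delta}\big)$; this is self-referential in the left-hand side, and one AM--GM absorption yields $X\le O\big(\Phi+B^2S\,\tiliota_{T,B,\delta}\big)$.

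The crux is then to bound $\Phi$ by $O\big(\tfrac1S Z+B^2S^2A\,\tiliota_{T,B,\delta}\big)$. Here I would bound $V_t(s_t)-V^\star(s_t)\le Q_t(s_t,a_t)-Q^\star(s_t,a_t)$ and expand the optimistic Bellman error exactly as in the proof of \pref{lem: V*-Q*}: it decomposes into the empirical-transition deviation of $V^\star$ (whose accumulated square is governed by \pref{lem: freedman} and the variance sum $Z$), the deviation of the error $V_t-V^\star$ (which, via \pref{lem: freedman over ball} and a union bound over the $S$ successor states, reintroduces $X$ and is handled self-referentially), and the bonus $b_t$, whose square is again controlled by $\mathbb{V}(\bar P_t,V_t)$ and hence by $Z$ and $X$. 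Collecting everything, I expect to reach a self-bounding inequality of the shape
\[
X \;\le\; c\,B\sqrt{SA\,Z\,\tiliota_{T,B,\delta}}\;+\;c\,B\sqrt{SA\,X\,\tiliota_{T,B,\delta}}\;+\;c\,B^2S^2A\,\tiliota_{T,B,\delta}.
\]
The self-term $B\sqrt{SA\,X\,\tiliota_{T,B,\delta}}$ is absorbed into $\tfrac12 X$ by AM--GM (leaving $O(B^2SA\,\tiliota_{T,B,\delta})$), and the cross term is split by the \emph{weighted} AM--GM $B\sqrt{SA\,Z\,\tiliota_{T,B,\delta}}=\sqrt{(SB^2A\,\tiliota_{T,B,\delta})\cdot(Z/S)}\le \tfrac1{2S}Z+\tfrac12 B^2S^2A\,\tiliota_{T,B,\delta}$, which is precisely where the factor $\tfrac1S$ in the statement comes from.

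The main obstacle is this last step: the estimate for $\Phi$ is genuinely self-referential, since both the error-deviation and the bonus feed $X$ back in, so the argument only closes after all $X$-dependent terms are gathered and a constant fraction of $X$ is cancelled from the left. Getting the coefficient exactly $\tfrac1S$ (rather than a harmless $O(1)$) is essential and forces the AM--GM weight to equal $1/S$, so that the downstream $\sqrt{S^2A\,X\,\tiliota_{T,B,\delta}}$ appearing in \pref{lem: V*-Q*} collapses to the leading term $\sqrt{SA\,Z\,\tiliota_{T,B,\delta}}$. The remaining bookkeeping --- the monotone corrections, the martingale tails, and the first-visit initialization that produces the $B^2S^2A$ lower-order term --- is routine.
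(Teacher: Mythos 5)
The reduction in your second step is where the argument breaks. By replacing $\mathbb{V}(P_t,V_t-V^\star)$ with the second moment $P_t(V_t-V^\star)^2$ you discard the subtracted term $(P_t(V_t-V^\star))^2$, and after telescoping you are left with the raw on-policy squared error $\Phi=\sum_t(V_t(s_t)-V^\star(s_t))^2$. The paper instead invokes \pref{lem: X lemma} with $X_t=V_t-V^\star$, whose telescoping \emph{retains} that subtraction, so what survives is the difference of squares $X_t(s_t)^2-(P_tX_t)^2=(X_t(s_t)+P_tX_t)(X_t(s_t)-P_tX_t)$, i.e.\ $O(B)$ times the one-step residual $|V_t(s_t)-V^\star(s_t)-P_t(V_t-V^\star)|$, which is small per step (it is $O(|(\bar P_t-P_t)V_t|+b_t)$ by optimism and the update rule); the remainder is then the Freedman\,/\,\pref{lem: freedman over ball} machinery, the self-bounding absorption, and the weighted AM--GM with weight $1/S$ that you correctly describe. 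Your $\Phi$ has no such subtraction, and your intermediate claim $\Phi\le O\big(\tfrac1S\sum_t\mathbb{V}(P_t,V^\star)+B^2S^2A\,\tiliota_{T,B,\delta}\big)$ is in fact false: take a single state $s$ with $r\equiv 0$, $P(s\,|\,s,a)=1-1/H$ and $P(g\,|\,s,a)=1/H$. Then $V^\star\equiv 0$ and $\sum_t\mathbb{V}(P_t,V^\star)=0$, but each update contracts $V(s)$ from its initial value $B$ only by a factor $(1-1/H)$ plus a bonus, so the learner spends $\Omega(H)$ steps with $V_t(s_t)\ge B/e$ and $\Phi=\Omega(B^2H)$, growing linearly in the episode length while the allowed budget depends on $T$ only through $\tiliota_{T,B,\delta}$. (The lemma itself survives this instance because the residual $|V_t(s_t)-P_tV_t|\approx V_t(s)/H$ is tiny.)

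The repair you sketch for bounding $\Phi$ cannot close this hole. Expanding $V_t(s_t)-V^\star(s_t)\le Q_t(s_t,a_t)-Q^\star(s_t,a_t)\le P_t(V_t-V^\star)+(\bar P_t-P_t)V^\star+(\bar P_t-P_t)(V_t-V^\star)+b_t$ only \emph{passes the error along}: up to a martingale difference and the $O(B^2S)$ monotone correction, $P_t(V_t-V^\star)$ is $V_{t+1}(s_{t+1})-V^\star(s_{t+1})$, so summing the expansion makes $\sum_t(V_t(s_t)-V^\star(s_t))$ appear on both sides with coefficient one and cancel. That is exactly what happens in \pref{lem: V*-Q*}: its proof bounds the advantage sum $\sum_t(V^\star(s_t)-Q^\star(s_t,a_t))$ precisely because the term $\sum_t(V_t(s_t)-V^\star(s_t))$ is cancelled, never bounded. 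To bound $\sum_t\delta_t$ or $\sum_t\delta_t^2$ directly you would have to unroll the recursion to the end of each episode, paying an episode-length factor --- the horizon dependence this analysis is built to avoid, and the source of the $\Omega(B^2H)$ blow-up above. Your endgame (the self-bounding inequality in $X$, its absorption by AM--GM, and the $1/S$ weighting that produces the stated leading term) does match the paper's finish, but it must be fed the residual-based quantity produced by \pref{lem: X lemma}, not $\Phi$.
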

\allowdisplaybreaks
\begin{proof}
    Using \pref{lem: X lemma} with $X_t = V_t - V^\star$, we get
    \begin{align*}
        &\sum_{t=1}^T  \mathbb{V}(P_t, V_t - V^\star) \\ 
        &= O\left(B\sum_{t=1}^T |V_t(s_t) - V^\star(s_t) - P_t(V_t - V^\star)| + B\sum_{t=1}^T\sum_s|V_t(s) - V_{t+1}(s)| + B^2\ln(1/\delta) \right) \\
        &= O\left(B\sum_{t=1}^T |(\bar{P}_t - P_t)V_t| + B\sum_{t=1}^T b_t + B^2S\ln(1/\delta) \right) \\
        &\leq O\left( B\sqrt{SA\sum_{t=1}^T \mathbb{V}(P_t, V^\star)\tilde{\iota}_{T,B,\delta}} + B\sqrt{S^2A\sum_{t=1}^T  \mathbb{V}(P_t,  V_t - V^\star)\tilde{\iota}_{T,B,\delta}}  + B^2 S^2 A\tilde{\iota}_{T,B,\delta} \right). \tag{by the same argument as in \pref{eq: (P-P)V*} and \pref{eq: (P-P)(Vt-V*)}}
    \end{align*}
    Solving for $\sum_{t=1}^T \mathbb{V}(P_t,V_t-V^\star)$, we get 
    \begin{align*}
        \sum_{t=1}^T \mathbb{V}(P_t,V_t-V^\star) 
        &= O\left(B\sqrt{SA\sum_{t=1}^T \mathbb{V}(P_t, V^\star)\tilde{\iota}_{T,B,\delta}} + B^2S^2A\tilde{\iota}_T\right) \\
        &= O\left(\frac{1}{S}\sum_{t=1}^T \mathbb{V}(P_t, V^\star) + B^2S^2A\tilde{\iota}_{T,B,\delta}\right).    \tag{by AM-GM}
    \end{align*}
\end{proof}

    
   

\subsection{Bounding the sum of variance}
In \pref{app: optimism and regret decomp}, we have already shown that 
\begin{align*}
    \sum_{t=1}^T (V^\star(s_t) - Q^\star(s_t,a_t)) = \tilde{O}\left( \sqrt{SA\sum_{t=1}^T \mathbb{V}(P_t,V^\star)} + BS^2A \right)
\end{align*}
in \pref{lem: V*-Q*}. 
In this subsection (\pref{lem: getting a high prob bound}), we close the loop and show 
\begin{align}
    \sum_{t=1}^T \mathbb{V}(P_t, V^\star) = \tilde{O}\left(R^2 K + \Rmax \sum_{t=1}^T (V^\star(s_t) - Q^\star(s_t,a_t)) + \Rmax^2\right). \label{eq: close the loop}
\end{align}
We first establish some useful properties. 
\begin{lemma}\label{lem: several useful ones}
    Define the following notation: 
\begin{align}
       Y_k \triangleq \sum_{t=t_k}^{e_k} \mathbb{V}(P_t, V^\star),\quad 
       Z_k \triangleq \sum_{t=t_k}^{e_k} (V^\star(s_t) - Q^\star(s_t,a_t)), \label{eq: YZ notation}
\end{align}
    and recall that $\ln_+(x)\triangleq \ln(1+x)$. We have 
    \begin{align}
        \E_{t_k}[Z_k^2] &\leq O\left(\Rmax\ln_+\left({\frac{\Rmax}{R}}\right)\E_{t_k}[Z_k] + 1\right),   \label{eq: prop1}\\
        \E_{t_k}[Y_k] &\leq O\left(\Rmax\ln_+\left({\frac{\Rmax}{R}}\right)\E_{t_k}[Z_k] + R^2 \right), \label{eq: prop2}\\
        \E_{t_k}[Y_k^2] &\leq O\left(\Rmax^3\ln_+^2\left(\frac{\Rmax}{R}\right) \E_{t_k}\left[Z_k\right] + \Rmax^2R^2\ln_+\left({\frac{\Rmax}{R}}\right)\right),  \label{eq: prop3} \\
        Z_k &\leq \Rmax \ln\left(\frac{K}{\delta}\right) \ \text{for all $k\in[K]$ w.p.} \geq 1-\delta, \label{eq: prop4}\\
        Y_k &\leq \Rmax^2\ln\left(\frac{K}{\delta}\right)\ \text{for all $k\in[K]$ w.p.} \geq 1-\delta.  \label{eq: prop5}
    \end{align}
\end{lemma}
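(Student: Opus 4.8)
The plan is to reduce all five estimates to a single structural fact: under any fixed history, the optimal-policy advantages form a non-negative sequence whose \emph{expected remaining sum} is uniformly $O(\Rmax)$, and the one-step variances $\mathbb{V}(P_t,V^\star)$ are the predictable quadratic variation of an explicit reward-plus-advantage martingale. Write $A_\tau\triangleq V^\star(s_\tau)-Q^\star(s_\tau,a_\tau)\ge 0$, so $Z_k=\sum_{\tau=t_k}^{e_k}A_\tau$, and use $Q^\star(s_\tau,a_\tau)=r(s_\tau,a_\tau)+P_\tau V^\star$ to get the pointwise identity $r(s_\tau,a_\tau)=V^\star(s_\tau)-P_\tau V^\star-A_\tau$, where $P_\tau\triangleq P_{s_\tau,a_\tau}$. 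Summing telescopes the $V^\star$ terms, which I would package as: $M_t\triangleq V^\star(s_t)+\sum_{\tau=t_k}^{t-1}(r(s_\tau,a_\tau)+A_\tau)$ is a martingale over the episode whose increment $\xi_\tau=V^\star(s_\tau')-P_\tau V^\star$ has conditional variance exactly $\mathbb{V}(P_\tau,V^\star)$. Let $\calF_\tau$ denote the history through $s_\tau$ (including the current $Q$-table). Two consequences do all the work. First, a performance-difference identity $\E[\sum_{\tau\ge\tau_0}A_\tau\mid\calF_{\tau_0}]=V^\star(s_{\tau_0})-V^{\pi}(s_{\tau_0})$, where $\pi$ is the continuation of the algorithm; once the history is fixed this continuation is a legitimate member of $\piHD$, so the defining suprema give $V^\star(s_{\tau_0})\le\Bstar\le\Rmax$ and, by Jensen plus the definition of $\Rmax$, $V^{\pi}(s_{\tau_0})\ge-\Rmax$, whence $\E[\sum_{\tau\ge\tau_0}A_\tau\mid\calF_{\tau_0}]\le 2\Rmax$. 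Second, stopping $M_t$ at $e_k$ and using orthogonality of the increments yields $\E[\sum_{\tau\ge t}\mathbb{V}(P_\tau,V^\star)\mid\calF_t]=\E[(\sum_{\tau\ge t}(r(s_\tau,a_\tau)+A_\tau)-V^\star(s_t))^2\mid\calF_t]$.

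Given these, \pref{eq: prop1} and \pref{eq: prop2} follow by a self-bounding trick. For \pref{eq: prop1}, use $Z_k^2\le 2\sum_\tau A_\tau\sum_{\tau'\ge\tau}A_{\tau'}$; since $A_\tau$ is $\calF_\tau$-measurable, conditioning on $\calF_\tau$ and applying the first fact gives $\E_{t_k}[Z_k^2]\le 4\Rmax\,\E_{t_k}[Z_k]$, which is absorbed into the stated $O(\Rmax\ln_+(\Rmax/R)\E_{t_k}[Z_k]+1)$ because $\ln_+(\Rmax/R)\ge\ln 2$. For \pref{eq: prop2}, apply the second fact at $t=t_k$ to write $\E_{t_k}[Y_k]=\E_{t_k}[(\sum_\tau(r(s_\tau,a_\tau)+A_\tau)-\Vinit)^2]$ and split into a reward-deviation part and $Z_k$: the reward part is $O(R^2)$ because the episode starts at $s_\init$, where $R$ (not $\Rmax$) is the correct scale via $\E_{t_k}[(\sum_\tau r)^2]\le R^2$ and $\Vinit\le R$, while the advantage part is $\E_{t_k}[Z_k^2]$, already controlled by \pref{eq: prop1}.

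For \pref{eq: prop3} I would use the same expansion one level up, $Y_k^2\le 2\sum_t\mathbb{V}(P_t,V^\star)\sum_{t'\ge t}\mathbb{V}(P_{t'},V^\star)$. The second fact, applied from a general time $t$ and bounded crudely (reward part by $O(\Rmax^2)$ from a general state, advantage part by the first fact), gives the uniform estimate $\E[\sum_{t'\ge t}\mathbb{V}(P_{t'},V^\star)\mid\calF_t]\le O(\Rmax^2)$. Conditioning term-by-term then reduces $\E_{t_k}[Y_k^2]$ to $O(\Rmax^2)\,\E_{t_k}[Y_k]$, and plugging in \pref{eq: prop2} yields the claimed $O(\Rmax^3\ln_+^2(\Rmax/R)\E_{t_k}[Z_k]+\Rmax^2R^2\ln_+(\Rmax/R))$, again with the logarithmic factors as harmless slack.

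For the high-probability statements \pref{eq: prop4} and \pref{eq: prop5} I would upgrade the uniform expected-remaining-sum bounds to sub-exponential tails. The general principle is: if $S=\sum_i Y_i$ with $Y_i\ge 0$ and $\E[\sum_{j\ge i}Y_j\mid\calF_i]\le M$ from every time $i$, then the elementary inequality $S^m\le m\sum_i Y_i(\sum_{j\ge i}Y_j)^{m-1}$ combined with induction on $m$ gives $\E[S^m]\le m!\,M^m$, hence $\E[e^{S/(2M)}]\le 2$ and $S\le 2M\ln(2/\delta')$ with probability $1-\delta'$. Taking $M=2\Rmax$ for $Z_k$ and $M=O(\Rmax^2)$ for $Y_k$, and union-bounding over $k\in[K]$ with $\delta'=\delta/K$, produces \pref{eq: prop4} and \pref{eq: prop5}. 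The main obstacle I anticipate is the rigor around the random episode length: justifying that the algorithm's continuation from a fixed history genuinely lies in $\piHD$ so that $\Rmax$ applies, that the martingale stopping and increment-orthogonality hold with $e_k$ a stopping time, and that the moment induction goes through. Here \pref{assum: proper} supplies the integrability (all continuation policies are proper) and the increments are bounded by $2\Bstar$, so these steps are sound but need careful bookkeeping.
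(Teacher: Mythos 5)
Your proposal is correct, and its skeleton coincides with the paper's own proof: your ``first fact'' is exactly the paper's \pref{eq: bound one episode regret} (performance-difference telescoping plus $\Bstar\le\Rmax$ and $|V^\pi(s)|\le\Rmax$ via Jensen, giving a conditional remaining advantage sum of at most $2\Rmax$ from any time in the episode), your ``second fact'' is the paper's \pref{eq: converting}/\pref{eq: translating between variance and regret} identity equating the remaining variance sum with the second moment of the stopped reward-plus-advantage martingale, and your reduction order --- \pref{eq: prop2} via \pref{eq: prop1}, \pref{eq: prop3} via \pref{eq: prop2} together with the uniform estimate $\E\big[\sum_{t'\ge t}\mathbb{V}(P_{t'},V^\star)\mid\calF_t\big]=O(\Rmax^2)$ (the paper's \pref{eq: partial sum 2}) --- is the same. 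Where you genuinely depart is in the concentration tool: the paper delegates all second-moment and tail bounds for nonnegative adapted sums with bounded conditional remaining sums to \pref{lem: sum of positive}, whose parts (a)--(c) are proved by a doubling/epoch stopping-time argument, whereas you re-derive these inline via the tail-sum inequality $S^m\le m\sum_i Y_i\big(\sum_{j\ge i}Y_j\big)^{m-1}$, the moment induction $\E[S^m]\le m!\,M^m$, and the resulting bound $\E[e^{S/(2M)}]\le 2$. Both are sound, and yours is in fact slightly sharper on \pref{eq: prop1} and \pref{eq: prop3} (e.g., $\E_{t_k}[Z_k^2]\le 4\Rmax\E_{t_k}[Z_k]$ with no additive term; the stated $\ln_+(\Rmax/R)$ factors are then harmless slack since $\Rmax\ge R$ gives $\ln_+(\Rmax/R)\ge\ln 2$), while the paper's formulation buys reusability --- the free parameter $c$ in \pref{lem: sum of positive}(b) and its $\ln_+(\cdot/c)$ interpolation are needed elsewhere, e.g., in \pref{lem: connecting reward and general}. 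The bookkeeping issues you flag (that the algorithm's deterministic continuation from a fixed history is a legitimate member of $\piHD$, so the suprema defining $R$ and $\Rmax$ apply, and that optional stopping and increment orthogonality at the stopping time $e_k$ are licensed by \pref{assum: proper}) are precisely the implicit justifications the paper relies on as well, so no gap remains.
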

\allowdisplaybreaks
Before proving \pref{lem: several useful ones}, we point out that the key is to show \pref{eq: prop2}, which, after summing over $k$, will almost imply \pref{eq: close the loop}, but only \emph{in expectation}. The other inequalities \pref{eq: prop1}, \pref{eq: prop3}, \pref{eq: prop4}, \pref{eq: prop5} will be used in concentration inequalities that boost the expectation bound to a high-probability bound.

\begin{proof}[\pref{lem: several useful ones}]
   \paragraph{Proving \pref{eq: prop1}}
   We use \pref{lem: sum of positive} with $X_t = V^\star(s_t) - Q^\star(s_t,a_t)$. First, note that $0\leq V^\star(s_t) - Q^\star(s_t,a_t)\leq 2\Bstar\leq 2\Rmax$. Then note that for any $t'$ in episode $k$, 
    \begin{align}
        \E_{t'}\left[\sum_{t=t'}^{e_k}(V^\star(s_t) - Q^\star(s_t,a_t))\right] 
        &= \E_{t'}\left[\sum_{t=t'}^{e_k} (V^\star(s_t) - r(s_t,a_t) - P_tV^\star)\right] \nonumber \\
        &= \E_{t'}\left[\sum_{t=t'}^{e_k} (V^\star(s_t) - r(s_t,a_t) - V^\star(s_{t}'))\right] \nonumber \\
        &= V^\star(s_{t'}) - \E_{t'}\left[\sum_{t=t'}^{e_k} r(s_t,a_t)\right] \leq 2\Rmax. \label{eq: bound one episode regret}
    \end{align}
    Combining these two arguments and using \pref{lem: sum of positive}~(b) (with $c$ set to $R$), we get 
    \begin{align}
        \E_{t_k}[Z_k^2] &= \mathbb{E}_{t_k}\left[\left(\sum_{t=t_k}^{e_k} (V^\star(s_t) - Q^\star(s_t,a_t))\right)^2  \right]  \nonumber \\
        &\leq O\left(\Rmax\ln_+\left({\frac{\Rmax}{R}}\right) \E_{t_k}\left[\sum_{t=t_k}^{e_k} (V^\star(s_t) - Q^\star(s_t,a_t)) \right] + R^2\right) \nonumber \\
        &=O\left(\Rmax\ln_+\left({\frac{\Rmax}{R}}\right)\E_{t_k}[Z_k]+R^2\right). \nonumber
    \end{align} 
    \paragraph{Proving \pref{eq: prop2}} 
    Observe that 
    \begin{align}
        \E_{t_k}\left[\sum_{t=t_k}^{e_k}\mathbb{V}(P_t, V^\star)\right] = \E_{t_k}\left[\sum_{t=t_k}^{e_k} (V^\star(s_t') - P_tV^\star)^2 \right] = \mathbb{E}_{t_k}\left[\left(\sum_{t=t_k}^{e_k} (V^\star(s_t') - P_tV^\star)\right)^2 \right], \label{eq: converting}
    \end{align}
    where the last equality is because 
    \begin{align*}
        \E_{t_k}\left[ (V^\star(s_t') - P_tV^\star )(V^\star(s_u') - P_uV^\star) \right] = 0
    \end{align*}
    for any $u>t\geq t_k$. We continue with the following: 
    \begin{align}
        &\mathbb{E}_{t_k}\left[\left(\sum_{t=t_k}^{e_k} (V^\star(s_t') - P_tV^\star)\right)^2 \right] \nonumber \\ &= \mathbb{E}_{t_k}\left[\left(\sum_{t=t_k}^{e_k} (V^\star(s_t) - P_tV^\star) - V^\star(s_{t_k})\right)^2 \right] \nonumber \\
        &= \mathbb{E}_{t_k}\left[\left(\sum_{t=t_k}^{e_k} (V^\star(s_t) - Q^\star(s_t,a_t) + r(s_t,a_t)) - V^\star(s_{t_k})\right)^2 \right] \nonumber \\
        &\leq 3\mathbb{E}_{t_k}\left[\left(\sum_{t=t_k}^{e_k} (V^\star(s_t) - Q^\star(s_t,a_t))\right)^2  \right] + 3\mathbb{E}_{t_k}\left[\left(\sum_{t=t_k}^{e_k} r(s_t,a_t)\right)^2\right] + 3V^\star(s_{t_k})^2 \nonumber \\
        &\leq 3\mathbb{E}_{t_k}\left[\left(\sum_{t=t_k}^{e_k} (V^\star(s_t) - Q^\star(s_t,a_t))\right)^2  \right] + 6R^2.   \label{eq: translating between variance and regret}
    \end{align}
    Thus, combining the arguments above, we have proven $\E_{t_k}[Y_k]\leq O\left(\E_{t_k}[Z_k^2] + R^2\right)$. Further combining this with \pref{eq: prop1}, we get \pref{eq: prop2}. 
    \paragraph{Proving \pref{eq: prop3}} 
    For any $t'$ in episode $k$, by the same calculation as in \pref{eq: converting}, \pref{eq: translating between variance and regret} (but instead of starting time from $t_k$, start from an arbitrary $t'$ in episode $k$), we have 
    \begin{align}
        \E_{t'}\left[\sum_{t=t'}^{e_k}\mathbb{V}(P_t, V^\star)\right]&\leq O\left(\mathbb{E}_{t'}\left[\left(\sum_{t=t'}^{e_k} (V^\star(s_t) - Q^\star(s_t,a_t))\right)^2  \right] + R_{\max}^2\right) \leq O(\Rmax^2),   \label{eq: partial sum 2}
    \end{align}
    where the last inequality is by \pref{eq: bound one episode regret} and \pref{lem: sum of positive}~(c) with $X_t=V^\star(s_t)-Q^\star(s_t,a_t)$. Thus, 
    \begin{align}
        \E_{t_k}[Y_k^2] &= \E_{t_k}\left[\left(\sum_{t=t_k}^{e_k} \mathbb{V}(P_t,V^\star)\right)^2\right]  \nonumber \\
        &\leq O\left(\Rmax^2\ln_+\left(\frac{\Rmax^2}{R^2}\right)\E_{t_k}\left[\sum_{t=t_k}^{e_k}\mathbb{V}(P_t,V^\star)\right]+R^4\right)  \tag{by \pref{eq: partial sum 2} and \pref{lem: sum of positive}~(b) with $c=R^2$} \\
        &= O\left(\Rmax^3\ln_+^2\left(\frac{\Rmax}{R}\right) \E_{t_k}\left[Z_k\right] + \Rmax^2R^2\ln_+\left(\frac{\Rmax}{R}\right)\right). \tag{by \pref{eq: prop2}}  
    \end{align}
    \paragraph{Proving \pref{eq: prop4}}  This is directly by \pref{eq: bound one episode regret} and \pref{lem: sum of positive}~(a). 
    \paragraph{Proving \pref{eq: prop5}} This is directly by \pref{eq: partial sum 2} and \pref{lem: sum of positive}~(a). 
\end{proof}

\begin{lemma} \label{lem: getting a high prob bound}
    With probability at least $1-O(\delta)$, 
    \begin{align*}
        &\sum_{t=1}^T \mathbb{V}(P_t, V^\star)
        \leq \\ &O\left(\Rmax\ln_+\left({\frac{\Rmax}{R}}\right)\sum_{t=1}^T (V^\star(s_t) - Q^\star(s_t,a_t)) + R^2K + \Rmax^2\ln\left(\frac{\Rmax K}{R\delta}\right)\ln\left(\frac{\ln(\Rmax K)}{\delta}\right)\right). 
    \end{align*}
\end{lemma}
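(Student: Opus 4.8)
The plan is to work episode by episode and to transfer between $\sum_{t=1}^T \mathbb{V}(P_t,V^\star) = \sum_{k=1}^K Y_k$ and $\sum_{t=1}^T (V^\star(s_t)-Q^\star(s_t,a_t)) = \sum_{k=1}^K Z_k$ through their per-episode conditional expectations $\E_{t_k}[Y_k]$ and $\E_{t_k}[Z_k]$, feeding in the five inequalities of \pref{lem: several useful ones}. The skeleton is a two-layer martingale argument: first replace $\sum_k Y_k$ by $\sum_k \E_{t_k}[Y_k]$, bound the latter in terms of $\sum_k \E_{t_k}[Z_k]$, and then replace $\sum_k \E_{t_k}[Z_k]$ back by the realized $\sum_k Z_k$. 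Each replacement is a Freedman concentration step (\pref{lem: freedman}), for which \pref{lem: several useful ones} supplies exactly the two ingredients Freedman requires: a bound on the predictable quadratic variation and a high-probability bound on the increments.

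First I would write $\sum_k Y_k = \sum_k \E_{t_k}[Y_k] + \sum_k (Y_k - \E_{t_k}[Y_k])$. The martingale part is controlled by Freedman using \pref{eq: prop3} for the predictable variation $\sum_k \E_{t_k}[Y_k^2]$ and \pref{eq: prop5} for the (high-probability) range $Y_k = O(\Rmax^2 \ln(K/\delta))$; the square-root term it produces is split by AM-GM so that the part proportional to $\sqrt{\Rmax^3 \ln_+^2(\Rmax/R)\sum_k \E_{t_k}[Z_k]}$ is absorbed into $\Rmax\ln_+(\Rmax/R)\sum_k \E_{t_k}[Z_k]$, while the part involving $\Rmax^2 R^2\ln_+(\Rmax/R)K$ is split into $R^2K$ plus a lower-order term. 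Combined with \pref{eq: prop2} for $\sum_k \E_{t_k}[Y_k]$, this yields
\[
\sum_k Y_k \le O\!\left(\Rmax\ln_+\!\left(\frac{\Rmax}{R}\right)\sum_k \E_{t_k}[Z_k] + R^2K + \Rmax^2\ln\!\left(\frac{K}{\delta}\right)\ln\!\left(\frac{1}{\delta}\right)\right).
\]

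Next I would convert $\sum_k \E_{t_k}[Z_k]$ into the realized sum $\sum_k Z_k$ via the same device: $\sum_k \E_{t_k}[Z_k] = \sum_k Z_k + \sum_k(\E_{t_k}[Z_k]-Z_k)$, applying Freedman to the second martingale with \pref{eq: prop1} for the variation $\sum_k \E_{t_k}[Z_k^2]$ and \pref{eq: prop4} for the range $Z_k = O(\Rmax \ln(K/\delta))$. Crucially, the variation bound \pref{eq: prop1} again features $\sum_k \E_{t_k}[Z_k]$, so the resulting square-root term is once more handled by AM-GM, this time absorbing half of $\sum_k \E_{t_k}[Z_k]$ onto the left-hand side. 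Solving gives $\sum_k \E_{t_k}[Z_k] = O\!\big(\sum_k Z_k + \text{lower order}\big)$, and substituting into the display above finishes the proof after noting $\sum_k Z_k = \sum_t (V^\star(s_t)-Q^\star(s_t,a_t))$ and $\sum_k Y_k = \sum_t \mathbb{V}(P_t, V^\star)$.

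The main obstacle is the self-referential structure: every concentration step reintroduces the quantity being bounded (either $\sum_k \E_{t_k}[Z_k]$ or $\sum_k \E_{t_k}[Y_k^2]$, which itself contains $\sum_k \E_{t_k}[Z_k]$) inside a square root, so the argument only closes because the AM-GM split lets a constant fraction be moved to the left-hand side and re-absorbed; one must track these constants so nothing blows up. Care is also needed because the range bounds \pref{eq: prop4} and \pref{eq: prop5} hold only on a high-probability event rather than deterministically, so I would invoke the version of Freedman tolerating a high-probability increment bound (or truncate at the stated range and absorb the truncation error into the failure probability). Finally, since the predictable variation is itself random and a priori unknown, the empirical/anytime form of \pref{lem: freedman} is used, and peeling over a dyadic grid of candidate variance values is what produces the extra $\ln\ln(\Rmax K)$ factor appearing in the final logarithmic term $\ln\!\big(\Rmax K/(R\delta)\big)\,\ln\!\big(\ln(\Rmax K)/\delta\big)$.
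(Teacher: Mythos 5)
Your proposal matches the paper's proof essentially step for step: the same two-layer Freedman argument (first passing from $\sum_k Y_k$ to $\sum_k \E_{t_k}[Y_k]$ via \pref{lem: freedman} with \pref{eq: prop2}, \pref{eq: prop3}, \pref{eq: prop5}, then from $\sum_k \E_{t_k}[Z_k]$ back to $\sum_k Z_k$ with \pref{eq: prop1}, \pref{eq: prop4}), the same AM-GM absorptions closing the self-referential loop, and the same resolution of the random-range issue, since the paper's Freedman variant already tolerates a random increment bound through its realized maximum $U_T$ and its dyadic peeling is indeed the source of the $\ln\ln(\Rmax K)$ factor. Nothing is missing.
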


\begin{proof}
    Similar to \pref{eq: YZ notation}, we define 
    \begin{align*}
       Y_k \triangleq \sum_{t=t_k}^{e_k} \mathbb{V}(P_t, V^\star),\quad 
       Z_k \triangleq \sum_{t=t_k}^{e_k} (V^\star(s_t) - Q^\star(s_t,a_t)).
\end{align*}
    \allowdisplaybreaks
    By \pref{lem: freedman}, with probability at least $1-O(\delta)$, 
    \begin{align}
        &\sum_{k=1}^K Y_k  \nonumber \\
        &\leq \sum_{k=1}^K \E_{t_k}\left[Y_k\right] + O\left(\sqrt{\sum_{k=1}^K \E_{t_k}[Y_k^2] \ln\frac{\ln\left(\sum_{k=1}^K \E_{t_k}[Y_k^2]\right)}{\delta}} + \left(\max_{k\in[K]} Y_k\right) \ln\frac{\ln\left(\max_{k\in[K]}Y_k\right)}{\delta} \right)    \nonumber\\
        &\leq O\left(\Rmax \ln_+\left({\frac{\Rmax}{R}}\right) \sum_{k=1}^K \E_{t_k}\left[Z_k \right] + R^2K \right) \tag{by \pref{eq: prop2}} \\
        &\quad + O\left(\sqrt{\left(\Rmax^3\ln_+^2\left(\frac{\Rmax}{R}\right) \sum_{k=1}^K \E_{t_k}\left[Z_k\right] + \Rmax^2R^2\ln_+\left({\frac{\Rmax}{R}}\right)K\right)\ln\frac{\ln(\Rmax K\sum_{k=1}^K \E_{t_k}[Z_k])}{\delta}}\right) \tag{by \pref{eq: prop3}}\\
        &\quad + O\left( \Rmax^2 \ln\frac{K}{\delta}\ln\frac{\ln\left(\Rmax^2 \ln\frac{K}{\delta}\right)}{\delta} \right)  \tag{by \pref{eq: prop5}}\\
        &\leq O\left(\Rmax \ln_+\left({\frac{\Rmax}{R}}\right) \sum_{k=1}^K \E_{t_k}\left[Z_k \right] + R^2K +  \Rmax^2\ln\frac{\Rmax K}{R\delta}\ln\frac{\ln(\Rmax K)}{\delta}\right).  \tag{AM-GM and that $\E_{t_k}[Z_k]\leq \Rmax$} \\
        &\ \ \label{eq: Y_k concentration ineql}
    \end{align}
    Next, we connect $\sum_k \E_{t_k}[Z_k]$ with $\sum_k Z_k$.  By \pref{lem: freedman}, with probability at least $1-O(\delta)$, 
    \begin{align*}
        &\sum_{k=1}^K \E_{t_k}[Z_k] \\ &\leq \sum_{k=1}^K  Z_k + O\left(\sqrt{\sum_{k=1}^K \E_{t_k}[Z_k^2]\ln\frac{\ln\left(\sum_{k=1}^K \E_{t_k}[Z_k^2]\right)}{\delta}} + \left(\max_{k\in[K]} Z_k\right)\ln\frac{\ln\left(\max_{k\in[K]} Z_k\right)}{\delta} \right) \\
        &\leq \sum_{k=1}^K  Z_k + O\left(\sqrt{\Rmax\ln_+\left({\frac{\Rmax}{R}}\right)\sum_{k=1}^K \E_{t_k}[Z_k] \ln\frac{\ln(\Rmax K)}{\delta}} \right) \tag{by \pref{eq: prop1} and that $\E_{t_k}[Z_k^2]\leq \Rmax^2$}\\
        &\qquad + O\left(\Rmax\ln\frac{K}{\delta}\ln\frac{\ln(\Rmax\ln\frac{K}{\delta})}{\delta}\right)   \tag{by \pref{eq: prop4}}\\
        &\leq \sum_{k=1}^K  Z_k + \frac{1}{2}\sum_{k=1}^K \E_{t_k}[Z_k]  + O\left( \Rmax\ln\frac{\Rmax K}{R\delta}\ln\frac{\ln(\Rmax K)}{\delta}\right). \tag{AM-GM}
    \end{align*}
    Solving for $\sum_{k=1}^K \E_{t_k}[Z_k]$ and plugging it to \pref{eq: Y_k concentration ineql}, we get that with probability at least $1-O(\delta)$, 
    \begin{align*}
        \sum_{k=1}^K Y_k \leq O\left(\Rmax\ln_+\left({\frac{\Rmax}{R}}\right)\sum_{k=1}^K Z_k + R^2K + \Rmax^2\ln\frac{\Rmax K}{R\delta}\ln\frac{\ln(\Rmax K)}{\delta}\right). 
    \end{align*}
    This finishes the proof. 

\end{proof}

\subsection{Bounding the regret }
\begin{proof}[\pref{thm: main for mixed case}]
We use the following notations:     \begin{align*}
       Y_k \triangleq \sum_{t=t_k}^{e_k} \mathbb{V}(P_t, V^\star),\quad 
       Z_k \triangleq \sum_{t=t_k}^{e_k} (V^\star(s_t) - Q^\star(s_t,a_t)). 
\end{align*}
   \begin{align}
       &\Reg_K 
       =\sum_{k=1}^K \left( V^\star(s_{t_k}) - \sum_{t=t_k}^{e_k}r(s_t, a_t)\right)   \nonumber \\
       &= \sum_{t=1}^T (  V^\star(s_t) - V^\star(s_t') - r(s_t, a_t) ) \nonumber  \\
       &= \sum_{t=1}^T \left( V^\star(s_t) - Q^\star(s_t,a_t) \right) + \sum_{t=1}^T (P_tV^\star - V^\star(s_t')) \nonumber \\
       &\leq \sum_{t=1}^T \left( V^\star(s_t) - Q^\star(s_t,a_t)\right) + O\left(\sqrt{\sum_{t=1}^T \mathbb{V}(P_t, V^\star) \tilde{\iota}_{T,B,\delta}} + \Bstar\tilde{\iota}_{T,B,\delta} \right) \nonumber \\
       &\leq O\left(\sum_{k=1}^K Z_k + \sqrt{\sum_{k=1}^K Y_k \tilde{\iota}_{T,B,\delta} } + \Bstar\tilde{\iota}_{T,B,\delta}\right) \nonumber \\
       &\leq O\left(\sum_{k=1}^K Z_k + \sqrt{\left(\Rmax\ln_+\left(\frac{\Rmax}{R}\right)\sum_{k=1}^K Z_k + R^2K + \Rmax^2\ln\frac{\Rmax K}{R\delta}\ln\frac{\ln(\Rmax K)}{\delta}\right)\tilde{\iota}_{T,B,\delta}} + \Bstar \tilde{\iota}_{T,B,\delta}\right) \tag{\pref{lem: getting a high prob bound}} \\
       &\leq O\left(\sum_{k=1}^K Z_k + R\sqrt{K\tilde{\iota}_{T,B,\delta}} + \Rmax \ln\left(\frac{\Rmax K}{R\delta}\right)\tilde{\iota}_{T,B,\delta}\right).   \tag{AM-GM} \\  
       &  \ \ \label{eq: regret final}
   \end{align}
   It remains to bound $\sum_{k=1}^K Z_k$. 
   By \pref{lem: V*-Q*}, we have with probability at least $1-O(\delta)$, 
   \begin{align}
       \sum_{k=1}^K Z_k \leq O\left( \sqrt{SA\sum_{k=1}^K Y_k \tilde{\iota}_{T,B,\delta}} + BS^2A \tilde{\iota}_{T,B,\delta}\right).  
   \end{align}
   Further using \pref{lem: getting a high prob bound} on the right-hand side,  
   \begin{align*}
       &\sum_{k=1}^K Z_k \leq \\
       &O\left(  \sqrt{SA\left(\Rmax\ln_+\left(\frac{\Rmax}{R}\right)\sum_{k=1}^K Z_k + R^2K + \Rmax^2\ln\frac{\Rmax K}{R\delta}\ln\frac{\ln(\Rmax K)}{\delta}\right)\tilde{\iota}_{T,B,\delta}} + BS^2A \tilde{\iota}_{T,B,\delta}\right). 
   \end{align*}
   Solving for $\sum_{k=1}^K Z_k$, we get 
   \begin{align*}
       \sum_{k=1}^K Z_k \leq O\left(R\sqrt{SAK\tilde{\iota}_{T,B,\delta}} + \Rmax SA\ln\left(\frac{\Rmax K}{R\delta}\right)\tilde{\iota}_{T,B,\delta} + BS^2A\tilde{\iota}_{T,B,\delta}\right). 
   \end{align*}
   Plugging this to \pref{eq: regret final} finishes the proof. 
\end{proof}

\section{Upper Bound for Stochastic Longest Path}\label{app: upper SLP}
\begin{proof}[\pref{lem: connecting reward and general}]
Define 
\begin{align*}
    B_{\pi} &= \max\left\{\sup_s \E^\pi\left[\sum_{t=1}^\infty r(s_t,a_t)~\Big|~ s_1=s \right], 1 \right\}\\
    V_\pi &= \max \left\{\E^\pi\left[\sum_{t=1}^\infty r(s_t,a_t)~\Big|~ s_1=s_\init\right], 1\right\}
\end{align*}
Since $r(\cdot,\cdot)\geq 0$, by \pref{lem: sum of positive}~(b) (with $c\triangleq \min\{B_\pi, \Vinit\}$), we have \begin{align*}
   R^2 &\leq O\left(\sup_{\pi}V_\pi B_\pi\ln_+ \left(\frac{B_\pi}{\min\{B_\pi, \Vinit\}}\right) + \Vinit^2 \right) \leq O\left(\Vinit \Bstar \ln_+\frac{\Bstar}{\Vinit}\right). 
\end{align*}
By \pref{lem: sum of positive}~(c), we have
\begin{align*}
    \Rmax^2 \leq O\left(\sup_\pi B_\pi^2 \right) \leq O\left(\Bstar^2\right). 
\end{align*}
\end{proof}

\begin{lemma}\label{lem: cost sm of var}
    With probability at least $1-\delta$, for all $T$, 
    \begin{align*}
        \sum_{t=1}^T \mathbb{V}(P_t, V^\star) \leq O\left(\Bstar\sum_{t=1}^T (V^\star(s_t) - Q^\star(s_t,a_t)) + \Bstar\sum_{t=1}^T |r(s_t,a_t)| + \Bstar^2\ln(T/\delta)\right). 
    \end{align*}
\end{lemma}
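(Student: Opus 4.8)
The plan is to reduce this to the variance-telescoping identity already used in \pref{lem: sum of variance term 1}, namely \pref{lem: X lemma}, but applied to the \emph{constant} (time-independent) sequence $X_t \equiv V^\star$ and with the reward kept explicit rather than folded into the value. This is precisely the step where the analysis diverges from the general-SP bound of \pref{lem: getting a high prob bound}: instead of routing the sum of variances through $(\sum_{\text{episode}} r)^2$, we bound it pathwise, which forces a per-step $|r(s_t,a_t)|$ to appear and is therefore only useful when the rewards share a sign (as in SLP).

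First I would invoke \pref{lem: X lemma} with $X_t = V^\star$. Because $V^\star$ does not change with $t$, the ``value-drift'' contribution $\sum_{t}\sum_s |X_t(s) - X_{t+1}(s)|$ that appears for $X_t = V_t - V^\star$ in \pref{lem: sum of variance term 1} is identically zero here, so the lemma collapses to $\sum_{t=1}^T \mathbb{V}(P_t, V^\star) \le O\big(\Bstar \sum_{t=1}^T |V^\star(s_t) - P_t V^\star| + \Bstar^2 \ln(T/\delta)\big)$, using $\|V^\star\|_\infty = \Bstar$. The only remaining task is to rewrite the per-step deviation: by the Bellman equation $Q^\star(s_t,a_t) = r(s_t,a_t) + P_t V^\star$ we have $V^\star(s_t) - P_t V^\star = \big(V^\star(s_t) - Q^\star(s_t,a_t)\big) + r(s_t,a_t)$, and since $V^\star(s_t) \ge Q^\star(s_t,a_t)$ the triangle inequality gives $|V^\star(s_t) - P_t V^\star| \le \big(V^\star(s_t) - Q^\star(s_t,a_t)\big) + |r(s_t,a_t)|$. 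Summing and substituting yields the claimed bound.

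For completeness (and in case one wants the argument without appealing to \pref{lem: X lemma}), the mechanism behind it is a law-of-total-variance telescoping: write $\mathbb{V}(P_t, V^\star) = P_t (V^\star)^2 - (P_t V^\star)^2$, replace $P_t(V^\star)^2$ by its realization $V^\star(s_t')^2$ plus a martingale difference, and telescope $\sum_{t=t_k}^{e_k} V^\star(s_t')^2 = \sum_{t=t_k+1}^{e_k} V^\star(s_t)^2$ within each episode using $V^\star(g)=0$; the boundary term $-V^\star(s_{t_k})^2 \le 0$ is dropped, the cross term is bounded by $2V^\star(s_t)\big(V^\star(s_t)-Q^\star(s_t,a_t)+r(s_t,a_t)\big) \le 2\Bstar\big((V^\star(s_t)-Q^\star(s_t,a_t)) + |r(s_t,a_t)|\big)$, and the nonpositive quadratic remainder is discarded. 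The martingale $\sum_t \big(\E_t[V^\star(s_t')^2] - V^\star(s_t')^2\big)$ is controlled by \pref{lem: freedman}, whose conditional variance satisfies $\mathbb{V}(P_t,(V^\star)^2) \le 4\Bstar^2 \mathbb{V}(P_t, V^\star)$.

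The step I expect to be the main obstacle is this last, self-referential one: Freedman's inequality produces a term $O\big(\Bstar \sqrt{\sum_t \mathbb{V}(P_t,V^\star)\,\ln(T/\delta)}\big)$ that already contains the quantity being bounded, so one must apply AM-GM to absorb $\tfrac12 \sum_t \mathbb{V}(P_t, V^\star)$ back into the left-hand side and solve, while simultaneously making the bound hold uniformly over all $T$ (the source of the $\ln(T/\delta)$ factor) via the anytime form of \pref{lem: freedman}. Everything else is a direct substitution, so once the self-bounding is handled the claim follows.
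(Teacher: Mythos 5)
Your proposal is correct and takes essentially the same route as the paper: the paper's proof is exactly the $X_t\equiv V^\star$ instance of the \pref{lem: X lemma} telescoping argument written out inline (expand $\mathbb{V}(P_t,V^\star)=P_t(V^\star)^2-(P_tV^\star)^2$, telescope $V^\star(s_t')^2$ across steps using $V^\star(g)=0$, control the martingale part via \pref{lem: freedman} together with \pref{lem: lemma 30}, and absorb the self-referential $\frac{1}{2}\sum_t\mathbb{V}(P_t,V^\star)$ by AM--GM), the only cosmetic difference being that the paper passes through $Q^\star(s_t,a_t)^2$ via $a^2-b^2\leq|a+b|\,|a-b|$ where you substitute the Bellman identity $V^\star(s_t)-P_tV^\star=(V^\star(s_t)-Q^\star(s_t,a_t))+r(s_t,a_t)$ directly. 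Your checks that the drift term vanishes for constant $X_t$, that $V^\star(s_t)\geq Q^\star(s_t,a_t)$ lets you drop the absolute value, and that the anytime form of Freedman supplies the uniformity over $T$ behind the $\ln(T/\delta)$ factor are all sound.
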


\begin{proof}
    \begin{align*}
        &\sum_{t=1}^T  \mathbb{V}(P_t, V^\star) \\
        &= \sum_{t=1}^T \left(\E_{s'\sim P_t}[V^\star(s')^2] - (P_t V^\star)^2\right)\\
        &= \sum_{t=1}^T (V^\star(s_t')^2 - (P_tV^\star)^2) + \sum_{t=1}^T  (\E_{s'\sim P_t}[V^\star(s')^2] - V^\star(s_t')^2)  \\
        &\leq  \sum_{t=1}^T \left(V^\star(s_t)^2 - (P_tV^\star)^2\right)   + \Bstar^2 + \sum_{t=1}^T  (\E_{s'\sim P_t}[V^\star(s')^2] - V^\star(s_t')^2)  \tag{because $V^\star(s_t')^2\leq V^\star(s_{t+1})^2$}  \\
        &= \sum_{t=1}^T (V^\star(s_t)^2 - Q^\star(s_t,a_t)^2) + \sum_{t=1}^T (Q^\star(s_t,a_t)^2 - (P_tV^\star)^2) \\
        &\qquad \qquad + O\left(\sqrt{\sum_{t=1}^T \mathbb{V}(P_t,V^{\star 2})\ln(T/\delta)} + \Bstar^2 \ln(T/\delta)\right)  \\
        &\leq O\left(\Bstar \sum_{t=1}^T |V^\star(s_t) - Q^\star(s_t,a_t)| +  \Bstar\sum_{t=1}^T \left|Q^\star(s_t, a_t) - P_t V^\star\right| \right) \\
        &\qquad \qquad + O\left(\Bstar\sqrt{\sum_{t=1}^T \mathbb{V}(P_t,V^{\star})\ln(T/\delta)} + \Bstar^2 \ln(T/\delta)\right) \tag{$a^2-b^2\leq |a+b||a-b|$ and \pref{lem: lemma 30}} \\
        &\leq O\left(\Bstar\sum_{t=1}^T (V^\star(s_t) - Q^\star(s_t,a_t)) + \Bstar\sum_{t=1}^T |r(s_t,a_t)|\right) + \frac{1}{2}\sum_{t=1}^T \mathbb{V}(P_t, V^\star) + O\left(\Bstar^2 \ln(T/\delta)\right)   \tag{AM-GM}
    \end{align*}
    Solving for $\sum_{t=1}^T \mathbb{V}(P_t, V^\star)$ finishes the proof. 
\end{proof}

\begin{proof}[\pref{thm: main thm for reward}]
By the same calculation as in \pref{eq: regret final}, we have 
\begin{align*}
    \Reg_K \leq \sum_{t=1}^T \left( V^\star(s_t) - Q^\star(s_t,a_t)\right) + O\left(\sqrt{\sum_{t=1}^T \mathbb{V}(P_t, V^\star) \tilde{\iota}_{T,B,\delta}} + \Bstar\tilde{\iota}_{T,B,\delta} \right).
\end{align*}
Using \pref{lem: cost sm of var}, we get 
\begin{align}
    \Reg_K &\leq \sum_{t=1}^T (V^\star(s_t) - Q^\star(s_t,a_t)) \nonumber \\
    &\qquad + O\left( \sqrt{\Bstar \sum_{t=1}^T (V^\star(s_t) - Q^\star(s_t,a_t))\tiliota_{T,B,\delta} + \Bstar\sum_{t=1}^T r(s_t,a_t)\tiliota_{T,B,\delta} } + \Bstar \tiliota_{T,B,\delta}\right) \nonumber  \\
    &\leq O\left(\sum_{t=1}^T (V^\star(s_t) - Q^\star(s_t,a_t)) + \sqrt{\Bstar\sum_{t=1}^T r(s_t,a_t)\tiliota_{T,B,\delta} } + \Bstar \tiliota_{T,B,\delta} \right).   \label{eq: Reg_K temp bound}
\end{align}
By \pref{lem: V*-Q*}, 
\begin{align*}
    &\sum_{t=1}^T (V^\star(s_t) - Q^\star(s_t,a_t)) \\
    &\leq O\left(\sqrt{SA\sum_{t=1}^T  \mathbb{V}(P_t,V^\star)\tiliota_{T,B,\delta}} + BS^2A \tiliota_{T,B,\delta}\right) \\
    &\leq O\left( \sqrt{\Bstar SA\sum_{t=1}^T (V^\star(s_t) - Q^\star(s_t,a_t))\tiliota_{T,B,\delta} + \Bstar SA\sum_{t=1}^T r(s_t,a_t)\tiliota_{T,B,\delta} } + BS^2A \tiliota_{T,B,\delta}\right)
\end{align*}
where in the last inequality we again use \pref{lem: cost sm of var}. 
Solving for $\sum_{t=1}^T (V^\star(s_t) - Q^\star(s_t,a_t))$, we get 
\begin{align*}
    \sum_{t=1}^T (V^\star(s_t) - Q^\star(s_t,a_t)) \leq O\left(\sqrt{\Bstar SA\sum_{t=1}^T r(s_t,a_t)\tiliota_{T,B,\delta}} + BS^2A \tiliota_{T,B,\delta} \right).
\end{align*}
Using this in \pref{eq: Reg_K temp bound}, we get 
\begin{align*}
    K\Vinit - \sum_{t=1}^T  r(s_t,a_t)\leq O\left(\sqrt{\Bstar SA\sum_{t=1}^T r(s_t,a_t)\tiliota_{T,B,\delta}} + BS^2A \tiliota_{T,B,\delta} \right).
\end{align*}
If $\sum_{t=1}^T r(s_t,a_t) \geq K\Vinit$, we have $K\Vinit - \sum_{t=1}^T  r(s_t,a_t)\leq 0$; if 
$\sum_{t=1}^T r(s_t,a_t) \leq K\Vinit$, we can further bound the $\sum_{t=1}^T r(s_t,a_t)$ term on the right-hand side above by $K\Vinit$. In both cases, we have 
\begin{align*}
    K\Vinit - \sum_{t=1}^T  r(s_t,a_t)\leq O\left(\sqrt{\Vinit \Bstar SAK\tiliota_{T,B,\delta}} + BS^2A \tiliota_{T,B,\delta} \right). 
\end{align*}
\end{proof}

\begin{lemma}\label{lem: negative regret}
    Let $r(\cdot,\cdot)\geq 0$. With probability at least $1-\delta$, for all $K\geq 1$, with $T$ being the total number of steps in $K$ episodes, $$\Reg_K \geq -O\left( \sqrt{\Vinit \Bstar K\ln(T/\delta)} + \Bstar\ln(T/\delta)\right).$$ 
\end{lemma}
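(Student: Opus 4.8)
The plan is to lower-bound $\Reg_K$ by a single bounded-increment martingale and then control its quadratic variation through a self-referential estimate that I close with AM-GM.

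First I would establish the exact identity $\Reg_K = \sum_{t=1}^T (V^\star(s_t) - Q^\star(s_t,a_t)) + \sum_{t=1}^T \xi_t$, where $\xi_t \triangleq P_t V^\star - V^\star(s_t')$ and $P_t \triangleq P_{s_t,a_t}$. This comes from the per-episode telescope $K\Vinit = \sum_{t=1}^T (V^\star(s_t) - V^\star(s_t'))$ (using $V^\star(s_{t_k}) = \Vinit$ and $V^\star(g)=0$) together with the splitting $V^\star(s_t)-V^\star(s_t') = (V^\star(s_t)-Q^\star(s_t,a_t)) + r(s_t,a_t) + \xi_t$, which follows from $Q^\star(s_t,a_t)=r(s_t,a_t)+P_t V^\star$. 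Since $V^\star(s_t)=\max_a Q^\star(s_t,a)\ge Q^\star(s_t,a_t)$, the advantage terms are non-negative, so $\Reg_K \ge \sum_{t=1}^T \xi_t$ and it suffices to upper-bound $\Xi \triangleq -\sum_{t=1}^T \xi_t$.

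Second, I would apply Freedman's inequality (\pref{lem: freedman}) to the martingale difference sequence $\{\xi_t\}$. Here the SLP structure is essential: $0 \le V^\star \le \Bstar$ gives the bounded increments $|\xi_t| \le \Bstar$, and the conditional second moment is exactly $\E_t[\xi_t^2] = \mathbb{V}(P_t, V^\star)$. Using the time-uniform form of \pref{lem: freedman}, with probability at least $1-\delta$ and for all $K$ simultaneously, $\Xi \le O\big(\sqrt{W\ln(T/\delta)} + \Bstar \ln(T/\delta)\big)$, where $W \triangleq \sum_{t=1}^T \mathbb{V}(P_t, V^\star)$.

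The remaining step, and the crux, is to bound $W$. I would invoke \pref{lem: cost sm of var} (valid here since $|r|=r$ in SLP) together with the identity $\sum_t(V^\star(s_t)-Q^\star(s_t,a_t)) + \sum_t r(s_t,a_t) = \sum_t (V^\star(s_t)-P_t V^\star) = K\Vinit + \Xi$ (the last equality again from the telescope), which yields $W \le O(\Bstar K\Vinit + \Bstar\,\Xi + \Bstar^2 \ln(T/\delta))$. Substituting this into the Freedman bound makes the estimate self-referential in $\Xi$; the term $\sqrt{\Bstar\,\Xi\,\ln(T/\delta)}$ can be absorbed into $\Xi$ by AM-GM, and solving gives $\Xi \le O(\sqrt{\Vinit \Bstar K\ln(T/\delta)} + \Bstar\ln(T/\delta))$. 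Since $\Reg_K \ge -\Xi$, a union bound over the two high-probability events (with a constant rescaling of $\delta$) completes the proof. The main obstacle is precisely this self-reference between the martingale tail $\Xi$ and its variance proxy $W$: the argument only closes because the extra $\Bstar\,\Xi$ contributed by the realized reward enters $W$ linearly and can be reabsorbed, leaving the benign leading term $\Bstar K\Vinit$.
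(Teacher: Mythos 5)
Your proposal is correct and follows essentially the same route as the paper: the identical advantage-plus-martingale decomposition, Freedman's inequality (\pref{lem: freedman}) applied to $\sum_{t}(P_tV^\star - V^\star(s_t'))$ with conditional variance $\mathbb{V}(P_t,V^\star)$, and \pref{lem: cost sm of var} to control the accumulated variance, closed by an AM-GM absorption and solving a self-referential inequality. The only cosmetic difference is where the self-reference is closed: the paper retains the non-negative advantage sum, absorbs $\sqrt{\Bstar\sum_t(V^\star(s_t)-Q^\star(s_t,a_t))\ln(T/\delta)}$ against it, and then solves for $\sum_t r(s_t,a_t)$ via $\Reg_K = K\Vinit - \sum_t r(s_t,a_t)$, whereas you fold the advantage and reward sums into $K\Vinit + \Xi$ through the telescoping identity and solve for $\Xi$ directly --- an equivalent rearrangement.
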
 
\begin{proof}
    \begin{align*}
        \Reg_K 
        &= \sum_{k=1}^K \left(V^\star(s_{\init}) - \sum_{t=t_k}^{e_k}r(s_t,a_t)\right) \\
        &= \sum_{t=1}^T \left(V^\star(s_{t}) - V^\star(s_t') - r(s_t,a_t)\right) \\
        &= \sum_{t=1}^T \left(V^\star(s_t) - Q^\star(s_t,a_t)\right) + \sum_{t=1}^T (P_tV^\star - V^\star(s_t')) \\
        &\geq \sum_{t=1}^T \left(V^\star(s_t) - Q^\star(s_t,a_t)\right) - O\left(\sqrt{\sum_{t=1}^T \mathbb{V}(P_t,V^\star) \ln(T/\delta) } + \Bstar \ln(T/\delta)\right). 
    \end{align*}
    By \pref{lem: cost sm of var}, we can further lower bound the above expression by 
    \begin{align*}
        &\sum_{t=1}^T \left(V^\star(s_t) - Q^\star(s_t,a_t)\right) \\
        &\qquad  - O\left(\sqrt{\Bstar \sum_{t=1}^T (V^\star(s_t) - Q^\star(s_t,a_t))\ln(T/\delta) + \Bstar\sum_{t=1}^T r(s_t,a_t)\ln(T/\delta)} + \Bstar  \ln(T/\delta)\right) \\
        &\geq \sum_{t=1}^T \left(V^\star(s_t) - Q^\star(s_t,a_t)\right) - \frac{1}{2}\sum_{t=1}^T \left(V^\star(s_t) - Q^\star(s_t,a_t)\right) \\
        &\qquad - O\left(\sqrt{\Bstar\sum_{t=1}^T r(s_t,a_t)\ln(T/\delta)} + \Bstar\ln(T/\delta)\right)   \tag{AM-GM}\\
        &\geq - O\left(\sqrt{\Bstar\sum_{t=1}^T r(s_t,a_t)\ln(T/\delta)} + \Bstar\ln(T/\delta)\right). \tag{$V^\star(s_t) - Q^\star(s_t,a_t)\geq 0$}
    \end{align*}
    Hence we have 
    \begin{align*}
        \Reg_K = K\Vinit - \sum_{t=1}^T r(s_t,a_t) \geq - O\left(\sqrt{\Bstar\sum_{t=1}^T r(s_t,a_t)\ln(T/\delta)} + \Bstar\ln(T/\delta)\right).
    \end{align*}
    Solving for $\sum_{t=1}^T r(s_t,a_t)$, we get 
    \begin{align*}
        K\Vinit - \sum_{t=1}^T r(s_t,a_t) &\geq - O\left(\sqrt{\Vinit\Bstar K\ln(T/\delta)} + \Bstar\ln(T/\delta)\right). 
    \end{align*}
\end{proof}

\begin{proof}[\pref{lem: for the estimating procedure}]
    We consider a particular $i_\star\in\{1, \ldots, \lceil\log_2 U\rceil\}$ such that $2^{i-1} \leq \Vinit < 2^i$. By the assumption that $1\leq \Vinit \leq U$, such $i_\star$ always exists. Notice that in the $i_\star$-th for-loop, the input is $B=2^i\zeta \geq \Vinit \zeta \geq \Bstar$. Thus, according to \pref{thm: main thm for reward}, when the $i_\star$-th for-loop terminates, we have with probability at least $1-\delta'$, 
    \begin{align*}
        \Vinit - \hat{r}_{i_\star} 
        &\leq \frac{1}{N}\times c\times \left(\sqrt{\Vinit \Bstar SAN\tiliota_{M,B,\delta'}} + BS^2A\tiliota_{M,B,\delta'}\right) \\
        &= c\times\left(\sqrt{\frac{\Vinit \Bstar SA\tilde{\iota}_{M,B,\delta'}}{N}} + \frac{BS^2A\tilde{\iota}_{M,B,\delta'}}{N}\right) \\
        &\leq c\times \left(\sqrt{\frac{\Vinit\Bstar}{16c^2\zeta S}} + \frac{2^i\zeta }{16c^2 \zeta}\right) \tag{because the termination condition is $N\geq 16c^2\zeta S^2A\tiliota_{M,B,\delta'}$}\\
        &\leq \frac{1}{4}\Vinit + \frac{1}{8} \Vinit,   \tag{by the assumptions $\Bstar\leq \zeta \Vinit$ and $2^{i-1}\leq \Vinit$}
    \end{align*}
    which implies $\hat{r}_{i_\star}\geq \frac{1}{2}\Vinit$. 
    
    Next, we consider an arbitrary $i\in\{1, \ldots, \lceil\log_2 U\rceil\}$. Because $\Vinit$ is the expected reward of the optimal policy, in every episode, $\Vinit$ is larger than the expected reward of the learner. By \pref{lem: negative regret}, we have with probability at least $1-\delta'$, for all $N$,  
    \begin{align*}
        \Vinit - \hat{r}_i 
        &\geq -\frac{1}{N}\times c\times \left(\sqrt{\Vinit \Bstar N\ln(M/\delta')} + \Bstar S^2A\ln(M/\delta')\right) \\
        &\geq -\frac{1}{2}\Vinit, \tag{by the same calculation as above and noticing that $\ln(M/\delta')\leq \tiliota_{M,B,\delta'}$} 
    \end{align*}
    which implies $\hat{r}_i\leq \frac{3}{2}\Vinit$. With an union bound over $i$, the inequality holds for all $i$ with probability at least $1-\delta$. Combining the arguments, we conclude that with probability at least $1-2\delta$, 
    \begin{align*}
        \frac{1}{2}\Vinit \leq \max_i \{\hat{r}_i\} \leq \frac{3}{2}\Vinit
    \end{align*}
    The lemma is proven by noticing that $\hat{V} = 2\max_i\{\hat{r}_i\}$. 
\end{proof}

\begin{proof}[\pref{thm: main for parameter free reward setting}]
    We first consider the case when $\frac{\Bstar}{\Vinit}\leq \zeta$ and $\Vinit\leq U$.  
    
    Let $\{N_i\}_{i=1,2,\ldots, \lceil \log_2 U \rceil}$ be the number of episodes spent in the $i$-th for-loop in \pref{alg: estimating V*}. Thus, the total number of episodes the learner spends to estimate $\hat{V}$ is 
    \begin{align*}
        \sum_{i=1}^{\lceil \log_2 U \rceil} N_i = O\left(\sum_{i=1}^{\lceil \log_2 U\rceil}\zeta S^2A \tiliota_{M_i, 2^i\zeta, \delta'}\right)
    \end{align*}
    where $M_i$ is the number of steps spent in the $i$-th for-loop in \pref{alg: estimating V*}. By definition, $\tiliota_{M_i,2^i\zeta, \delta'}=O((\ln(SA/\delta') + \ln\ln(2^i\zeta M_i))\times \ln M_i) = O((\ln(SA/\delta) + \ln\ln T + \ln\ln (\zeta U))\times \ln T)=O(\tiliota_{T,\zeta U, \delta})$, and thus 
    \begin{align*}
        \sum_{i=1}^{\lceil \log_2 U \rceil} N_i = O\left(\zeta S^2 A \tiliota_{T,\zeta U, \delta} \times \ln U\right). 
    \end{align*}
    For these episodes, we simply bound the per-episode regret by $\Vinit$. By \pref{lem: for the estimating procedure}, $\hat{V}\in[\Vinit, 3\Vinit]$, and so the main algorithm (\pref{alg: general case}) is run with $B=\zeta \hat{V}\geq \zeta \Vinit \geq \Bstar$.  The regret incurred when running \pref{alg: general case} with $B=\zeta\hat{V}$, according to \pref{thm: main thm for reward}, is thus upper bound by 
    \begin{align*}
        O\left(\sqrt{\Vinit \Bstar SAK\tiliota_{T,\zeta U, \delta}} + \zeta \hat{V} S^2A\tiliota_{T,\zeta U, \delta}\right) \leq O\left(\sqrt{\Vinit \Bstar SAK\tiliota_{T,\zeta U, \delta}} + \zeta \Vinit S^2A\tiliota_{T,\zeta U, \delta}\right). 
    \end{align*}
    Overall, the regret (including the estimation part and the main algorithm part), is upper bounded by \begin{align*}
        O\left(\sqrt{\Vinit \Bstar SAK\tiliota_{T,\zeta U, \delta}} + \zeta \Vinit S^2A\tiliota_{T,\zeta U, \delta}\ln U\right). 
    \end{align*}
    We remind that this is the regret bound we can achieve when $\frac{\Bstar}{\Vinit}\leq \zeta$ and $\Vinit\leq U$. If either of them does not hold, we simply bound the total regret by $K\Vinit$. Hence, the overall regret without these two assumptions is upper bounded by 
    \begin{align*}
         &O\left(\sqrt{\Vinit \Bstar SAK\tiliota_{T,\zeta U, \delta}} + \zeta \Vinit S^2A\tiliota_{T,\zeta U, \delta}\ln U\right) + K\Vinit \ind{\frac{\Bstar}{\Vinit}>\zeta} + K\Vinit \ind{\Vinit > U}.  
    \end{align*}
    \paragraph{Case 1. $U$ is a known absolute upper bound for $\Vinit$} In this case, $\ind{\Vinit>U}=0$, and we have 
    \begin{align*}
        \Reg_K = O\left(\sqrt{\Vinit \Bstar SAK\tiliota_{T,\zeta U, \delta}} + \zeta \Vinit S^2A\tiliota_{T,\zeta U, \delta}\ln U\right) + K\Vinit \ind{\frac{\Bstar}{\Vinit}>\zeta}
    \end{align*}
    Let $\alpha>0$ be a parameter and set $\zeta = \sqrt{K}/\alpha$. Then the last expression can be upper bounded by 
    \begin{align*}
        &O\left(\sqrt{\Vinit \Bstar SAK\tiliota_{T,\zeta U, \delta}} + \frac{\Vinit S^2A(\ln U) \sqrt{K}\tiliota_{T,\zeta U, \delta}}{\alpha} + K\Vinit\ind{\frac{\Bstar}{\Vinit}> \frac{\sqrt{K}}{\alpha} }\right) \\
        &\leq O\left(\sqrt{\Vinit \Bstar SAK\tiliota_{T,\zeta U, \delta}} + \frac{\Vinit S^2A(\ln U) \sqrt{K}\tiliota_{T,\zeta U, \delta}}{\alpha} + \min\left\{ \alpha \Bstar\sqrt{K}, \frac{\alpha^2\Bstar^2}{\Vinit} \right\}\right)
    \end{align*}
    where in the last inequality we use two different ways to bound $K\Vinit$ under the inequality $\frac{\Bstar}{\Vinit}>\frac{\sqrt{K}}{\alpha}$: $K\Vinit\leq K\times \frac{\alpha \Bstar}{\sqrt{K}} = \alpha \Bstar\sqrt{K}$, and $K\Vinit \leq \left(\frac{\alpha\Bstar}{\Vinit}\right)^2\Vinit = \frac{\alpha^2 \Bstar^2}{\Vinit}$. If we set $\alpha=\sqrt{S^2A\ln U}$ and pick up the first term in $\min\{\cdot,\cdot\}$, then we get 
    \begin{align*}
        \Reg_K &= O\left(\sqrt{\Vinit \Bstar SAK\tiliota_{T,\zeta U, \delta}} + \Vinit \sqrt{S^2AK\ln U}\tiliota_{T,\zeta U, \delta} + \Bstar\sqrt{S^2A K\ln U} \right)\\
        &\leq O\left(\Bstar\sqrt{S^2AK\ln U}\tiliota_{T,KU,\delta}\right).
    \end{align*}
    If we set $\alpha=\sqrt{S^3A\ln U}$ and pick up the second term in $\min\{\cdot,\cdot\}$, then we get 
    \begin{align*}
        \Reg_K 
        &= O\left(\sqrt{\Vinit \Bstar SAK\tiliota_{T,\zeta U, \delta}} + \Vinit \sqrt{SAK\ln U}\tiliota_{T,\zeta U, \delta} + \frac{\Bstar^2S^3A \ln U}{\Vinit}\tiliota_{T,\zeta U,\delta}\right) \\
        &\leq O\left( \sqrt{\Vinit \Bstar SAK \ln U}\tiliota_{T,KU, \delta} + \frac{\Bstar^2S^3A \ln U}{\Vinit}\tiliota_{T,KU,\delta}\right). 
    \end{align*}
    \paragraph{Case 2. Unknown range of $\Vinit$ and set $U=K^{\frac{1}{\epsilon}}$}  By the same argument above, if we set $\zeta=\sqrt{K/(S^2A\ln U)}$, then we have 
    \begin{align*}
        \Reg_K &= 
        O\left(\Bstar\sqrt{S^2AK\ln U}\tiliota_{T,KU,\delta} + K\Vinit \ind{\Vinit > U}\right) \\
        &\leq O\left(\Bstar\sqrt{\epsilon^{-1}S^2AK\ln K}\tiliota_{T,K,\delta\epsilon} + K\Vinit \ind{\Vinit > K^{\frac{1}{\epsilon}}}\right) \\
        &\leq O\left(\Bstar\sqrt{\epsilon^{-1}S^2AK\ln K}\tiliota_{T,K,\delta\epsilon} + \Vinit^{1+\epsilon}\right),
    \end{align*}
    and if we set $\zeta = \sqrt{K/(S^3A\ln U)}$, then 
    \begin{align*}
        \Reg_K &= O\left( \sqrt{\Vinit \Bstar SAK \ln U}\tiliota_{T,KU, \delta} + \frac{\Bstar^2S^3A \ln U}{\Vinit}\tiliota_{T,KU,\delta}+ K\Vinit\ind{\Vinit>U}\right) \\
        &\leq O\left(\sqrt{ \epsilon^{-1}\Vinit \Bstar SAK \ln K}\tiliota_{T,K, \delta\epsilon} + \frac{\epsilon^{-1}\Bstar^2S^3A \ln K}{\Vinit}\tiliota_{T,K,\delta\epsilon}+ \Vinit^{1+\epsilon}\right).
    \end{align*}
\end{proof}

\section{Upper Bound for Stochastic Shortest Path}\label{app: upper bound SSP}

\begin{lemma}\label{lem: MVP for cost}
    \pref{alg: cost case parameter free} ensures $Q_{t}(s,a)\geq Q^\star(s,a)$ for all $(s,a)$ and $t$ with probability at least $1-\delta$. 
\end{lemma}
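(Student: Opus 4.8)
The plan is to prove optimism by induction on $t$, after first conditioning on a single high-probability concentration event, exactly as in the proof of \pref{lem: MVP}, but with $V^\star$ replaced by a \emph{clipped} version to accommodate the fact that the running bound $B$ may be smaller than $\Bstar$. For SSP we have $Q^\star\leq 0$, so the base case $t=1$ is immediate since $Q_1(s,a)=0\geq Q^\star(s,a)$. For the inductive step, assume $Q_t(s,a)\geq Q^\star(s,a)$ for all $(s,a)$, which gives $V_t(s)\geq V^\star(s)$ for all $s$; since only the entry $(s_t,a_t)$ changes, it suffices to show $Q_{t+1}(s_t,a_t)\geq Q^\star(s_t,a_t)$. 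Writing $B,b_t,\iota_t$ for their values at the break point of the inner loop (\pref{line: bt and iotat position}), and recalling from \pref{line: attempt Qtmp} that $Q_{t+1}(s_t,a_t)=\min\{r(s_t,a_t)+\bar P_t V_t+b_t,\,Q_t(s_t,a_t)\}$ with $Q_t(s_t,a_t)\geq Q^\star(s_t,a_t)$ by hypothesis, the task reduces to establishing $\bar P_t V_t + b_t\geq P_{s_t,a_t}V^\star$.

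The key device is the clipped optimal value $V^\star_{[B]}\triangleq\max\{-B,V^\star\}$, which satisfies $V^\star_{[B]}\in[-B,0]$ and $V^\star_{[B]}\geq V^\star$. By construction every stored $Q$-entry obeys $|Q|\leq B$ at the moment it is written, and $B$ is non-decreasing (so each entry stays within the current bound); since the $Q$-values start at $0$ and only decrease, we get $-B\leq V_t(s)\leq 0$ for all $s$. Combined with the inductive hypothesis $V_t\geq V^\star$, this yields $V_t\geq\max\{-B,V^\star\}=V^\star_{[B]}$, with both functions lying in $[-B,B]$. Applying the monotonicity property \pref{lem: monotone} (valid since $2c_1^2\leq c_2$) to $f(\bar P_t,\cdot,n_t,\iota_t)$ then gives $\bar P_t V_t+b_t\geq \bar P_t V^\star_{[B]}+\max\{c_1\sqrt{\mathbb{V}(\bar P_t,V^\star_{[B]})\iota_t/n_t},\,c_2 B\iota_t/n_t\}$.

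It remains to show this last quantity is at least $P_{s_t,a_t}V^\star$. Since $V^\star_{[B]}\geq V^\star$ we have $P_{s_t,a_t}V^\star_{[B]}\geq P_{s_t,a_t}V^\star$, so it is enough to verify that the bonus dominates the estimation error $(P_{s_t,a_t}-\bar P_t)V^\star_{[B]}$. Crucially, $V^\star_{[B]}$ has range at most $B$, so a Bernstein-type bonus scaled by $B$ (rather than by the unknown $\Bstar$) is exactly the right size: invoking a $\ln\ln$-type anytime concentration bound (the analog of \pref{lem: lnln concentration} applied to the fixed, $B$-bounded vector $V^\star_{[B]}$) shows $(P_{s_t,a_t}-\bar P_t)V^\star_{[B]}\leq \max\{c_1\sqrt{\mathbb{V}(\bar P_t,V^\star_{[B]})\iota_t/n_t},\,c_2 B\iota_t/n_t\}$ whenever $n_t\geq 4\iota_t$; for $n_t<4\iota_t$ the bias term alone satisfies $b_t\geq c_2 B/4\geq B$, and since $\bar P_t V^\star_{[B]}\geq -B$ while $P_{s_t,a_t}V^\star_{[B]}\leq 0$, the inequality holds trivially. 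This closes the induction.

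The main obstacle, and the single point where the argument genuinely departs from the known-$\Bstar$ proof of \pref{lem: MVP}, is this concentration step, because the bonus is calibrated to $B$, which along the run can be far below $\Bstar$. The clipping $V^\star_{[B]}=\max\{-B,V^\star\}$ is precisely what rescues it: it replaces the unknown range $\Bstar$ of $V^\star$ by the controllable range $B$, while only helping the target inequality through $V^\star_{[B]}\geq V^\star$. The remaining care is that $B$ is itself data-dependent, so the concentration event must hold \emph{simultaneously} over all $(s,a)$, all counts $n_t$, and all dyadic scales $B\in\{1,2,4,\dots\}$; this union is absorbed into the $\ln(SA/\delta)+\ln\ln(Bn_t)$ definition of $\iota_t$ via a peeling argument, so that a single event of probability at least $1-\delta$ supports the entire deterministic induction.
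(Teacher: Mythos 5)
Your proposal is correct and takes essentially the same route as the paper's own proof: induction on $t$ with the clipped value $V^\star_{[B]}=\max\{-B,V^\star\}$, the monotonicity property of \pref{lem: monotone} applied with $V_t\geq V^\star_{[B]}$ (using $V_t\geq -B$ from the algorithm's cap and $V_t\geq V^\star$ from the hypothesis), Freedman-type concentration on $(P_{s_t,a_t}-\bar P_t)V^\star_{[B]}$ at the controllable scale $B$, and a union bound over the dyadic values $B\in\{1,2,4,\dots\}$ absorbed into the $\ln\ln(Bn_t)$ term of $\iota_t$. Your explicit treatment of the $n_t<4\iota_t$ case is a detail the paper spells out only in the proof of \pref{lem: MVP}, but it is the same argument.
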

\begin{proof}
    For any $B>0$, we define $V^\star_{[B]}\in[-B, 0]^\calS$ to be such that 
    \begin{align*}
        V_{[B]}^\star(s) = \max\{-B, V^\star(s)\}. 
    \end{align*}

    We use induction to prove the lemma. When $t=1$, $Q_1(s,a)=0\geq Q^\star(s,a)$ for all $s,a$ since we are in the cost setting. Suppose that $0\geq Q_t(s,a)\geq Q^\star(s,a)$ for all $s,a$ (which implies $0\geq V_t(s)\geq V^\star(s)$ for all $s$). Since $Q_{t+1}$ and $Q_t$ only differ in the entry $(s_t,a_t)$, we only need to check $Q_{t+1}(s_t,a_t)\geq Q^\star(s_t,a_t)$. With the definition of $b_t$ and $\iota_t$ specified in \pref{line: bt and iotat position} of \pref{alg: cost case parameter free}, we have
    \begin{align*}
        Q_{t+1}(s_t,a_t)
        &\geq r(s_t,a_t) + \bar{P}_{t}V_t + b_t \\
        &= r(s_t,a_t) + \bar{P}_{t}V_t + \max\left\{c_1\sqrt{\frac{\mathbb{V}(\bar{P}_{t}, V_t)\iota_t}{n_{t}}}, \frac{c_2B_{t+1}\iota_t}{n_{t}}\right\} \\
        &\geq r(s,a) + \bar{P}_{t}V^\star_{[B_{t+1}]} + \max\left\{c_1\sqrt{\frac{\mathbb{V}(\bar{P}_{t}, V^\star_{[B_{t+1}]})\iota_t}{n_{t}}}, \frac{c_2B_{t+1}\iota_t}{n_{t}}\right\} \\
        &\geq r(s_t,a_t) + P_{s_t,a_t}V^\star_{[B_{t+1}]}  \\
        &\geq r(s_t,a_t) + P_{s_t,a_t}V^\star  \\
        &= Q^\star(s_t,a_t),  
    \end{align*}
    where the second inequality is because $V_t(s)\geq V^\star(s)$ by the induction hypothesis and $V_t(s)\geq -B_t\geq -B_{t+1}$ by the algorithm, which jointly gives $V_t(s)\geq V^\star_{[B_{t+1}]}(s)$; then using the monotone property in \pref{lem: monotone}. This finishes the induction. 
    
    We remark that to make the third inequality above hold for all possible $B=\{1,2,4,8,\ldots\}$ with probability $1-\delta$, we need a union bound over $B$'s. Therefore, in the third inequality above, we actually apply Freedman's inequality for the $B=2^{i}$ case with a probability parameter $\delta_i = \frac{\delta}{2i^2} = \frac{\delta}{2(\log_2 B)^2}$ so that $\sum_{i=1}^\infty\delta_i < \delta$. The additional $2(\log_2 B)^2$ factor in the log term is taken cared in the definition of $\iota_t$.

\end{proof}

\begin{proof}[\pref{thm:ssp_main_result}]
    The proof is similar to that of \pref{thm: main thm for reward}. We require the combination of the bounds in \pref{lem: V*-Q*} and \pref{lem: cost sm of var}. Notice that the proof of \pref{lem: V*-Q*} requires the condition $B\geq \Bstar$, the purpose of which is to ensure optimism $Q_t(\cdot,\cdot)\geq Q^\star(\cdot,\cdot)$. For the SSP case, since optimism is ensured by \pref{lem: MVP for cost} without requiring $B\geq \Bstar$, the conclusion of \pref{lem: V*-Q*} still holds, with the $B$ there replaced by $B_{T+1}$ in \pref{alg: cost case parameter free} (i.e., the maximum $B$ used in \pref{alg: cost case parameter free}). Furthermore, with probability at least $1-O(\delta)$, $B_{T+1}\leq 2\Bstar$ according to the arguments in \pref{sec: SSP}. Therefore, we have 
    \begin{align*}
        \sum_{t=1}^{T} \left(V^\star(s_t) - Q^\star(s_t,a_t) \right) 
        &\leq  O\left( \sqrt{SA\sum_{t=1}^{T} \mathbb{V}(P_t, V^\star)\tilde{\iota}_{T,\Bstar,\delta}} + \Bstar S^2 A\tilde{\iota}_{T,\Bstar,\delta}\right).  
    \end{align*}
    The bound in \pref{lem: cost sm of var} can be directly used. Combining them in a similar way as in the proof of \pref{thm: main thm for reward}, we get
    \begin{align}
        \Reg_K = KV^\star(s_\init) - \sum_{t=1}^T r(s_t,a_t)\leq O\left(\sqrt{\Bstar SA\sum_{t=1}^T |r(s_t,a_t)|\tiliota_{T,\Bstar,\delta}} + \Bstar S^2A \tiliota_{T,\Bstar,\delta} \right).  \label{eq: re tmp for SSP}
    \end{align} 
    Recall that in SSP, $V^\star(\cdot)\leq 0$ and $r(\cdot,\cdot)\leq 0$, so \pref{eq: re tmp for SSP} is equivalent to 
    \begin{align*}
        \sum_{t=1}^T |r(s_t,a_t)| - K\Vinit \leq O\left(\sqrt{\Bstar SA\sum_{t=1}^T |r(s_t,a_t)|\tiliota_{T,\Bstar,\delta}} + \Bstar S^2A \tiliota_{T,\Bstar,\delta} \right).
    \end{align*}
    Solving for $\sum_{t=1}^T |r(s_t,a_t)|$, we get 
    \begin{align*}
        \sum_{t=1}^T |r(s_t,a_t)| \leq O\left(K\Vinit +\Bstar S^2A \tiliota_{T,\Bstar,\delta} \right). 
    \end{align*}
    Plugging this back to \pref{eq: re tmp for SSP} finishes the proof. 
\end{proof}

\section{Lower Bound for General Stochastic Path}\label{app: general lower}
\begin{proof}[\pref{thm: lower bound for general case}]\ \ 
As mentioned in \pref{fn: conversion}, we can convert between the cases of ``deterministic initial state'' and ``random initial state'' by just introducing one additional state. In our lower bound proof, our construction is based on random initial states, but converting it to deterministic initial state is straightforward. In this proof, we use $P(\cdot|s,a)=P_{s,a}(
\cdot)$ to denote the transition probability. 

We first consider a MDP with two non-terminal states $x, y$ and a terminal state $g$. The number of actions is $A$. Let the initial state distribution be $\text{uniform}\{x,y\}$. For all action $a$, let $r(x,a)=1$ and $r(y,a)=-1$. Let $\epsilon,\Delta\in(0,\frac{1}{4}]$ be quantities to be determined later. For all action $a$, let 
\begin{align*}
    P(\cdot~|~y, a) &=  (1-\epsilon)\text{uniform}\{x, y\} + \epsilon \mathbf{1}_g. 
\end{align*}
For all but one single \emph{good} action $a^\star$, let 
\begin{align*}
    P(\cdot~|~x, a) = (1-\epsilon)\text{uniform}\{x, y\} + \epsilon \mathbf{1}_g.  
\end{align*}
For the good action $a^\star$, let 
\begin{align*}
    P(\cdot~|~x, a^\star) = \frac{1-\epsilon+\Delta}{2}\mathbf{1}_{x} + \frac{1-\epsilon-\Delta}{2}\mathbf{1}_{y} + \epsilon \mathbf{1}_g. 
\end{align*}
First, we calculate the optimal value function of this MDP. \\[0.4cm] 
\textbf{Claim 1}\ \ \ $V^\star(x) = 1+\frac{\Delta}{\epsilon}\frac{1+\epsilon}{2-\Delta}$, and $Q^\star(x,a) = 1+\frac{\Delta}{\epsilon}\frac{1-\epsilon}{2-\Delta}$ for $a\neq a^\star$.  \\
\begin{proof}[Claim 1] 
We first show that the optimal policy is to always choose $a^\star$ on state $x$. We only need to compare two deterministic policies: always choose $a^\star$, or always choose some other action $a\neq a^\star$. For the policy with $\pi(x)=a^\star$, we have 
\begin{align*}
    V^\pi(x) &= r(x,a^\star) + P(x|x,a^\star)V^\pi(x) + P(y|x,a^\star)V^\pi(y) = 1 + \frac{1-\epsilon+\Delta}{2}V^\pi(x) + \frac{1-\epsilon-\Delta}{2} V^\pi(y), \\
    V^\pi(y) &= r(y,\cdot) + P(x|y,\cdot)V^\pi(x) + P(y|y,\cdot)V^\pi(y) = -1 + \frac{1-\epsilon}{2}V^\pi(x) + \frac{1-\epsilon}{2}V^\pi(y). 
\end{align*}
Solving the equations, we get $V^\pi(x) = 1+ \frac{\Delta}{\epsilon}\frac{1+\epsilon}{2-\Delta}$. On the other hand, if $\pi(x)\neq a$, then by similar calculation, we get $V^\pi(x)=1$, which is smaller. This shows that the optimal policy is to always choose $a^\star$ on $x$. Thus, $V^\star(x)=Q^\star(x,a^\star) = 1+\frac{\Delta}{\epsilon}\frac{1+\epsilon}{2-\Delta}$. Plugging this in another bellman equation
\begin{align*}
    V^\star(y) = -1 + \frac{1-\epsilon}{2}V^\star(x) + \frac{1-\epsilon}{2}V^\star(y)
\end{align*}
we get $V^\star(y) = -1 + \frac{\Delta}{\epsilon}\frac{1-\epsilon}{2-\Delta}$, and thus for $a\neq a^\star$, 
\begin{align*}
    Q^\star(x,a) = 1 + \frac{1-\epsilon}{2}\left(V^\star(x) + V^\star(y)\right) = 1 + \frac{1-\epsilon}{2}\left(\frac{\Delta}{\epsilon}\frac{2}{2-\Delta}\right) = 1+\frac{\Delta}{\epsilon}\frac{1-\epsilon}{2-\Delta}. 
\end{align*}
\end{proof}
Next, we follow the proof idea in \cite{rosenberg2020near} and consider a \emph{truncated process}: First, we view the $K$ episodes as a continuous process in which once the learner reaches $g$, a new state is drawn from the initial distribution and the learner restarts from there. Then we cap the process to make it contain at most $\frac{K}{\epsilon}$ step: if the learner has not finished all $K$ episodes after $\frac{K}{\epsilon}$ steps, then we stop the learner before it finishes all $K$ episodes. 

In the original process, we let $\text{Reg}$ to be the regret, $T_x$ and $T_y$ be the number of steps the learner visits $x$ and $y$, respectively, $T_{x,a}$ be the number of steps the learner visits $x$ and chooses action $a$, and let $T=T_x+T_y$. We define $T_x^-, T_y^-, T_{x,a}^-, T^-$ to be the corresponding quantities in the truncated process. We first show the following claim: \\[0.4cm]
\textbf{Claim 2\ \ } $\E[\Reg] \geq \frac{2\Delta}{2-\Delta} \E[T_x^- - T_{x,a^\star}^-]$. 
\begin{proof}[Claim 2]
    We first focus on episode $k$ in the original process. Let the episode starts from $t=t_k$, and the last step in the episode before reaching $g$ is $t=e_{k}$. Then the expected regret is given by
    \begin{align*}
        \E\left[V^\star(s_{t_k}) - \sum_{t=t_k}^{e_{k}} r(s_t,a_t)\right]  
        &= \E\left[V^\star(s_{t_k}) - \sum_{t=t_k}^{e_k}(Q^\star(s_t,a_t) - P_{s_t,a_t}V^\star)\right] \\
        &= \E\left[V^\star(s_{t_k}) - \sum_{t=t_k}^{e_k}(Q^\star(s_t,a_t) - V^\star(s_{t+1}))\right] \\
        &= \E\left[\sum_{t=t_k}^{e_k}(V^\star(s_{t}) - Q^\star(s_t,a_t))\right].
    \end{align*}
    Notice that $V^\star(s_t) - Q^\star(s_t,a_t)=0$ when $s_t=y$ or $(s_t,a_t)=(x,a^\star)$. When $s_t=x$ and $a_t\neq a^\star$, we have $V^\star(s_t) -Q^\star(s_t,a_t) = \frac{\Delta}{\epsilon}\frac{2\epsilon}{2-\Delta} = \frac{2\Delta}{2-\Delta}$ according to Claim 1. Thus, the expected regret in episode $k$ is 
    \begin{align*}
        \E[\Reg_k] = \E\left[\frac{2\Delta}{2-\Delta}\sum_{t=t_k}^{e_k}\one[s_t=x, a_t\neq a^\star]\right] = \E\left[\frac{2\Delta}{2-\Delta}\sum_{t=t_k}^{e_k}(\one[s_t=x] - \one[s_t=x, a_t=a^\star])\right]. 
    \end{align*}
    Summing this over episodes, we get 
    \begin{align*}
        \E[\Reg] = \frac{2\Delta}{2-\Delta}\E[T_x - T_{x,a^\star}]. 
    \end{align*}
    Using the simple fact that $T_x - T_{x,a^\star}\geq T_x^- - T_{x,a^\star}^-$ finishes the proof.
\end{proof}
\ \\
\textbf{Claim 3\ \ } $\E[T_x^-]\geq \frac{K}{6\epsilon}$. 
\begin{proof}[Claim 3] \ \ Since on every state-action pair $(s,a)$, we have $P(x|s,a)\geq P(y|x,a)$ and the initial distribution is uniform between $x$ and $y$, it holds that $\E[T_x^-]\geq \E[T_y^-]$. Thus it suffices to show that $\E[T^-]\geq \frac{K}{3\epsilon}$. Note that $\E[T^- ]= \E[\min\{T, \frac{K}{\epsilon}\}] \geq \sum_{k=1}^K \E[\min\{L_k, \frac{1}{\epsilon}\}]$ where $L_k$ is the length of episode $k$. If we can show that $L_k\geq \frac{1}{\epsilon}$ with probability at least $\frac{1}{3}$, then we have $\sum_{k=1}^K \E[\min\{L_k, \frac{1}{\epsilon}\}] \geq \frac{1}{3}\sum_{k=1}^K \min\{\frac{1}{\epsilon}, \frac{1}{\epsilon}\} = \frac{K}{3\epsilon}$. By the definition of the transition kernel, we indeed have $\Pr[L_k\geq \frac{1}{\epsilon}]\geq (1-\epsilon)^{\frac{1}{\epsilon}-1}\geq \frac{1}{e}\geq \frac{1}{3}$. This finishes the proof. 
\end{proof}

The following proof follows the standard lower bound proof for multi-armed bandits \citep{auer2002nonstochastic}. We create $A$ different instances of MDPs, where in each of them, the choice of the good action is different. We use $\mathbb{P}_a$ and $\E_a$ to denote the probability measure and the expectation under the instance where $a$ is chosen as the good action. We further introduce an instance where every action behaves the same and there is no good action. We use $\mathbb{P}_\unif$ and $\E_\unif$ to denote the probability and expectation under this instance.
\\[0.4cm]
\textbf{Claim 4}\ \ $\E_a[T_{x,a}^-] \leq \E_\unif[T_{x,a}^-] + O\left(\frac{K\Delta}{\epsilon} \sqrt{\E_{\unif}[T^-_{x,a}]}\right)$.   
\begin{proof}[Claim 4]\ \ 
   With standard arguments from \cite{auer2002nonstochastic}, because $0\leq T_{x,a}^-\leq \frac{K}{\epsilon}$, we have 
   \begin{align*}
       &\E_a[T_{x,a}^-] - \E_{\unif}[T_{x,a}^-] \\ &\leq \frac{K}{\epsilon} \|\mathbb{P}_a - \mathbb{P}_{\unif}\|_1 \\
       &\leq \frac{K}{\epsilon}\sqrt{\frac{1}{2}\text{KL}(\mathbb{P}_a, \mathbb{P}_\unif)}  \\
       &= \frac{K}{\epsilon}\sqrt{\frac{1}{2}\sum_{t=1}^{K/\epsilon-1} \mathbb{P}_\unif[(s_{t}, a_{t})=(x,a)] \KL\left(\text{Multi}\left(\frac{1-\epsilon}{2}, \frac{1-\epsilon}{2}, \epsilon\right), \text{Multi}\left(\frac{1-\epsilon+\Delta}{2}, \frac{1-\epsilon-\Delta}{2}, \epsilon\right)\right)} \tag{by the chain rule of \text{KL}; we use $\text{Multi}(a,b,c,\ldots)$ to denote a multinomial distribution}\\
       &\leq O\left(\frac{K}{\epsilon}\sqrt{\sum_{t=1}^{K/\epsilon}\mathbb{P}_\unif[(s_t,a_t)=(x,a)] \times \frac{\Delta^2}{1-\epsilon}} \right) \\
       &= O\left(\frac{K\Delta}{\epsilon} \sqrt{\E_{\unif}[T^-_{x,a}]}\right). 
   \end{align*}
\end{proof}
Thus, the expected regret is 
\begin{align*}
    \frac{1}{A} \sum_{a} \E[\Reg] 
    &\geq  \frac{\Delta}{A}\sum_{a} \E_a[T^-_{x} - T^-_{x,a}] \\
    &\geq \frac{ K\Delta}{6\epsilon} - \frac{\Delta}{A}\sum_{a} \E_a[T_{x,a}^-] \\
    &\geq \frac{ K\Delta}{6\epsilon} - \frac{\Delta}{A}\sum_a \left(\E_{\unif}[T^-_{x,a}] + O\left(\frac{K\Delta}{\epsilon}\sqrt{\E_{\unif}[T^-_{x,a}]}\right)\right) \\
    &\geq \frac{ K\Delta}{6\epsilon} - \frac{K\Delta}{A\epsilon} - O\left(\frac{\Delta}{A}\times \frac{K\Delta}{\epsilon}\times \sqrt{A\sum_a \E_{\unif}[T^-_{x,a}]}\right) \\
    &\geq \frac{ K\Delta}{6\epsilon} - \frac{K\Delta}{A\epsilon} - O\left(\frac{K\Delta^2}{\epsilon}\sqrt{\frac{K}{A\epsilon}}\right) \\
    &= \frac{K\Delta}{\epsilon}\left(\frac{ 1}{6} - \frac{1}{A} - O\left(\Delta\sqrt{\frac{K}{A\epsilon}}\right)\right). 
\end{align*}
Picking $\Delta=\Theta\left(\sqrt{\frac{A\epsilon}{K}}\right)$, we get that $\E[\Reg]\geq \Omega\left(\sqrt{\frac{AK}{\epsilon}}\right)$. 

Notice that for the MDP we constructed, we have 
\begin{align*}
    R^2 &= \sup_{\pi}\E^{\pi}\left[\left(\sum_{t=1}^\tau r(s_t,a_t)\right)^2 \right] \\
    &= \sup_{\pi}\E^{\pi}\left[\left(\sum_{t=1}^\tau (r(s_t,a_t) - \E_{t}[r(s_t,a_t)])\right)^2 + \left(\sum_{t=1}^\tau \E_{t}[r(s_t,a_t)] \right)^2 \right]  \\
    &\leq \sup_{\pi}\E^{\pi}\left[\tau + (\Delta \tau)^2\right] \\
    & \leq \frac{1}{\epsilon} + \frac{2\Delta^2}{\epsilon^2}.  
\end{align*}
We let $\epsilon=\frac{1}{u^2}$. With this choice, $R^2\leq \frac{1}{\epsilon} + \frac{2\Delta^2}{\epsilon^2} \leq O\left(\frac{1}{\epsilon} + \frac{A}{\epsilon K}\right) \leq O\left(\frac{1}{\epsilon}\right)=O(u^2)$ if we assume $K\geq A$. Therefore, the condition in the lemma the satisfied, and the regret lower bound given by $\Omega\left(\sqrt{\frac{AK}{\epsilon}}\right)=\Omega(u\sqrt{AK})$. 

To show the lower bound for general $S$, we make $\frac{S}{2}$ copies of the MDP we constructed, and let the initial state distribution be uniform over all states. When we run on this aggregated MDP for $K$ episodes, a constant portion of the $\frac{S}{2}$ copies will be visited for $\Theta(\frac{K}{S})$ times. Using the lower bound we just established above (with $K$ replaced by $\frac{K}{S}$), we have that the regret is lower bounded by
\begin{align*}
    \Omega\left(S\times u\sqrt{ \frac{AK}{S}}\right) = \Omega(u\sqrt{SAK}). 
\end{align*}
Notice that the assumption we need becomes $\frac{K}{S}\geq A$, or $K\geq SA$. 

\end{proof}

\begin{proof}[\pref{thm: impossibility of R*}]
    We first prove this theorem for an MDP with two non-terminal state $x, y$ and a terminal state $g$. The number of actions is $A$. The construction is similar to that in \pref{thm: lower bound for general case}. On state $x$, there is potentially a good action $a^\star\in[A-1]$. The reward function and transition kernel are chosen as below: 
    \begin{center}
    \begin{tabular}{c|c||c|c|c|c|c}
        state & action & reward & $\rightarrow z$ & $\rightarrow x$ & $\rightarrow y$ & $\rightarrow g$  \\
        \hline
        $x$ & $[A-1]\backslash\{a^\star\}$ & $1$ & 0 & $\frac{1-\epsilon-\Delta}{2}$ & $\frac{1-\epsilon+\Delta}{2}$ &
        $\epsilon$ \\
        \hline
        $x$ & $a^\star$ & $1$ & 0 & $\frac{1-\epsilon+\Delta}{2}$ & $\frac{1-\epsilon-\Delta}{2}$ &
        $\epsilon$ \\
        \hline
        $x$ & $A$ & $1$ & $0$ & $0$ & $0$ & $1$ \\
        \hline
        $y$ & $[A]$ & $-1$ & $0$ & $\frac{1-\epsilon}{2}$ & $\frac{1-\epsilon}{2}$ & $\epsilon$ 
    \end{tabular}
    \end{center}
    The choices of $\epsilon, \Delta$ satisfy $\epsilon,\Delta\in(0, \frac{1}{8}]$ and $\Delta^2\leq \epsilon$. 
    
    We consider two cases: one with the good action $a^\star$, and the other without the good action. We first find the optimal policy in each case. If there is a good action on $x$, then for the policy that always choose $a^\star$ on $x$, we have 
    $V^\pi(x) = 1 + \frac{1-\epsilon+\Delta}{2}V^\pi(x) + \frac{1-\epsilon-\Delta}{2}V^\pi(y)$ and $V^\pi(y) = -1 + \frac{1-\epsilon}{2}V^\pi(x) + \frac{1-\epsilon}{2}V^\pi(y)$, which jointly give $V^\pi(x)=1+\frac{\Delta}{\epsilon}\frac{1+\epsilon}{2-\Delta}$; for the policy that always choose $[A-1]\backslash \{a^\star\}$ on $x$, we have (by similar calculation) $V^\pi(x) = 1-\frac{\Delta}{\epsilon}\frac{1+\epsilon}{2+\Delta}$; for the policy that always choose $A$ on $x$, we have $V^\pi(x)=1$. Clearly, the one that always choose $a^\star$ on $x$ gives the highest expected total reward, so it is the optimal policy in this case. Therefore, when there is a good action, 
    \begin{align}
        V^\star(x) &= Q^\star(x,a^\star) = 1 +\frac{\Delta}{\epsilon}\frac{1+\epsilon}{2-\Delta}, \\
        V^\star(y) &= -1 + \frac{1-\epsilon}{2}V^\star(x) + \frac{1-\epsilon}{2}V^\star(y) = -1 + \frac{\Delta}{\epsilon}\frac{1-\epsilon}{2-\Delta}, \\
        V^\star(x) - Q^\star(x,a) &= Q^\star(x,a^\star) - Q^\star(x,a), \nonumber \\ 
        & =  \Delta\left(V^\star(x) - V^\star(y)\right) 
        = \Delta\left(2 + \frac{2\Delta}{2-\Delta}\right)\geq 2\Delta,\ \ \ \  \text{for $a\neq a^\star, A$}   \label{eq: suboptimality for a neq a* A}\\ 
        V^\star(x) - Q^\star(x,A) &= \left(1+\frac{\Delta}{\epsilon}\frac{1+\epsilon}{2-\Delta}\right) - 1 \geq \frac{\Delta}{2\epsilon}. \label{eq: suboptimality for A}
    \end{align} 
    If there is no good action, then the optimal policy is to always choose action $A$ on state $x$. In this case, we have 
    \begin{align}
        V^\star(x) &= Q^\star(x,a^\star) = 1, \\
        V^\star(y) &= -1 + \frac{1-\epsilon}{2}V^\star(x) + \frac{1-\epsilon}{2}V^\star(y) = -1,  \\
        V^\star(x) - Q^\star(x,a) &= 1 - \left[1 + \frac{1-\epsilon-\Delta}{2}V^\star(x) + \frac{1-\epsilon+\Delta}{2}V^\star(y)\right] = \Delta\ \ \ \text{for $a\neq A$}  \label{eq: regret for x state suboptimal a}
    \end{align}
    
    We use $\mathbb{P}_{a}$ and $\E_{a}$ to denote the probability measure and expectation under the environment where $a\in[A-1]$ is chosen as the good action on $x$; we use $\mathbb{P}_{\unif}$ and $\E_{\unif}$ to denote the probability and expectation under the environment where there is no good action. 
    
    For both kinds of MDPs, we have 
    \begin{align}
        R^2 = \Rmax^2 &= \sup_{\pi}\E^{\pi}\left[\left(\sum_{t=1}^\tau r(s_t,a_t)\right)^2 \right] \tag{$\tau$ is the last step before reaching $g$ } \nonumber \\
        &= \Theta\left(\sup_{\pi}\E^{\pi}\left[\left(\sum_{t=1}^\tau (r(s_t,a_t) - \E_{t}[r(s_t,a_t)])\right)^2 + \left(\sum_{t=1}^\tau \E_{t}[r(s_t,a_t)] \right)^2 \right] \right) \nonumber \\
        &=\Theta\left( \sup_{\pi}\E^{\pi}\left[\tau + (\Delta \tau)^2\right]\right) \nonumber \\
        & =\Theta \left(\frac{1}{\epsilon} + \frac{\Delta^2}{\epsilon^2}\right) = \Theta\left(\frac{1}{\epsilon}\right),    \label{eq: R calculation} 
    \end{align}
    where in the last equation we use the assumption $\Delta^2\leq \epsilon$. 
    For the MDP with good action, we have $R_\star = R$ since the optimal policy always choose $a^\star$ on $x^\star$;
    for the MDP without good action, $R_\star=\Theta(1)$ since the optimal policy will take action $A$ on all state $x$, and directly go to $g$. 
    
    Like in the proof of \pref{thm: lower bound for general case}, we consider the truncated process where the total number of steps is truncated to $\frac{K}{\epsilon}$ (and ignoring the regret incurred later). In the truncated process, we let $T_{x}$ to denote the number of times the learner visits $x$, and $T_{x,a}$ denote the number of times the learner visits $x$ and choose action $a$. 

    With all the calculations above, we have the following: 
    \begin{align}
        \E_{\unif}[\Reg_K] 
        &\geq \E\left[\sum_{t=1}^{\frac{K}{\epsilon}} \left(V^\star(s_t) - Q^\star(s_t,a_t)\right)\right] \geq \E\left[(T_x - T_{x,A})\Delta \right],  \label{eq: regret in the unif case}
    \end{align} 
    where we use \pref{eq: regret for x state suboptimal a}. On the other hand, 
    \begin{align}
        \E_{a}[\Reg_K] \geq \E_a\left[T_{x,A}\times \frac{\Delta}{2\epsilon} + (T_x-T_{x,a}-T_{x,A})\times 2\Delta  \right] \geq \E_a\left[T_{x,A}\times \frac{\Delta}{4\epsilon} + (T_x-T_{x,a})\times 2\Delta  \right], \label{eq: Ea RegK}
    \end{align}
    where we use \pref{eq: suboptimality for a neq a* A} and \pref{eq: suboptimality for A} and that $\epsilon\leq \frac{1}{8}$.  
    
    By the same arguments as in Claim 4 in the proof of \pref{thm: lower bound for general case}, we have that for $a\in[A-1]$, 
    \begin{align}
        \big|\E_{a}[T_x- T_{x,a}] - \E_{\unif}[T_x - T_{x,a}]\big| \leq  O\left(\frac{K\Delta}{\epsilon}\sqrt{\E_{\unif}[T_{x,a}]}\right)     \label{eq: same argument 1}
    \end{align}
    and 
    \begin{align}
        \big|\E_{\unif}[T_{x,A}] - \E_{a}[T_{x,A}]\big| \leq O\left(K\Delta\sqrt{\E_{\unif}[T_{x,a}]}\right)  \label{eq: same argument 2}
    \end{align}
    where we use that $T_x - T_{x,a}\in[0,\frac{K}{\epsilon}]$ and $T_{x,A}\in[0,K]$. 
    In an environment where $a^\star$ is chosen randomly from $[A-1]$, the expected regret is 
    \begin{align*}
        &\frac{1}{A-1}\sum_{a=1}^{A-1}\E_a[\Reg_K] \\
        &\geq \frac{1}{A-1}\sum_{a=1}^{A-1} \E_a\left[\frac{T_{x,A}}{4\epsilon}\Delta + 2(T_x - T_{x,a})\Delta\right]   \tag{by \pref{eq: Ea RegK}}\\ 
        &= \frac{\Delta}{4\epsilon}\frac{1}{A-1}\sum_{a=1}^{A-1}\E_{a}[T_{x,A}] + 2\Delta\times \frac{1}{A-1}\sum_{a=1}^{A-1}\E_a[T_x-T_{x,a}] \\
        &\geq \frac{\Delta}{4\epsilon}\frac{1}{A-1}\sum_{a=1}^{A-1} \left(\E_{\unif}[T_{x,A}] - O\left(K\Delta\sqrt{\E_{\unif}[T_{x,a}]}\right)\right) \\
        &\qquad \qquad + 2\Delta \times \frac{1}{A-1}\sum_{a=1}^{A-1}\left(\E_{\unif}\left[T_{x}-T_{x,a} \right]- O\left(\frac{K\Delta}{\epsilon}\sqrt{\E_{\unif}[T_{x,a}]}\right)\right) \tag{by \pref{eq: same argument 1} and \pref{eq: same argument 2}}\\
        &\geq \frac{\Delta}{4\epsilon}\E_{\unif}[T_{x,A}]  + 2\Delta \E_{\unif}[T_x] - \frac{2\Delta}{A-1}\E_{\unif}[T_x-T_{x,A}] - O\left(\frac{K\Delta^2}{\epsilon}\sqrt{\frac{1}{A-1}\sum_{a=1}^{A-1}\E_{\unif}[T_{x,a}]}\right)  \tag{AM-GM} \\
        &\geq \frac{\Delta}{4\epsilon}\E_{\unif}[T_{x,A}]  + \Delta \E_{\unif}[T_x] - O\left(\frac{K\Delta^2}{\epsilon}\sqrt{\frac{1}{A-1}\sum_{a=1}^{A-1}\E_{\unif}[T_{x,a}]}\right). 
    \end{align*}
    Before continuing, we prove a property: 
    \paragraph{Claim 1}\ \ $\E[T_x]\geq \frac{K-2\E[T_{x,A}]}{24\epsilon}$. 
    \begin{proof}[Claim 1]
        We focus on a single episode. Let $N_x$ be the total number of steps the learner visits $x$ in an episode, $N_{x,A}$ be the number of steps the learner visits $x$ and chooses $A$. Clearly, $N_{x,A}\leq 1$ since once the learner chooses $A$, the episodes ends. 
        
        We prove the following statement: if $\E[N_{x,A}]\leq \frac{1}{2}$, then $\E[N_x]\geq \frac{1}{24\epsilon}$. This can be seen by the following: let $\calE_{A}$ be the event that in the episode, the learner \emph{ever} chooses action $A$ when visiting $x$, and let $\calE_A'$ be its complement event (i.e., replacing \emph{ever} by \emph{never}). Then
        \begin{align*}
            \Pr\left[N_x\geq \frac{1}{3\epsilon}\right] \geq \Pr\left[\calE_{A}'\right]\times\Pr\left[N_x\geq \frac{1}{3\epsilon}~\Big|~ \calE_{A}'\right]  
        \end{align*}
        Notice that $\Pr[\calE_A] = \E[N_{x,A}]\leq \frac{1}{2}$, so $\Pr[\calE_{A}']\geq \frac{1}{2}$. Then notice that 
        \begin{align*}
            \Pr\left[N_x\geq \frac{1}{3\epsilon}~\Big|~ \calE_{A}'\right] \geq \left(1-3\epsilon\right)^{\frac{1}{3\epsilon}}\geq \left(1-\frac{1}{2}\right)^{2} = \frac{1}{4}
        \end{align*}
        because every time the learner select an action in $[A-1]$, with probability at least
        \begin{align*}
            \frac{1-\epsilon-\Delta}{2} + \frac{1-\epsilon+\Delta}{2}\left(\frac{1-\epsilon}{2} + \left(\frac{1-\epsilon}{2}\right)^2 + \cdots\right) = \frac{1-\epsilon-\epsilon\Delta}{1+\epsilon}\geq 1-3\epsilon 
        \end{align*}
        he will visit state $x$ again. Hence, $\E[N_x]\geq \frac{1}{3\epsilon}\Pr\left[N_x\geq \frac{1}{3\epsilon}\right]\geq \frac{1}{3\epsilon}\times \frac{1}{2}\times \frac{1}{4}=\frac{1}{24\epsilon}$. 
        
        Now we consider all $K$ episodes, and denote $N_x^{(k)}$, $N_{x,a}^{(k)}$ denotes the visitation counts that correspond to episode $k$. By the discussion above, we have 
        \begin{align*}
            \E\left[T_x\right] &= \sum_{k=1}^K \E\left[N_x^{(k)}\right] \geq \sum_{k=1}^K \frac{1}{24\epsilon}\ind{\E\left[N_{x,A}^{(k)}\right] \leq \frac{1}{2}}  \\
            &= \frac{K}{24\epsilon} - \sum_{k=1}^K \frac{1}{24\epsilon}\ind{\E\left[N_{x,A}^{(k)}\right] > \frac{1}{2}} \\
            &\geq \frac{K}{24\epsilon} - \sum_{k=1}^K \frac{2}{24\epsilon}\E\left[N_{x,A}^{(k)}\right] \\
            &= \frac{K}{24\epsilon} - \frac{1}{12\epsilon}\E[T_{x,A}]. 
        \end{align*}
        
    \end{proof}
    
    With Claim 1, we continue with the previous calculation: 
    \begin{align*}
        \frac{1}{A-1}\sum_{a=1}^{A-1}\E_{a}[\Reg_K] 
        &\geq \frac{\Delta}{4\epsilon}\E_{\unif}[T_{x,A}]  + \Delta \E_{\unif}[T_x] - O\left(\frac{K\Delta^2}{\epsilon}\sqrt{\frac{1}{A-1}\sum_{a=1}^{A-1}\E_{\unif}[T_{x,a}]}\right) \\
        &\geq \frac{\Delta}{12\epsilon}\E_{\unif}[T_{x,A}] + \Delta\E_{\unif}[T_x] - O\left(\frac{K\Delta^2}{\epsilon}\sqrt{\frac{1}{A}\E_{\unif}[T_x-T_{x,A}]}\right) \\
        &\geq \frac{K\Delta}{24\epsilon} - O\left(\frac{K\Delta^2}{\epsilon}\sqrt{\frac{1}{A}\E_{\unif}[T_x-T_{x,A}]}\right) 
    \end{align*}
    
    Suppose that the algorithm can ensure $\E[\Reg_K]\leq O(u\sqrt{AK})$ for all instances with $R_{\max}\leq u$. By the bound in \pref{eq: R calculation}, there exists universal constants $c_2>0$ and a term $c_1$ that only involves logarithmic factors such that 
    \begin{align*}
        \frac{K\Delta}{24\epsilon} - \frac{c_2 K\Delta^2}{\epsilon}\sqrt{\frac{1}{A}\E_{\unif}[T_x-T_{x,A}]}\leq c_1\sqrt{\frac{1}{\epsilon} AK}
    \end{align*}
    We pick $\Delta=48c_1\sqrt{\frac{\epsilon A}{ K}}$ (one can verify that this satisfies $\Delta^2\leq \epsilon$ as long as $K\geq \Omega(c_1^2 A)$). Then the inequality above reads
    \begin{align*}
        2c_1\sqrt{\frac{1}{\epsilon}AK} -  48^2 c_1^2 c_2\sqrt{A\E_{\unif}[T_x-T_{x,A}]} \leq c_1\sqrt{\frac{1}{\epsilon}AK}, 
    \end{align*}
    which is equivalent to 
    \begin{align*}
        \E_{\unif}[T_x-T_{x,A}]\geq \frac{K}{48^4c_1^2c_2^2\epsilon}. 
    \end{align*}
    This, together with \pref{eq: regret in the unif case}, implies 
    \begin{align}
        \E_{\unif}[\Reg_K] \geq \frac{K\Delta}{48^4c_1^2c_2^2\epsilon} = \frac{1}{48^3c_1c_2^2}\sqrt{\frac{AK}{\epsilon}}. \label{eq: bound for uniform env} 
    \end{align}
    Recall that $\unif$ specifies the environment where there is no good action, and in this case $R_\star=\Theta(1)$. However, the bound \pref{eq: bound for uniform env} scales with $R=\Rmax=\Theta(\sqrt{1/\epsilon})\gg R_\star$. 
    
    
    
    Now we generalize our construction to general number of states $S$. We construct an MDP with $S$ non-terminating states that consists of $\frac{S}{2}$ copies of the two-state MDP we just constructed, and equip every state $x$ with two additional actions. To connect these $\frac{S}{2}$ copies, we create a balanced binary tree with $\frac{S}{2}$ nodes; each node of the tree is the $x$ in the two-state MDP. The two additional actions on every state $x$ will lead to a reward of zero and deterministic transitions to the left and the right child of the node, respectively. Furthermore, we only let at most one of the copies to have the optimal action $a^\star$. The initial state for this tree-structured MDP is its root. 

    Below, we argue that this tree-structured MDP is at least as hard as the original two-state MDP with $\Theta(SA)$ actions (to create this two-state MDP, we let its actions on state $x$ be the union over the actions on all states $x$ in the tree-structured MDPs; similar for $y$). We can see that for any algorithm in the tree-structured MDP, there is a corresponding algorithm for the two-state MDP that achieves the same expected reward. Besides, the expected reward for the optimal policy on the tree-structured MDP and the two-state MDP are the same. Therefore, our lower for general $S$ can be simply obtained through the two-state construction with $SA$ actions. This finishes the proof. 
\end{proof}

\section{Lower Bound for Stochastic Longest Path}\label{app: lower bound SLP}

\renewcommand{\arraystretch}{1.2}

\begin{proof}[\pref{thm: reward setting lower bound for agnostic alg}]
    In this proof, we use $P(\cdot|s,a)=P_{s,a}(\cdot)$ to denote transition probability. We first prove this theorem for $S=2$ (excluding goal state). We assume that the regret bound claimed by the algorithm for the $\Vinit=\Bstar\leq v$ case is
    \begin{align}
        \E[\Reg_K] \leq c v\sqrt{AK} \label{eq: claimed bound}
    \end{align}
    for some $c$ that only involves logarithmic terms. 

    We create an SLP with an two non-terminal states $x,y$, a terminal state $g$, and $A$ actions $\{1, 2, \ldots, A\}$. The initial state is $x$. The reward function is a constant $1$ for all actions on all non-terminating state (i.e., the total reward is the total number of steps before reaching $g$). On state $x$, there is a special action $b$ such that $P(g|x,b)=1-\frac{\sqrt{A}}{c\sqrt{K}\ln^2(vK)}$ and $P(y|x,b) = \frac{\sqrt{A}}{c\sqrt{K}\ln^2(vK)}$; for all other actions $a\in[A]\backslash\{b\}$,  $P(g|x,a)=P(x|x,a)=\frac{1}{2}$. On state $y$, there is potentially a good action $a^\star$ such that $P(g|y,a^\star)=\frac{\sqrt{A}}{2cv\sqrt{K}\log^2K}$ and $P(y|y,a^\star)=1-\frac{\sqrt{A}}{2cv\sqrt{K}\ln^2(vK)}$; for other actions  $a\in[A]\backslash\{a^\star\}$, $P(g|y,a)=P(y|y,a) = \frac{1}{2}$. The transition kernel is summarized in \pref{tab: theorem 9 table}. 
    
    \begin{table} 
    
    \begin{center}
    \begin{tabular}{c|c||c|c|c}
    state & action &  $\rightarrow x$ & $\rightarrow y$ & $\rightarrow g$\\
    \hline
    $x$ & $b$ & $0$ & $\frac{\sqrt{A}}{c\sqrt{K}\ln^2(vK)}$ & $1-\frac{\sqrt{A}}{c\sqrt{K}\ln^2(vK)}$\\
    $x$ & $[A]\backslash \{b\}$  & $1-\frac{1}{v}$ & $0$ & $\frac{1}{v}$\\
    \hline
    $y$ & $a^\star$ & $0$ & $1-\frac{\sqrt{A}}{2cv\sqrt{K}\ln^2(vK)}$ & $\frac{\sqrt{A}}{2cv\sqrt{K}\ln^2(vK)}$\\
    $y$ & $[A]\backslash \{a^\star\}$  & $0$ & $\frac{1}{2}$ & $\frac{1}{2}$
\end{tabular}
\end{center}
\caption{Transition kernel for \pref{thm: reward setting lower bound for agnostic alg} }\label{tab: theorem 9 table}
\end{table}

    Let $\mathbb{P}_a$ and $\mathbb{E}_a$ denote the probability measure and expectation in the instance where $a^\star=a$, and let $\mathbb{P}_{\unif}$ and $\mathbb{E}_{\unif}$ denote those in the instance where $a^\star$ does not exist. 
    
    Let $N_a$ be the total number of times (in $K$ episodes) the learner chooses action $a$ on state $y$, $N_b$ be the total number of times the learner chooses action $b$ on state $x$, and $N_y$ be the total number of times the learner visits state $y$.  
    
    In the environment where $a^\star$ does not exist, the optimal policy on state $x$ is to choose any action in $[A]\backslash\{b\}$. This is because for any policy such that $\pi(x)=b$, we have $V^\pi(x) = 1 + P(y|x,b)V^\pi(y) \leq 3$, and for any policy such that $\pi(x)\in[A]\backslash\{b\}$, we have $V^\pi(x)=v$. Therefore, 
    \begin{align*}
        \E_{\unif}[\Reg_K] = (v-3)\E_{\unif}[N_b] \geq \frac{v}{2}\E_{\unif}[N_b]. 
    \end{align*}
    In this environment, $V^\star(x)=v$ and $V^\star(y)=2$, and thus $\Vinit=\Bstar=v$. By the assumption for the algorithm, we have
    \begin{align*}
        \frac{v}{2}\E_{\unif}[N_b]\leq \E_{\unif}[\Reg_K]\leq cv\sqrt{AK}, 
    \end{align*}
    or $\E_{\unif}[N_b]\leq 2c\sqrt{AK}$. 
    
    On the other hand, in the environment where $a^\star=a$, the optimal policy is to choose action $b$ on state $x$ until transitioning to state $y$ and then choose $a^\star$ on $y$. This is because for this policy,  $V^\pi(x) \geq P(y|x,b)V^\pi(y) = \frac{\sqrt{A}}{c\sqrt{K}\ln^2(vK)}\times \frac{2cv\sqrt{K}\ln^2(vK)}{\sqrt{A}}=2v$, while for all other policies, $V^\pi(x)\leq v$. For this environment, we have $\Vinit=V^\star(x)=Q^\star(x,b)=2v$, and $\Bstar=2cv\ln^2(vK)\sqrt{\frac{K}{A}}$.
    
    
    Next, we bound the KL divergence between the two environments. Note that we have
    \begin{align*}
        \frac{1}{A}\sum_{a=1}^A \E_\unif[N_a] = \frac{1}{A} \E_{\unif}[N_y]=\frac{1}{A}P(y|x,b)\E_{\unif}[N_b]=\frac{1}{c\sqrt{AK}\ln^2(vK)}\E_{\unif}[N_b] \leq \frac{2}{\log^2(vK)}
    \end{align*}
    where in the last inequality we use $\E_{\unif}[N_b]\leq 2c\sqrt{AK}$ which we just derived above. 
    Hence,  
    \begin{align*}
        \frac{1}{A}\sum_{a=1}^A \KL(\mathbb{P}_a,\mathbb{P}_{\unif}) 
        &= \frac{1}{A}\sum_{a=1}^A \mathbb{E}_{\unif}[N_a]\KL\left(\text{Bernoulli}\left(\frac{\sqrt{A}}{2cv\sqrt{K}\ln^2(vK)}\right), \text{Bernoulli}\left(\frac{1}{2}\right)
        \right)\\
        &\leq O\left(\frac{1}{A}\sum_{a=1}^A\E_{\unif}[N_a]\ln(vK)\right) \\ 
        &=O\left(\frac{1}{\ln(vK)}\right)\,.
    \end{align*}
    
    We consider the environment where $a^\star$ is chosen uniformly randomly from $[A]$, and denote its probability measure and expectation as $\overline{\mathbb{P}}$ and $\overline{\E}$. Clearly,  $\overline{\mathbb{P}} = \frac{1}{A}\sum_{a=1}^A \mathbb{P}_a$ and $\overline{\E} = \frac{1}{A}\sum_{a=1}^A \E_a$. By the convexity of KL divergence, we have 
    \begin{align*}
        \KL(\overline{ \mathbb{P}}, \mathbb{P}_{\unif})\leq \frac{1}{A}\sum_{a=1}^A \KL(\mathbb{P}_a, \mathbb{P}_{\unif})=O\left(\frac{1}{\ln(vK)}\right). 
    \end{align*}
    By the same argument as in the proof of Claim 3, \pref{thm: lower bound for general case}, we have
    \begin{align*}
        \overline{\bbP}(N_b > K/2) 
        &\leq \bbP_\unif(N_b > K/2) + O\left(\sqrt{\KL(\overline{\bbP}, \bbP_{\unif})}\right) \\
        &\leq \frac{2}{K}\times \E_{\unif}[N_b] + O\left(\frac{1}{\sqrt{\ln(vK)}}\right) \\
        &\leq O\left(\frac{c\sqrt{A}}{\sqrt{K}} + \frac{1}{\sqrt{\ln(vK)}}\right)
    \end{align*}
    Thus, for large enough $K$, we have 
    $\overline{\bbP}(N_b > K/2) \leq \frac{1}{2}$ and hence the regret in this environment is bounded by $\Omega(v K)$.
    
    To generalize the result to general number of states, we leave the initial state $x$ unchanged, but replace the state $y$ by a binary tree with $S-1$ nodes with root $y_1$ and leaves $y_{S/2}, \ldots, y_{S-1}$. The transition probability $P(y_1|x,b)$ takes the value of $P(y|x,b)$ specified in \pref{tab: theorem 9 table}. For all non-leaf nodes, there are only two actions that can deterministically transition to its two children with zero reward. For all leaf nodes, the transition and reward are same as the $y$ node in \pref{tab: theorem 9 table}. Among all leaves, there is at most one action on one node being the good action $a^\star$. 
    
    Similar to the proof of \pref{thm: impossibility of R*}, it is not hard to see that this MDP is at least as hard as the original two-state MDP with $\Omega(SA)$ actions on $y$, by letting the action set on $y$ in the two-state MDP to be the union of actions on the leaves in the tree-structed MDP. This is because the optimal value on these two cases are the same, and every policy in the tree-structured MDP can find a corresponding policy in the two-state MDP with the same expected reward. Hence our lower bound for general $S$ can be obtained by our original constant $S$ construction with $\Theta(SA)$ actions. 
    

\end{proof}

\section{Auxiliary Lemmas}


\begin{lemma}\label{lem: X lemma}
    Let $X_t\in[-c, c]^S$ be in the filtration of $(s_1, a_1, \ldots, s_{t-1}, a_{t-1}, s_t)$ for some $c>0$ with $X_t(g)\triangleq 0$.  Then with probability at least $1-\delta$, 
    \begin{align*}
        \sum_{t=1}^T \mathbb{V}(P_t,X_t) \leq O\left( c\sum_{t=1}^T |X_t(s_t) - P_tX_t| + c\sum_{t=1}^T  \|X_t-X_{t+1}\|_\infty + c^2\ln(1/\delta) \right).
    \end{align*}
\end{lemma}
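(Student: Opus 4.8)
The plan is to prove this via a law-of-total-variance telescoping closed off by a self-bounding Freedman argument. Write $P_t \triangleq P_{s_t,a_t}$ and recall $\mathbb{V}(P_t, X_t) = \E_{s'\sim P_t}[X_t(s')^2] - (P_t X_t)^2$. First I would record the pointwise decomposition: since $X_t(s_t)^2 - (P_t X_t)^2 = (X_t(s_t) - P_t X_t)(X_t(s_t) + P_t X_t)$ and $|X_t(\cdot)|,|P_t X_t|\le c$,
\begin{align*}
\mathbb{V}(P_t, X_t) \le \E_{s'\sim P_t}[X_t(s')^2] - X_t(s_t)^2 + 2c\,|X_t(s_t) - P_t X_t|.
\end{align*}
Summing over $t$ already yields the target term $c\sum_t |X_t(s_t) - P_t X_t|$ and leaves the residual $\sum_t\big(\E_{s'\sim P_t}[X_t(s')^2] - X_t(s_t)^2\big)$, which the remaining steps must control.

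Second, I would split this residual as $\sum_t \eta_t + \sum_t\big(X_t(s_t')^2 - X_t(s_t)^2\big)$, where $\eta_t \triangleq \E_{s'\sim P_t}[X_t(s')^2] - X_t(s_t')^2$ is a martingale difference with $\E_t[\eta_t]=0$ and $|\eta_t|\le c^2$. The second sum telescopes: whenever $s_t'\ne g$ one has $s_{t+1}=s_t'$, so $X_t(s_t')^2 = X_{t+1}(s_{t+1})^2 + \big(X_t(s_{t+1})^2 - X_{t+1}(s_{t+1})^2\big)$ with the correction bounded by $2c\|X_t - X_{t+1}\|_\infty$; whenever $s_t'=g$ we have $X_t(s_t')=X_t(g)=0$, so that term vanishes. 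Re-indexing (with a single boundary contribution of order $c^2$) gives $\sum_t X_t(s_t')^2 \le \sum_t X_t(s_t)^2 + 2c\sum_t\|X_t - X_{t+1}\|_\infty + c^2$, hence $\sum_t\big(X_t(s_t')^2 - X_t(s_t)^2\big) \le 2c\sum_t\|X_t - X_{t+1}\|_\infty + c^2$, which is exactly the second target term.

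Third, I would bound the martingale $\sum_t\eta_t$ by Freedman's inequality (\pref{lem: freedman}). Its conditional variance is $\E_t[\eta_t^2]=\mathbb{V}(P_t, X_t^2)$, and since $y\mapsto y^2$ is $2c$-Lipschitz on $[-c,c]$ one gets the variance-contraction bound $\mathbb{V}(P_t, X_t^2)\le 4c^2\,\mathbb{V}(P_t, X_t)$ (in one line from $\mathrm{Var}(f(Y))=\tfrac12\E[(f(Y)-f(Y'))^2]$ for an independent copy $Y'$ together with $|f(y)-f(y')|\le 2c|y-y'|$). Freedman then yields, with probability at least $1-\delta$,
\begin{align*}
\sum_t \eta_t \le O\!\left(c\sqrt{\sum_t \mathbb{V}(P_t, X_t)\,\ln(1/\delta)} + c^2\ln(1/\delta)\right).
\end{align*}
Collecting the three pieces produces a self-bounding inequality $V \le A + O\big(c\sqrt{V\ln(1/\delta)}\big)$, where $V \triangleq \sum_t \mathbb{V}(P_t, X_t)$ and $A$ is the claimed right-hand side; applying AM-GM to absorb $\tfrac12 V$ and solving for $V$ finishes the proof. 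The main obstacle is exactly this circularity: a naive Azuma bound on $\sum_t\eta_t$ would inject a spurious $\sqrt{T}$ factor, so the crux is to route the martingale's variance back through $\sum_t \mathbb{V}(P_t, X_t)$ via the Lipschitz contraction and then close the loop with Freedman and AM-GM.
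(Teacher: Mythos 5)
Your proposal is correct and follows essentially the same route as the paper's proof of \pref{lem: X lemma}: the identical three-way decomposition of $\mathbb{V}(P_t,X_t)$ into a difference of squares, a telescoping term handled via $X_t(g)=0$, and a martingale term controlled by Freedman's inequality (\pref{lem: freedman}) with the variance contraction $\mathbb{V}(P_t,X_t^2)\leq 4c^2\,\mathbb{V}(P_t,X_t)$, closed off by the same AM-GM self-bounding step. The only cosmetic difference is that you derive the contraction inline via the Lipschitz identity rather than citing \pref{lem: lemma 30}.
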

\begin{proof}
     Denote $P_t:=P_{s_t,a_t}$. 
     \begin{align*}
        &\sum_{t=1}^T \mathbb{V}(P_t, X_t) \\ 
        &= \sum_{t=1}^T \left( P_t X_t^2 -  (P_t X_t)^2 \right)\\
        &= \sum_{t=1}^T \left(P_tX_t^2 - X_t(s_t')^2 \right) + \sum_{t=1}^T \left(X_t(s_t')^2 - X_t(s_t)^2\right) + \sum_{t=1}^T \left(X_t(s_t)^2 - (P_tX_t)^2\right) \\
        &\leq O\left(\sqrt{\sum_{t=1}^T \mathbb{V}(P_t,X_t^2) \ln(1/\delta)} + c^2\ln(1/\delta)\right) + \sum_{t=1}^T (X_t(s_{t+1})^2 - X_t(s_t)^2) + \sum_{t=1}^T \left(X_t(s_t)^2 - (P_tX_t)^2\right)  \tag{because $X_t(g)=0$} \\
        &\leq O\left(c \sqrt{\sum_{t=1}^T  \mathbb{V}(P_t,X_t)\ln(1/\delta)} + c^2\ln(1/\delta)\right) + 2c\sum_{t=1}^T  \|X_t-X_{t+1}\|_\infty + 2c\sum_{t=1}^T |X_t(s_t) - P_tX_t|\\
        &\leq \frac{1}{2}\sum_{t=1}^T  \mathbb{V}(P_t,X_t) + 2c\sum_{t=1}^T  \|X_t-X_{t+1}\|_\infty + 2c\sum_{t=1}^T |X_t(s_t) - P_tX_t|+ O\left( c^2\ln(1/\delta)\right). \tag{AM-GM inequality}
    \end{align*}
    Solving the inequality we get the desired inequality. 
\end{proof}

\begin{lemma}\label{lem: sum of positive}
    Let $X_1, X_2, \ldots, X_\tau\subset [0,b]$ be a sequence with a random stopping time $\tau$, where $X_i$ is in the filtration of  $\mathcal{F}_i=(X_1, \ldots, X_{i-1})$, for some $b\geq 1$. 
    Suppose that for any $i$, $
        \E\left[\sum_{t=i}^\tau X_t ~|~\mathcal{F}_i\right] \leq B.
    $
    Then
    \begin{enumerate}[label=(\alph*)]
        \item with probability at least $1-\delta$, 
    $
        \sum_{t=1}^\tau X_t \leq O((B+b)\ln(1/\delta)), 
    $ 
    \item $
        \E\big[\left(\sum_{t=1}^{\tau}X_t\right)^2 \big]\leq O\left((B+b)\ln_+(\frac{B+b}{c})\E\big[\sum_{t=1}^\tau X_t\big] +  c^2\right)
    $ for any $1\leq c\leq B+b$. 
    \item $
        \E\big[\left(\sum_{t=1}^{\tau}X_t\right)^2 \big]\leq O\left((B+b)^2\right). 
    $
    \end{enumerate}
\end{lemma}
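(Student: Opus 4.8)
The plan is to treat $S \triangleq \sum_{t=1}^{\tau} X_t$ as a single nonnegative random variable whose tail decays geometrically, and to read off all three statements from this decay together with Markov's inequality. The one engine behind everything is the renewal-type estimate
\[
    \Pr[S \ge u + C] \le \tfrac{1}{2}\,\Pr[S \ge u] \qquad \text{for all } u \ge 0,
\]
where $C \triangleq 2(B+b)$. I would establish this recursion first and then derive (a)--(c) by elementary summation and integration.

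To prove the geometric step, fix $u \ge 0$ and let $\sigma_u$ be the first index $i$ at which the running sum $\sum_{t \le i} X_t$ reaches $u$; this is a stopping time for $\{\mathcal{F}_i\}$, and on $\{\sigma_u \le \tau\}$ the overshoot is at most $b$ since $X_{\sigma_u}\le b$, so the running sum at $\sigma_u$ lies in $[u,u+b)$. Hence $\{S \ge u+C\} \subseteq \{\sigma_u \le \tau\}$, and on $\{\sigma_u \le \tau\}$ reaching $u+C$ forces the \emph{residual} sum $\sum_{t=\sigma_u+1}^{\tau} X_t$ to be at least $C-b = 2B+b \ge 2B$. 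Applying the hypothesis at $i=\sigma_u+1$ (formally, conditioning on each $\mathcal{F}_i$-measurable event $\{\sigma_u+1=i\}$, on which $\E[\sum_{t=i}^{\tau}X_t \mid \mathcal{F}_i]\le B$) and using the conditional Markov inequality, the residual exceeds $2B$ with conditional probability at most $B/(2B)=\tfrac12$; taking expectations gives the displayed recursion. The main obstacle of the whole argument lives here: one must justify invoking the tail-sum bound, stated for deterministic indices, at the random crossing time $\sigma_u$, and the $b$-overshoot bookkeeping is precisely what keeps the residual threshold $\ge 2B$.

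Iterating from $\Pr[S\ge 0]\le 1$ yields the sub-exponential tail $\Pr[S\ge u]\le 2\cdot 2^{-u/C}$ for all $u\ge 0$. Part (a) then follows by choosing $u$ with $2\cdot 2^{-u/C}\le \delta$, i.e. $u = O\!\big(C\ln(1/\delta)\big)=O\!\big((B+b)\ln(1/\delta)\big)$. Part (c) follows from $\E[S^2]=\int_0^\infty 2t\,\Pr[S\ge t]\,dt$ and the same tail, since $\int_0^\infty 2t\cdot 2\cdot 2^{-t/C}\,dt = O(C^2)=O\big((B+b)^2\big)$.

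For part (b) I would inject the first-moment factor through Markov's inequality $\Pr[S\ge u]\le \E[S]/u$, combined with the geometric decay. Splitting $\E[S^2]=\int_0^\infty 2t\,\Pr[S\ge t]\,dt$ at the free parameter $c\in[1,B+b]$, the low range contributes $\int_0^c 2t\,\Pr[S\ge t]\,dt \le \int_0^c 2t\,dt = c^2$ (using $\Pr[\cdot]\le 1$), which produces the additive $c^2$ term. On $[c,C]$ the Markov bound gives $\int_c^C 2t\,(\E[S]/t)\,dt = 2(C-c)\,\E[S]=O\big((B+b)\,\E[S]\big)$, and on $[C,\infty)$ seeding the recursion with $\Pr[S\ge C]\le \E[S]/C$ gives $\Pr[S\ge t]\le O(\E[S]/C)\,2^{-t/C}$, whence $\int_C^\infty 2t\,\Pr[S\ge t]\,dt = O\big((B+b)\,\E[S]\big)$. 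Summing the three pieces proves the claim; in fact this route yields $\E[S^2]=O\big(c^2 + (B+b)\,\E[S]\big)$, so the stated factor $\ln_+\!\big(\tfrac{B+b}{c}\big)$ is more than enough and (b) follows a fortiori. The remaining work is only the constant bookkeeping in these integrals, which is routine once the recursion and the two tail bounds are in hand.
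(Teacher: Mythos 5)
Your proof is correct, and its engine---the geometric tail recursion $\Pr[S\ge u+C]\le\tfrac12\Pr[S\ge u]$, obtained by applying conditional Markov to the residual sum at the level-crossing time---is the same renewal idea the paper uses: the paper defines stopping times $\tau_m$ marking successive accumulations of $2B$ in the running sum, shows $\Pr[\tau_{m+1}<\infty\mid\tau_m<\infty]\le \tfrac12$ by exactly the same Markov step (so the measurability caveat you flag at $\sigma_u+1$ is present, and treated just as informally, in the paper's invocation of the hypothesis at $\tau_m+1$; your suggested fix---decomposing over the $\mathcal{F}_i$-measurable events $\{\sigma_u=i-1\}$---is the right way to make either version rigorous), and notes each block contributes at most $2B+b$, which is your overshoot bookkeeping. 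Part (a) is then essentially identical in both proofs. Where you genuinely diverge is part (b): the paper decomposes $\E[S^2]$ directly over the events $\{\tau_{m-1}<\infty,\,\tau_m=\infty\}$, bounds $S\le(2B+b)m$ on each, and tunes a threshold $M\approx 4\log_2\big(\tfrac{2B+b}{c}\big)$ to balance the first-moment piece against the geometric tail---this tuning is precisely what produces the $\ln_+\big(\tfrac{B+b}{c}\big)$ factor. You instead integrate the tail, $\E[S^2]=\int_0^\infty 2t\,\Pr[S\ge t]\,dt$, using the trivial bound below $c$, Markov's $\E[S]/t$ on $[c,C]$, and the Markov-seeded geometric decay beyond $C$; the three integrals check out and yield $\E[S^2]=O\big(c^2+(B+b)\,\E[S]\big)$ with no logarithm, which is strictly stronger than the stated (b) and still recovers (c) via $c=B+b$ and $\E\big[\sum_{t=1}^\tau X_t\big]\le B$, just as the paper does. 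So: same key lemma, a different second-moment decomposition, and your route buys a log-free bound at no extra cost---the remaining work is only the constant bookkeeping you already identify as routine.
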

\begin{proof}
    For a sequence $X_1, X_2, \ldots, X_\tau$, define
    \begin{align*}
        \tau_1 = \min\left\{n\leq \tau:~\sum_{t=1}^n X_t \geq 2B\right\}, 
    \end{align*}
    and for $m\geq 2$, define
    \begin{align*}
        \tau_m = \min\left\{n\leq \tau:~\sum_{t=\tau_{m-1}+1}^n X_t \geq 2B\right\}
    \end{align*}
    If such $\tau_m$ does not exist, let $\tau_m=\infty$. Naturally, we define $\tau_0=0$. 
    
    By the condition stated in the lemma and Markov's inequality, we have 
    \begin{align*}
        \Pr\left[\tau_{m+1}<\infty~|~\tau_m<\infty\right] \leq \Pr\left[ \sum_{t=\tau_m+1}^{\tau} X_t \geq 2B\right] \leq \frac{1}{2}. 
    \end{align*}
    Therefore, $\Pr[\tau_m<\infty] \leq 2^{-m}$ and $\Pr[\tau_m=\infty] \geq 1-2^{-m}$. Also, notice that by the definition of $\tau_i$, we have $\sum_{t=\tau_{i-1}+1}^{\tau_i}X_t\leq 2B+b$ for all $i$ (otherwise, $\sum_{t=\tau_{i-1}+1}^{\tau_i-1}X_t> 2B$, contradicting the definition of $\tau_i$). Thus, if $\tau_m=\infty$, then $$\sum_{t=1}^\tau X_t \leq \sum_{i=1}^{m-1}\sum_{t=\tau_{i-1}+1}^{\tau_i}X_t + \sum_{t=\tau_{m-1}+1}^\tau X_t\leq (2B+b)(m-1)+2B\leq (2B+b)m.$$ Combining the arguments above, we have the following: for any $\delta < 0.5$ (letting $m=\lceil \log_2(1/\delta)\rceil$), with probability at least $1-2^{-m}\geq 1-\delta$, 
    \begin{align*}
        \sum_{t=1}^\tau X_t \leq (2B+b)m = (2B+b)\left\lceil\log_2(1/\delta)\right\rceil \leq 8(B+b)\ln(1/\delta). 
    \end{align*}
    This proves (a). 
    Below we bound the second moment: 
    \begin{align*}
        &\E\left[\left(\sum_{t=1}^\tau X_t\right)^2\right] \\ 
        &\leq  \E\left[\left(\sum_{t=1}^\tau X_t\right)^2~\Bigg|~\tau_M=\infty\right]\Pr\left[\tau_M=\infty\right] \\
        &\qquad + \sum_{m=M+1}^\infty  \E\left[\left(\sum_{t=1}^\tau X_t\right)^2~\Bigg|~\tau_m=\infty\right] \Pr[\tau_{m-1}<\infty, \tau_{m}=\infty]\\
        &\leq (2B+b)M \E\left[\sum_{t=1}^\tau X_t~\Bigg|~\tau_M=\infty\right]  + \sum_{m=M+1}^\infty \left((2B+b)m\right)^2 \Pr[\tau_{m-1}<\infty] \\
        &\leq 2M(B+b)\E\left[\sum_{t=1}^\tau X_t\right] + (2B+b)^2\sum_{m=M+1}^\infty 2^{-m+1}m^2.  \\
    \end{align*}
    If we pick $M = \lceil 4\log_2(\frac{2B+b}{c}) \rceil$, then $(2B+b)^2 2^{-\frac{M}{2}}\leq c^2$, and the last expression can be further upper bounded by 
    \begin{align*}
        & O\left((B+b)\ln_+\left(\frac{B+b}{c}\right)\E\left[\sum_{t=1}^\tau X_t\right] +  c^2\sum_{m=M+1}^\infty 2^{-\frac{m}{2}+1}m^2\right)\\
        &=O\left((B+b)\ln_+\left(\frac{B+b}{c}\right)\E\left[\sum_{t=1}^\tau X_t\right] + c^2\right), 
    \end{align*}
    which proves (b). 
    (c) is an immediate result of (b) by picking $c=B+b$ and noticing that $\E[\sum_{t=1}^\tau X_t]\leq B$.   
\end{proof}

\begin{lemma}[Exercise 5.15 in \cite{lattimore2020bandit}]
Let $X_t$ be a real valued random variable in the filtration of $\calF_t = (X_1,\dots X_{t-1})$ such that $\E[X_t\,|\,\calF_t] = 0$ and assume $\eta>0$, $\E[X_t\,|\,\calF_t]<\eta^{-1}$ a.s. Then with probability at least $1-\delta$ for all $0<t\leq T$, 
$$\sum_{s=1}^tX_s \leq \eta\sum_{s=1}^t\E[X_s^2\,|\,\calF_s]+\eta^{-1}\ln(\delta^{-1})\,.$$
\end{lemma}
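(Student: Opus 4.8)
The plan is to use the standard exponential-supermartingale argument underlying Freedman- and Bernstein-type inequalities, combined with Ville's maximal inequality to make the bound hold uniformly over all $t\le T$. I read the second hypothesis as the almost-sure boundedness condition $X_t\le \eta^{-1}$ (this is what the argument needs and what the stated inequality is calibrated for); the genuine zero-mean assumption is $\E[X_t\mid\calF_t]=0$.

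First I would record the elementary inequality $e^y\le 1+y+y^2$, valid for every $y\le 1$. Applying it with $y=\eta X_s\le 1$ gives the pointwise bound $e^{\eta X_s}\le 1+\eta X_s+\eta^2 X_s^2$. Taking the conditional expectation given $\calF_s$, using $\E[X_s\mid\calF_s]=0$ and then $1+x\le e^x$, yields
\[
    \E\big[e^{\eta X_s}\mid\calF_s\big]\le 1+\eta^2\,\E\big[X_s^2\mid\calF_s\big]\le \exp\big(\eta^2\,\E\big[X_s^2\mid\calF_s\big]\big).
\]

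Next I would introduce the process
\[
    M_t\triangleq\exp\Big(\eta\sum_{s=1}^t X_s-\eta^2\sum_{s=1}^t\E\big[X_s^2\mid\calF_s\big]\Big),\qquad M_0\triangleq 1,
\]
and check that $(M_t)$ is a nonnegative supermartingale for $(\calF_t)$. Since $M_{t-1}$ and $\E[X_t^2\mid\calF_t]$ are $\calF_t$-measurable while $X_t$ is not, the displayed inequality gives $\E[M_t\mid\calF_t]=M_{t-1}\exp(-\eta^2\E[X_t^2\mid\calF_t])\,\E[e^{\eta X_t}\mid\calF_t]\le M_{t-1}$, with $\E[M_0]=1$. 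Ville's maximal inequality for nonnegative supermartingales then gives $\bbP[\sup_{t\ge 1}M_t\ge 1/\delta]\le\delta$, so with probability at least $1-\delta$ we have $M_t<1/\delta$ simultaneously for all $t$; taking logarithms and rearranging is exactly the claimed $\sum_{s=1}^t X_s\le\eta\sum_{s=1}^t\E[X_s^2\mid\calF_s]+\eta^{-1}\ln(1/\delta)$ for all $0<t\le T$.

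The only delicate point is securing the bound \emph{uniformly} in $t$: a pointwise Chernoff/Markov argument would control only $M_T$ at a fixed horizon, so I rely on the maximal inequality for nonnegative supermartingales to upgrade the one-step supermartingale property to a uniform-in-time, high-probability statement. Everything else is routine once the boundedness hypothesis is read correctly.
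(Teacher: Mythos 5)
Your proof is correct, but it reaches the anytime statement by a genuinely different route than the paper. The paper does not re-prove the concentration inequality at all: it cites the fixed-horizon version of Exercise 5.15 from \cite{lattimore2020bandit} and upgrades it to a uniform-in-$t$ statement with a freezing trick, defining the surrogate sequence $X'_i = X_i\cdot\bigl(1-\ind{\exists\, 0\leq j<i:\ \sum_{s=1}^j X_s > \eta\sum_{s=1}^j \E[X_s^2\,|\,\calF_s]+\eta^{-1}\ln(\delta^{-1})}\bigr)$, which is adapted, still conditionally mean-zero and still bounded by $\eta^{-1}$; a violation at any time $t\leq T$ for the original sequence forces a violation of the fixed-horizon bound for $X'$ at time $T$, so the fixed-$T$ guarantee transfers to all $t\leq T$. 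You instead prove everything from scratch: the inequality $e^y\leq 1+y+y^2$ for $y\leq 1$, the nonnegative supermartingale $M_t=\exp\bigl(\eta\sum_{s\leq t}X_s-\eta^2\sum_{s\leq t}\E[X_s^2\,|\,\calF_s]\bigr)$, and Ville's maximal inequality in place of the stopped surrogate. Your handling of the measurability convention ($X_t$ not $\calF_t$-measurable, the variance compensator $\calF_t$-measurable) is correct, and you rightly read the paper's typo'd hypothesis $\E[X_t\,|\,\calF_t]<\eta^{-1}$ as the one-sided boundedness $X_t\leq\eta^{-1}$ a.s., which is indeed all the exponential bound needs. In terms of trade-offs: the paper's argument is shorter given the citation and is a reusable device for converting any fixed-horizon martingale bound into an anytime one; yours is self-contained, yields uniformity over all $t\geq 1$ (not just $t\leq T$) in one stroke, and in effect also supplies the proof of the cited exercise, since the fixed-horizon version follows from your supermartingale by Markov at time $T$.
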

\begin{proof}
The claim is stated for a fixed horizon $T$ in \cite{lattimore2020bandit}. However, we can define the surrogate random variable 
\begin{align*}
    X'_i \triangleq  X_i\cdot \left(1-\ind{\exists 0\leq j<i:\, \sum_{s=1}^jX_s > \eta\sum_{s=1}^j\E[X_s^2\,|\,\calF_s]+\eta^{-1}\ln(\delta^{-1})}\right). 
\end{align*}
$X'_i$ is adapted to the filtration and it holds with probability $1-\delta$, 
$$\sum_{s=1}^TX'_s \leq \eta\sum_{s=1}^T\E[X^{\prime 2}_s\,|\,\calF_s]+\eta^{-1}\ln(\delta^{-1})\,,$$
which is equivalent to the anytime result for $X_i$.
\end{proof}

\begin{lemma}
    \label{lem: freedman}
    Let $X_t$ be a real valued random variable in the filtration of $\calF_t = (X_1,\dots X_{t-1})$ such that $\E[X_t\,|\,\calF_t] = 0$ and assume $\E[|X_t|\,|\,\calF_t]<\infty$ a.s. Then with probability at least $1-\delta$ uniformly over all $T>0$, 
\begin{align*}
    \sum_{t=1}^T X_t \leq   4\sqrt{V_T \left(\ln(\delta^{-1})+2\ln\ln(V_T))\right)} + eU_T\ln\left(\frac{2\ln^2(eU_T)}{\delta}\right)\,,
\end{align*}
	where $V_T = \sum_{t=1}^T \E[X_t^2|\calF_t]$, $U_T = \max\{1,\max_{t\in[T]}X_t\}$\,.
\end{lemma}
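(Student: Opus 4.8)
The plan is to lift the fixed-$\eta$ Bernstein bound of the preceding lemma (Exercise~5.15) to a self-normalized, time-uniform statement by stitching over a geometric grid of the free parameter $\eta$, allocating a shrinking confidence budget to each grid point. The two terms in the target correspond to the two regimes of the optimal $\eta$: the variance-dominated choice $\eta \approx \sqrt{\ln(\delta^{-1})/V_T}$ produces $\sqrt{V_T\ln(\delta^{-1})}$, while the range-constrained choice $\eta \approx 1/U_T$ produces $U_T\ln(\delta^{-1})$. The $\ln\ln V_T$ and $\ln^2(eU_T)$ factors are precisely the union-bound overhead of locating the correct grid point in each regime.

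Concretely, I would set $\eta_j = e^{-j}$ for integers $j \ge 0$ and assign confidence $\delta_j = \delta/(2(1+j)^2)$, so that $\sum_{j\ge 0}\delta_j < \delta$. The immediate difficulty is that Exercise~5.15 requires the deterministic bound $X_t \le \eta_j^{-1} = e^j$, whereas we only control the data-dependent maximum $U_T$. I would resolve this with a stopping-time truncation: for each $j$ let $\sigma_j = \min\{t : X_t > e^j\}$ and apply Exercise~5.15 to the stopped sequence $\tilde X^{(j)}_t = X_t\ind{\sigma_j \ge t}$. This sequence is adapted, still conditionally mean-zero (since $\{\sigma_j \ge t\}$ is $\calF_t$-measurable), bounded above by $e^j$ a.s., and has conditional second moments dominated by $\E[X_t^2\mid\calF_t]$; hence with probability at least $1-\delta_j$, uniformly over all $t$, $\sum_{s\le t}\tilde X^{(j)}_s \le e^{-j}V_t + e^j\ln(\delta_j^{-1})$. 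A union bound over $j$ makes all these inequalities hold simultaneously with probability at least $1-\delta$.

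On this event, fix any horizon $T$ and let $j^\dagger = \lceil \ln U_T\rceil$ be the smallest admissible index, so that for every $j \ge j^\dagger$ the truncation never fires up to time $T$ and $\sum_{s\le T}\tilde X^{(j)}_s = \sum_{s\le T}X_s$. It then remains to minimize $e^{-j}V_T + e^j(\ln(\delta^{-1}) + 2\ln(1+j))$ over the admissible grid $j \ge j^\dagger$. If the unconstrained optimizer $j^\star \approx \tfrac12\ln(V_T/\ln(\delta^{-1}))$ is admissible, choosing the nearest grid point puts us in the variance regime: the grid ratio $e$ inflates the ideal constant $2$ to at most $2\sqrt e \le 4$, and $2\ln(1+j^\star) \le 2\ln\ln V_T$ up to constants, yielding the first term. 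Otherwise $j^\star < j^\dagger$ and I would use $j = j^\dagger$; then $e^{j^\dagger}\le e\,U_T$, the variance contribution $e^{-j^\dagger}V_T \le V_T/U_T$ is subsumed because $V_T \le U_T^2\ln(\delta^{-1})$ exactly in this regime, and $2\ln(1+j^\dagger)\le 2\ln\ln(eU_T)$ produces the second term $eU_T\ln(2\ln^2(eU_T)/\delta)$.

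I expect the stopping-time truncation reconciling the deterministic boundedness hypothesis of Exercise~5.15 with the random $U_T$ to be the main obstacle; verifying that the stopped sequence meets all hypotheses (adaptedness, conditional mean zero, the a.s.\ upper bound, and coincidence of its partial sums with those of $X$ on the relevant event) is the crux of the argument. Once that is in place, the grid optimization and the bookkeeping of the two regimes are routine, as is the time-uniformity, which is inherited from the anytime form of Exercise~5.15 together with the fact that the grid event holds for all $j$ and all $t$ simultaneously.
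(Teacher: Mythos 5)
The crux you flagged is exactly where your argument breaks, and your resolution of it is wrong as stated. With $\sigma_j=\min\{t: X_t>e^j\}$, the event $\{\sigma_j\ge t\}$ only asserts that $X_s\le e^j$ for all $s<t$; it places no constraint on $X_t$ itself. On a path where $t$ is the \emph{first} exceedance time, $\ind{\sigma_j\ge t}=1$ and $\tilde X^{(j)}_t=X_t>e^j$, so your claim that the stopped sequence is ``bounded above by $e^j$ a.s.''\ is false, and the hypothesis of Exercise~5.15 --- which is precisely an a.s.\ upper bound --- is not met. There is a genuine trade-off here that cannot be wished away: a predictable (previsible) truncation like yours preserves the conditional mean-zero property but loses boundedness, whereas truncating on $\{X_t\le e^j\}$ as well restores boundedness but destroys exact mean zero, since the indicator then depends on $X_t$.

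The paper takes the second horn and shows it is harmless because the error has a sign: it sets $Z^{(i)}_t = X_t\,\ind{U_t\le e^i}$ with the \emph{current} value included in the running maximum, so $e^{-i}Z^{(i)}_t\le 1$ a.s.; since the truncation removes only positive mass from a conditionally mean-zero variable, $\E[Z^{(i)}_t\mid\calF_t]\le 0$, hence $\sum_t Z^{(i)}_t \le \sum_t \bigl(Z^{(i)}_t-\E[Z^{(i)}_t\mid\calF_t]\bigr)$, and Exercise~5.15 applies to the centered, bounded sequence whose conditional second moments are still dominated by $\E[X_t^2\mid\calF_t]$. This one-sided argument is the missing idea in your proposal; note it also explains why the lemma is stated as a one-sided (upper-tail) bound. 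Everything else in your plan --- the geometric grid $e^j$ with confidence budget $\propto \delta/j^2$, the observation that for $e^j\ge U_T$ the truncated and untruncated partial sums coincide, and the two-regime optimization over admissible grid points yielding $4\sqrt{V_T(\ln\delta^{-1}+2\ln\ln V_T)}$ in the variance regime and $eU_T\ln(2\ln^2(eU_T)/\delta)$ in the range-constrained regime --- matches the paper's proof essentially step for step, so the argument goes through once the truncation is repaired as above.
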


\begin{proof}
Define $Z_t^{(i)} = X_t \cdot\ind{U_t \leq \exp(i)}$, then $Z_t^{(i)}\exp(-i)\leq 1$ almost surely.
By Exercise 5.15 of \cite{lattimore2020bandit}, with probability at least $1-\delta/(2i^2)$, we have
\begin{align*}
    \sum_{t=1}^T Z_t^{(i)} \leq \sum_{t=1}^T (Z_t^{(i)}-\E[Z_t^{(i)}~|~\calF_t])\leq \exp(-i)\sum_{t=1}^T\E\left[\left(Z_t^{(i)}\right)^2~\Big|~\calF_t\right]+\exp(i)\ln\left(\frac{2 i^2}{\delta}\right)\,.
\end{align*}
By a union bound, this holds with probability $1-\delta$ uniformly over all $i\geq 1$. Note that $\sum_{t=1}^T\E\big[\big(Z_t^{(i)}\big)^2~|~\calF_t\big]\leq \sum_{t=1}^T\E[X_t^{2}~|~\calF_t]=V_T$ and for any $i$ such that $\exp(i) \geq U_T$, we have $\sum_{t=1}^T Z_t^{(i)}=\sum_{t=1}^T X_t$. Hence with probability $1-\delta$, 
\begin{align*}
    \sum_{t=1}^T X_t &\leq \min_{i: \exp(i)\geq U_T} \exp(-i)V_T + \exp(i)\ln\left(\frac{2 i^2}{\delta}\right)\\
    &\leq eU_T\ln\left(\frac{2\ln(eU_T)^2}{\delta}\right)+\min_{i\in \mathbb Z }\exp(-i)V_T + \exp(i)\ln\left(\frac{2 i^2}{\delta}\right)\\
    &\leq eU_T\ln\left(\frac{2\ln(eU_T)^2}{\delta}\right)+2\min_{\gamma > 0 }\gamma^{-1}V_T + \gamma\ln\left(\frac{2 \ln(\gamma)^2}{\delta}\right)\\
    &\leq eU_T\ln\left(\frac{2\ln(eU_T)^2}{\delta}\right)+4\sqrt{V_T\ln\left(\frac{\ln^2(V_T)}{\delta}\right)} \tag{choosing $\gamma=\sqrt{\frac{V_T}{\ln(2\delta^{-1}\ln^2(V_T)}}$}\,.
\end{align*}
The reasoning for the second inequality is the following. We are computing $\min_{i\geq i_0}f(i)+g(i)$, where $f(i)$ is monotonically decreasing in $i$ and $g(i)$ is monotonically increasing. Let $i^*=\argmin_{i\in \mathbb Z} f(i)+g(i)$.
If $i^* \geq i_0$, the equation is obviously true. Otherwise the optimal solution is $\min_{i\geq i_0}f(i)+g(i)=f(i_0)+g(i_0)\leq f(i^*)+g(i_0)$ by the monotonicity.

\end{proof}

\begin{lemma}
\label{lem: freedman over ball}
Let $X_1,\dots \in \mathbb{R}^S$ be a sequence of i.i.d.\ random vectors with mean $\mu$, variance $\Sigma$ such that $\norm{X_i}_1< c$ almost surely. Then with probability at least $1-2S\delta$, it holds for all $w\in\mathbb{R}^S$ such that $\norm{w}_{\infty}<C$ and $T>0$ simultaneously:
\begin{align*}
\sum_{t=1}^T \inner{X_t,w} \leq   4\sqrt{STw^\top\Sigma w \left(\ln(\delta^{-1})+2\ln\ln(Tc^2)\right)} + eScC\ln\left(\frac{2\ln^2(ecC)}{\delta}\right)\,.
\end{align*}
\end{lemma}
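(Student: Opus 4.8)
The plan is to reduce the vector statement to $S$ applications of the scalar anytime Freedman bound (\pref{lem: freedman}), one per coordinate, and to obtain uniformity over $w$ essentially for free from linearity. Concretely, write $\inner{X_t,w}=\sum_{i=1}^S w(i)X_t(i)$, and for each coordinate $i\in[S]$ treat $\{X_t(i)\}_t$ (centered, so that $\E[X_t(i)\mid\calF_t]=0$; in every application the increments are already mean zero) as a scalar martingale difference sequence adapted to $\calF_t$. Its conditional second moment sums to $V_T^{(i)}=\sum_{t=1}^T\E[X_t(i)^2\mid\calF_t]=T\Sigma_{ii}$, and since $\norm{X_t}_1<c$ we have $|X_t(i)|<c$, hence the range term satisfies $U_T^{(i)}=\max\{1,\max_{t\le T}X_t(i)\}\le c$ and $\Sigma_{ii}\le c^2$.

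First I would apply \pref{lem: freedman} to $\{X_t(i)\}_t$ and, separately, to $\{-X_t(i)\}_t$, for each $i$. Each invocation holds with probability $1-\delta$ uniformly over all $T$, so a union bound over the $2S$ events yields, with probability at least $1-2S\delta$, the two-sided coordinate bound
\begin{align*}
\Big|\sum_{t=1}^T X_t(i)\Big|\le 4\sqrt{T\Sigma_{ii}\big(\ln(\delta^{-1})+2\ln\ln(T\Sigma_{ii})\big)}+ec\,\ln\!\Big(\tfrac{2\ln^2(ec)}{\delta}\Big)
\end{align*}
simultaneously for all $i\in[S]$ and all $T>0$. The crucial point is that this event does not reference $w$ at all; since $\sum_t\inner{X_t,w}$ is linear in $w$, the resulting estimate holds for every $w$ at once, which is exactly why no covering/net argument over the ball $\{\norm{w}_\infty<C\}$ is needed (a net would instead contribute an $S\ln(\cdot)$ factor inside the logarithm and an unwanted $\ln T$, not matching the stated $\ln\ln(Tc^2)$ form).

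Then I would combine the coordinates. Using $\sum_t\inner{X_t,w}=\sum_i w(i)\sum_t X_t(i)\le\sum_i|w(i)|\,\big|\sum_t X_t(i)\big|$ and substituting the coordinate bound, the estimate splits into a square-root part and a linear part. For the linear part, $\sum_i|w(i)|<S\,\norm{w}_\infty<SC$, producing the lower-order term $eScC\ln(2\ln^2(ecC)/\delta)$ after the harmless enlargements $\ln\ln(T\Sigma_{ii})\le\ln\ln(Tc^2)$ and $\ln(ec)\le\ln(ecC)$ (valid for $C\ge1$). For the square-root part, after pulling out the common factor $\sqrt{\ln(\delta^{-1})+2\ln\ln(Tc^2)}$ (again bounding each coordinate's $\ln\ln(T\Sigma_{ii})$ by $\ln\ln(Tc^2)$), I am left with $\sqrt T\sum_i|w(i)|\sqrt{\Sigma_{ii}}$, and Cauchy--Schwarz gives $\sum_i|w(i)|\sqrt{\Sigma_{ii}}\le\sqrt{S}\,\big(\sum_i w(i)^2\Sigma_{ii}\big)^{1/2}=\sqrt{S\,w^\top\Sigma w}$. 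This is the origin of the $\sqrt S$ factor and matches the leading term $4\sqrt{ST\,w^\top\Sigma w(\cdots)}$.

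The step requiring the most care --- and the one I expect to be the main obstacle --- is this final combination, specifically the identification of $\sum_i w(i)^2\Sigma_{ii}$ with the quadratic form $w^\top\Sigma w$ appearing in the statement: the Cauchy--Schwarz route naturally produces the sum of coordinate variances $\sum_i w(i)^2\Sigma_{ii}$, which coincides with $w^\top\Sigma w$ when the variance term is read coordinatewise (and in general upper-bounds the full-covariance quadratic form, so the claimed inequality is never violated). Everything else is bookkeeping: verifying that the per-coordinate ranges and variances feed the correct $c$ and $\Sigma_{ii}$ into \pref{lem: freedman}, that the $2S$-fold union bound produces the advertised $1-2S\delta$ confidence, and that the logarithmic arguments can be uniformized to $\ln\ln(Tc^2)$ and $\ln^2(ecC)$ with only constant loss.
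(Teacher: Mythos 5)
Your overall architecture — reduce to $S$ scalar applications of \pref{lem: freedman}, take a union bound over $2S$ events, and recombine via Cauchy--Schwarz, with uniformity over $w$ coming for free from linearity — is exactly the paper's plan. But there is a genuine gap at the step you yourself flagged as the main obstacle, and your resolution of it is wrong. Decomposing in the \emph{coordinate} basis, Cauchy--Schwarz leaves you with the diagonal proxy $D \triangleq \sum_i w(i)^2\Sigma_{ii}$, and to conclude the lemma you would need $D \leq w^\top\Sigma w$. This fails in general: take $X_t=(Y_t,-Y_t)$ for a centered bounded scalar $Y_t$ with variance $\sigma^2$, so $\Sigma = \sigma^2\bigl(\begin{smallmatrix}1 & -1\\ -1 & 1\end{smallmatrix}\bigr)$, and $w=(1,1)$; then $w^\top\Sigma w = 0$ while $D = 2\sigma^2 > 0$, so your derived bound carries a spurious $\Theta(\sqrt{T})$ leading term where the lemma's leading term vanishes (correctly, since $\inner{X_t,w}\equiv 0$). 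Your parenthetical justification is doubly mistaken: first, $D$ does \emph{not} in general upper-bound $w^\top\Sigma w$ (with positive correlations, e.g.\ $\Sigma = \bigl(\begin{smallmatrix}1 & 1\\ 1 & 1\end{smallmatrix}\bigr)$ and $w=(1,1)$, one has $D=2 < 4 = w^\top\Sigma w$); second, even where $D \geq w^\top\Sigma w$ does hold, that is the \emph{wrong direction} — it makes your right-hand side larger than the lemma's, so the lemma does not follow.

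The paper's proof repairs precisely this point by choosing the basis to be the orthonormal eigenvectors $v_1,\dots,v_S$ of $\Sigma$ rather than the coordinate basis. Writing $w=\sum_i a_i v_i$, orthogonality makes the cross terms vanish identically, so $\sum_i a_i^2\, v_i^\top\Sigma v_i = w^\top\Sigma w$ is an exact identity, and Cauchy--Schwarz then gives $\sum_i |a_i|\sqrt{v_i^\top\Sigma v_i} \leq \sqrt{S\,w^\top\Sigma w}$ with no directional issue. The remaining bookkeeping transfers cleanly: $|\inner{X_t,v_i}| \leq \norm{X_t}_1\norm{v_i}_\infty \leq c$ since $\norm{v_i}_\infty \leq \norm{v_i}_2 = 1$, so each projected sequence feeds range $c$ into \pref{lem: freedman}, and $\sum_i |a_i| \leq \sqrt{S}\,\norm{w}_2 \leq SC$ yields the lower-order term $eScC\ln(2\ln^2(ec)/\delta)$, exactly as you handled it. Everything else in your write-up (the two-sided per-direction bounds, the $2S$-fold union bound giving $1-2S\delta$, the uniformization of the $\ln\ln$ arguments, and the observation that no covering net over $\{\norm{w}_\infty < C\}$ is needed) is correct and matches the paper; only the basis choice needs to change.
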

\begin{proof}
The proof is extends \pref{lem: freedman}.
Let $v_1,\dots,v_S$ be the eigenvectors of $\Sigma$, then we have with probability $1-2S\delta$ for all $v_i$ simultaneously
\begin{align*}
    \left|\sum_{t=1}^T\inner{X_t,v_i}\right| \leq 4\sqrt{Tv_i^\top\Sigma v_i \left(\ln(\delta^{-1})+2\ln\ln(Tc^2)\right)} + ec\ln\left(\frac{2\ln^2(ec)}{\delta}\right)\,.
\end{align*}
Let $w = \sum_{i=1}^Sa_iv_i$, where we know that $\norm{w}_2 \leq \sqrt{S}C$ and $\sum_{i=1}^S|a_i|\leq SC$.
This implies
\begin{align*}
    \sum_{t=1}^T\inner{X_t,w} &\leq \sum_{i=1}^S4\sqrt{Ta_i^2v_i^\top\Sigma v_i \left(\ln(\delta^{-1})+2\ln\ln(Tc^2)\right)} + |a_i|ec\ln\left(\frac{2\ln^2(ec)}{\delta}\right)\\
    &\leq 4\sqrt{STw^\top \Sigma w\left(\ln(\delta^{-1})+2\ln\ln(Tc^2)\right)} +ecCS\ln\left(\frac{2\ln^2(ec)}{\delta}\right)\,.
\end{align*}
\end{proof}

\begin{lemma}
\label{lem: lnln concentration}
Let $X_1,\dots \in [0,B]$ be a sequence of i.i.d.\ random variables with mean $\mu$ and variance $\sigma^2$. Then with probability at least $1-\delta$, it holds for all $T\geq 8\ln(2\delta^{-1})$ simultaneously:
\begin{align*}
    \sum_{t=1}^TX_t - T\mu \leq \sqrt{8 \sum_{i=1}^T\left(X_i-\frac{1}{T}\sum_{j=1}^TX_j\right)^2(\ln(2\delta^{-1})+4\ln\ln(T))} + 12B(\ln(2\delta^{-1})+4\ln\ln(T)). 
\end{align*}
\end{lemma}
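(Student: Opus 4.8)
The plan is to prove the inequality in two stages: first control the centered sum $\sum_{t\le T}(X_t-\mu)$ by the \emph{true} variance proxy $V_T\triangleq T\sigma^2=\sum_{t\le T}\E[(X_t-\mu)^2]$, and then show that $V_T$ is, up to lower-order terms, dominated by the \emph{empirical} variance $S_T^2\triangleq\sum_{i\le T}(X_i-\overline{X}_T)^2$, where $\overline{X}_T\triangleq\tfrac1T\sum_{j\le T}X_j$. Writing $\kappa\triangleq\ln(2\delta^{-1})+4\ln\ln T$, the target then follows by substituting $V_T\le 2S_T^2+O(B^2\kappa)$ into a leading bound of the form $\sum_{t\le T}(X_t-\mu)\le 2\sqrt{V_T\,\kappa}+O(B\kappa)$ and using $\sqrt{a+b}\le\sqrt a+\sqrt b$, since $2\sqrt{2S_T^2\kappa}=\sqrt{8\,S_T^2\kappa}$.

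For the first stage I would apply a time-uniform Freedman-type inequality, built on the anytime bound of Exercise~5.15 in \cite{lattimore2020bandit} (cf.\ \pref{lem: freedman}), to the bounded martingale differences $X_t-\mu\in[-B,B]$. Because the $X_t$ are i.i.d., the predictable quadratic variation is exactly $V_T=T\sigma^2$ and $U_T\le\max\{1,B\}$, so with probability at least $1-\delta/2$, simultaneously for all $T$, $\sum_{t\le T}(X_t-\mu)\le 2\sqrt{V_T\bigl(\ln(2\delta^{-1})+2\ln\ln V_T\bigr)}+O\!\bigl(B\ln(\ln^2(eB)/\delta)\bigr)$. Here the leading constant is the sharp $2$ obtained by optimizing the free parameter $\eta$ (achievable with a careful stitched grid; the coarser $e$-spaced grid used in \pref{lem: freedman} loses the factor that otherwise yields $4$), and $\sigma^2\le B^2$ lets me absorb $\ln\ln V_T$ into $4\ln\ln T$, producing the $2\sqrt{V_T\kappa}$ term.

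For the second stage I would relate $V_T$ to $S_T^2$ via the identity $S_T^2=\sum_{i\le T}(X_i-\mu)^2-T(\overline{X}_T-\mu)^2$, so it suffices to (i) lower bound $\sum_{i\le T}(X_i-\mu)^2$ around $T\sigma^2$ and (ii) upper bound the recentering correction $T(\overline{X}_T-\mu)^2$. For (i) I apply the same anytime inequality to the centered bounded variables $\sigma^2-(X_i-\mu)^2\in[-B^2,B^2]$, whose conditional second moments obey $\E[(X_i-\mu)^4]\le B^2\sigma^2$, giving $T\sigma^2-\sum_{i\le T}(X_i-\mu)^2\le O\!\bigl(\sqrt{T\sigma^2B^2\kappa}+B^2\kappa\bigr)$ and hence, after AM--GM, $\sum_{i\le T}(X_i-\mu)^2\ge\tfrac34T\sigma^2-O(B^2\kappa)$. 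For (ii) a Bernstein bound on $\overline{X}_T-\mu$ yields $T(\overline{X}_T-\mu)^2\le O\!\bigl(\sigma^2\kappa+B^2\kappa^2/T\bigr)$, and this is exactly where the hypothesis $T\ge 8\ln(2\delta^{-1})$ enters: it forces $\sigma^2\kappa\le\tfrac14T\sigma^2$, so the correction is a genuinely small fraction of $V_T$ plus a lower-order $B^2$ term. Combining (i) and (ii) gives $S_T^2\ge\tfrac12V_T-O(B^2\kappa)$, i.e.\ $V_T\le 2S_T^2+O(B^2\kappa)$.

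A union bound over the two $\delta/2$ events — which is precisely why the final bound carries $\ln(2\delta^{-1})$ rather than $\ln(\delta^{-1})$ — together with the substitution above delivers $\sum_{t\le T}(X_t-\mu)\le\sqrt{8\,S_T^2\,\kappa}+O(B\kappa)$. The main obstacle is the second stage: producing a \emph{two-sided} comparison between the unknown true variance and the empirical variance with a clean multiplicative constant, and in particular taming the mean-recentering term $T(\overline{X}_T-\mu)^2$ so it does not spoil the leading constant; it is this term that dictates the sample-size threshold $T\ge 8\ln(2\delta^{-1})$. Everything else is bookkeeping — tracking the AM--GM splits and the $\eta$-discretization sharply enough to land on the stated constants $8$ and $12$ and to fold every $\ln\ln$ contribution into $4\ln\ln T$.
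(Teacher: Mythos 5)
Your two-stage skeleton coincides with the paper's proof: stage one is the same anytime Freedman bound (\pref{lem: freedman}) applied to $X_t-\mu$ with $V_T=T\sigma^2$, and stage two's part (i) is exactly the paper's second application, to $Z_i=\sigma^2-(X_i-\mu)^2$ with $\E[(X_i-\mu)^4]\le B^2\sigma^2$, followed by AM--GM. Where you genuinely diverge is the one step you correctly identify as the crux: the recentering term $T(\overline{X}_T-\mu)^2$. You control it by a \emph{separate} Bernstein bound and then use $T\ge 8\ln(2\delta^{-1})$ to argue $T(\overline{X}_T-\mu)^2\lesssim\sigma^2\kappa+B^2\kappa^2/T\le\tfrac14 T\sigma^2+O(B^2\kappa)$. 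The paper instead exploits the identity $T(\overline{X}_T-\mu)^2=\tfrac1T\bigl(\sum_t X_t-T\mu\bigr)^2$, keeps this square of the very quantity being bounded inside the square root, splits it off as $\bigl|\sum_t X_t-T\mu\bigr|\sqrt{2\kappa/T}$, and uses the threshold to make this coefficient at most $\tfrac12$ so the term can be absorbed into the left-hand side by rearrangement. This self-referential route buys two things your modular route does not get for free: it needs only the two high-probability events already in play (hence $\ln(2\delta^{-1})$), and it never requires a lower-tail bound on $\overline{X}_T-\mu$, since the rearrangement handles both signs automatically.

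As written, your accounting has a small gap here: you claim a union bound over ``two $\delta/2$ events'' but actually invoke three concentration statements (stage one, stage two (i), and the Bernstein bound in (ii)), and the Bernstein bound must moreover be two-sided because the correction is a square. Both issues are fixable without changing your structure: either observe that when $\overline{X}_T<\mu$ the lemma is vacuous (the left side is negative), so only the upper deviation matters and it is already controlled on the stage-one event, namely $T(\overline{X}_T-\mu)^2\le\tfrac1T\bigl(2\sqrt{V_T\kappa}+O(B\kappa)\bigr)^2\le O(\sigma^2\kappa)+O(B^2\kappa^2/T)$; or split $\delta$ three ways and accept a slightly larger log. A second caveat is your reliance on a sharpened leading constant $2$ in the anytime Freedman inequality to land on the stated $\sqrt{8\,S_T^2\kappa}$; with the constant $4$ of \pref{lem: freedman} your route yields $\sqrt{32\,S_T^2\kappa}$. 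Such stitched bounds with constant near $\sqrt{2}$ do exist, and the paper's own constant tracking is itself loose at this point, so this is a matter of bookkeeping rather than a flaw in the argument.
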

\begin{proof}
By \pref{lem: freedman}, we have that with probability $1-\delta/2$, for all $T>0$ simultaneously, 
\begin{align*}
    \sum_{t=1}^TX_t - T\mu \leq \sqrt{ T\sigma^2(\ln(2\delta^{-1})+4\ln\ln(T))} + B(\ln(2\delta^{-1})+4\ln\ln(T))\,.
\end{align*}
Applying the same Lemma to the sequence $Z_i = -(X_i-\mu)^2+\sigma^2$, we have  with probability $1-\delta/2$ for all $T>0$ simultaneously
\begin{align*}
    &T\sigma^2-\sum_{t=1}^T(X_t-\mu)^2 \\ 
    &\leq \sqrt{ \sum_{t=1}^T\E_{t-1}[((X_t-\mu)^2-\sigma^2)^2](\ln(2\delta^{-1})+4\ln\ln(T))} + \sigma^2(\ln(2\delta^{-1})+4\ln\ln(T))\\
    &\leq B\sqrt{ T\sigma^2(\ln(2\delta^{-1})+4\ln\ln(T))} + \sigma^2(\ln(2\delta^{-1})+4\ln\ln(T))
    \\
    &\leq \frac{T}{2}\sigma^2+5B^2(\ln(2\delta^{-1})+4\ln\ln(T))
    \,.
\end{align*}
Finally we have
\begin{align*}
    \sum_{t=1}^T(X_t-\mu)^2 = \sum_{t=1}^T\left(X_t-\frac{1}{T}\sum_{j=1}^TX_j\right)^2 +T\left(\mu-\frac{1}{T}\sum_{t=1}^TX_t\right)^2\,.
\end{align*}
Combining everything and taking a union bounds, leads to with probability $1-\delta$
\begin{align*}
    &\sum_{t=1}^TX_t - T\mu \\
    &\leq \sqrt{2\left(\sum_{t=1}^T\left(X_t-\frac{1}{T}\sum_{j=1}^TX_j\right)^2+\frac{1}{T}\left(\sum_{t=1}^TX_t - T\mu\right)^2+10B^2\right)(\ln(2\delta^{-1})+4\ln\ln(T))
    }\\
    &\qquad\qquad+B(\ln(2\delta^{-1})+4\ln\ln(T))\\
    &\leq \left|\sum_{t=1}^TX_t-T\mu\right|\sqrt{\frac{2\ln(2\delta^{-1})+4\ln\ln(T)}{T}} +\sqrt{2\sum_{t=1}^T\left(X_t-\frac{1}{T}\sum_{j=1}^TX_j\right)^2}+6B(\ln(2\delta^{-1})+4\ln\ln(T)).
\end{align*}
For $T \geq 8\ln(2\delta^{-1})$, we have $\frac{2\ln(2\delta^{-1})+4\ln\ln(T)}{T}\leq \frac{1}{4}$ and rearranging finishes the proof. 
\end{proof}

\begin{lemma}[Lemma 30, \cite{chen2021implicit}]\label{lem: lemma 30} Let $\|X\|_\infty\leq C$, then $\mathbb{V}(P,X^2)\leq 4C\mathbb{V}(P,X)$ for any $P$.  

\end{lemma}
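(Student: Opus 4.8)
The plan is to reduce the claim to the elementary pairwise representation of the variance. For any probability vector $P\in\Delta_S$ and any $Y\in\mathbb{R}^S$ one has the identity
$$\mathbb{V}(P,Y)=\frac12\sum_{i,j} P(i)P(j)\bigl(Y(i)-Y(j)\bigr)^2,$$
which follows by expanding the square and collecting the three resulting sums into $\mathbb{E}_P[Y^2]$, $-2\mathbb{E}_P[Y]^2$, and $\mathbb{E}_P[Y^2]$ again. I would first record this identity (or cite it as standard), since it rewrites both $\mathbb{V}(P,X^2)$ and $\mathbb{V}(P,X)$ as sums over the same index pairs $(i,j)$ and thereby lets me compare them term by term.

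Applying the identity with $Y=X^2$ (the componentwise square), the key step is to factor the difference of squares, $X(i)^2-X(j)^2=(X(i)-X(j))(X(i)+X(j))$, so that
$$\mathbb{V}(P,X^2)=\frac12\sum_{i,j}P(i)P(j)\bigl(X(i)-X(j)\bigr)^2\bigl(X(i)+X(j)\bigr)^2.$$
Because $\|X\|_\infty\le C$, every factor $(X(i)+X(j))^2$ is at most $(2C)^2=4C^2$. Pulling this uniform bound out of the sum and recognizing the remaining expression as $\mathbb{V}(P,X)$ via the same identity yields $\mathbb{V}(P,X^2)\le 4C^2\,\mathbb{V}(P,X)$, which is the desired inequality.

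There is really no hard step here: the only ingredient one must supply is the pairwise variance identity, after which the proof is a one-line difference-of-squares factorization followed by the bound $(X(i)+X(j))^2\le4C^2$. The single point worth double-checking is the constant — the argument produces $4C^2$, which is exactly what the downstream use in \pref{lem: cost sm of var} needs, namely $\sqrt{\mathbb{V}(P_t,V^{\star 2})}\le 2\Bstar\sqrt{\mathbb{V}(P_t,V^\star)}$ with $C=\Bstar$. An equivalent route, should one prefer to avoid writing out the identity, is to invoke the general contraction fact that $\mathbb{V}(P,g(X))\le L^2\,\mathbb{V}(P,X)$ whenever $g$ is $L$-Lipschitz on the range of $X$, applied to $g(x)=x^2$, which has Lipschitz constant $2C$ on $[-C,C]$; but that auxiliary fact is itself proved by precisely the same pairwise computation.
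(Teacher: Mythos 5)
Your proof is correct and self-contained. Note that the paper itself gives no proof here: the lemma is imported verbatim as Lemma~30 of \cite{chen2021implicit}. Your route via the pairwise identity $\mathbb{V}(P,Y)=\tfrac12\sum_{i,j}P(i)P(j)\bigl(Y(i)-Y(j)\bigr)^2$ followed by the difference-of-squares factorization is sound; the cited source reaches the same conclusion by the (essentially equivalent) observation that the variance is the minimal mean squared deviation about any constant, so $\mathbb{V}(P,X^2)\leq P\bigl(X^2-(PX)^2\bigr)^2 = P\bigl[(X-PX)^2(X+PX)^2\bigr]\leq 4C^2\,\mathbb{V}(P,X)$, using $|X+PX|\leq 2C$ pointwise. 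Both are one-line computations; yours trades the variational characterization for the exchangeable-pairs identity, and neither buys anything over the other.

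The one substantive point, which you were right to double-check, is the constant. Your argument yields $4C^2$, while the lemma as printed in this paper says $4C$. The printed form is a typo: it is dimensionally inconsistent (rescaling $X\mapsto\lambda X$ multiplies the left side by $\lambda^4$ but the right side by $\lambda^3$), and it is outright false for $C>4$ --- take $X\in\{0,C\}$ with probability $\tfrac12$ each, which gives $\mathbb{V}(P,X^2)=C^2\,\mathbb{V}(P,X)$. The form actually invoked downstream in \pref{lem: cost sm of var} is $\sqrt{\mathbb{V}(P_t,V^{\star 2})}\leq 2\Bstar\sqrt{\mathbb{V}(P_t,V^{\star})}$, i.e.\ precisely the $4C^2$ version you proved with $C=\Bstar$, so your constant is the correct one and the discrepancy is in the statement, not in your argument.
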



\section{Weakening the assumption on proper policies}
\label{app:nonproper}
\pref{assum: proper} can be weakened to the following: 
\begin{assumption}\label{assum: weakened assumption}
    There exists a policy $\pi^\star \in \piSD$ such that 
    \begin{itemize}
        \item $V^{\pi^\star}(s)\geq V^{\pi}(s)$ for all $s\in\calS$ and $\pi\in\piHD$. 
        \item $T_\star \triangleq \max_s \E^{\pi^\star}[\tau~|~s_1=s] < \infty$, where $\tau$ is the time index right before reaching $g$. 
    \end{itemize}
\end{assumption}
In words, \pref{assum: weakened assumption} assumes that there exists an optimal policy that is stationary and proper. 
If such an optimal policy is not unique, we can take $T_\star$ to be the minimum over all such policies. Notice that the first part of \pref{assum: weakened assumption} is sufficient for all our algorithms to work, though the regret bound has a $\ln T$ factor, which could be unbounded. That is why in the main text we introduced the stronger  \pref{assum: proper} and upper bound $T$ by the order of $KT_{\max}$. Below we show that with the additional second part of \pref{assum: weakened assumption} and the algorithmic trick introduced in \cite{tarbouriech2021stochastic}, the $T_{\max}$ dependency can be replaced by $T_\star$. 

Assuming that an order optimal bound of $\Tstar$ is known, the agent modifies the MDP by modifying all rewards $\tilde r(s,a) = r(s,a)-\frac{1}{K\Tstar}$.
The value of the optimal policy in the modified MDP is smaller than that in the original MDP by at most $\frac{1}{K}$. It is then sufficient to bound the regret for the modified MDP, since
\begin{align*}
    \sum_{k=1}^K\left(V^\star(s_\init)-\sum_{t=t_k}^{e_k}r(s_t, a_t)\right)\leq 1+\sum_{k=1}^K\left({\tilde V}^\star(s_\init)-\sum_{t=t_k}^{e_k}\tilde{r}(s_t, a_t)\right)\,.
\end{align*}

For the modified MDP, we can show that the total time horizon $T$ is bounded. By a trivial bound of $\mathbb{V}(P_t,V^\star)\leq \Bstar^2$, combined with \pref{lem: V*-Q*}, we get that with probability at least $1-\delta$, 
\begin{align*}
    \sum_{k=1}^K \left( {\tilde V}^\star(s_\init) - \sum_{t=t_k}^{e_k}\tilde r(s_t, a_t)\right) \leq  O\left( \sqrt{SA\Bstar^2 T\tilde{\iota}_{T,B,\delta}} + B S^2 A\tilde{\iota}_{T,B,\delta}\right)\,. 
\end{align*}
Notice that $K\tilde{V}^\star(s_\init)\geq KV^\star(s_{\init}) - 1$ and $\sum_{t=1}^{T} \tilde{r}(s_t,a_t) =  -\frac{T}{KT_\star} + \sum_{t=1}^T r(s_t,a_t)\leq -\frac{T}{KT_\star} + KV^\star(s_{\init}) + R\sqrt{K\ln(KR/\delta)} + \Rmax \ln (K\Rmax/\delta)$ with probability $\geq 1-O(\delta)$ by \pref{lem: freedman} (notice that $\E\big[\sum_{t=1}^T r(s_t,a_t) \big]\leq KV^\star(s_{\init})$). 
Rearranging by $T$ leads to with probability at least $1-O(\delta)$,
\begin{align*}
    T \leq  O\left(\poly(S,A,B,\Bstar,R,R_{\max}, K,T_\star, \delta^{-1})\right)\,.
\end{align*} 
This upper bound on $T$ helps us to replace the $\ln T$ dependency in the regret by a log term that only involves algorithm-independent quantities. 

If an order optimal bound of $\Tstar$ is not known, we can follow the arguments in \citet{tarbouriech2021stochastic} that sets $\tilde{r}(s,a)=r(s,a) - \frac{1}{K^{n}}$ for some $n\gg 1$. By this, we can also remove the dependency on $T_{\max}$, with the price of an additional $\frac{\Tstar}{K^{n-1}}$ regret. 

\end{document}